\theoremstyle{plain}
\newtheorem{theorem}{Theorem}[section]
\newtheorem{proposition}[theorem]{Proposition}
\theoremstyle{definition}
\newtheorem{definition}[theorem]{Definition}
\theoremstyle{remark}
\newtheorem{remark}[theorem]{Remark}
\theoremstyle{example}
\title{A Compressive-Expressive Communication Framework for Compositional Representations}
\author{%
  Rafael Elberg
  \thanks{Corresponding author} \\
  % Department of Computer Science\\
  Pontificia Universidad Católica, CENIA, i-Health\\
  Chile \\
  \texttt{rafael.elberg@uc.cl} \\
  \And
    Felipe del Rio\\
  % \thanks{Use footnote for providing further information
    % about author (webpage, alternative address)} \\
  % Department of Computer Science\\
  Pontificia Universidad Católica, CENIA\\
  Chile \\
  \texttt{fidelrio@uc.cl} \\
    \And
    Mircea Petrache\\
  % \thanks{Use footnote for providing further information
    % about author (webpage, alternative address)} \\
  % Department of Computer Science\\
  Pontificia Universidad Católica, CENIA\\
  Chile \\
  \texttt{mpetrache@uc.cl} \\
  \And
    Denis Parra\\
  % \thanks{Use footnote for providing further information
    % about author (webpage, alternative address)} \\
  % Department of Computer Science\\
  Pontificia Universidad Católica, CENIA, i-Health\\
  Chile \\
  \texttt{dparras@uc.cl} \\
  % examples of more authors
  % \And
  % Coauthor \\
  % Affiliation \\
  % Address \\
  % \texttt{email} \\
  % \AND
  % Coauthor \\
  % Affiliation \\
  % Address \\
  % \texttt{email} \\
  % \And
  % Coauthor \\
  % Affiliation \\
  % Address \\
  % \texttt{email} \\
  % \And
  % Coauthor \\
  % Affiliation \\
  % Address \\
  % \texttt{email} \\
}
\begin{document}

\maketitle

\begin{abstract}
Compositionality in knowledge and language—the ability to represent complex concepts as a combination of simpler ones—is a hallmark of human cognition and communication. Despite recent advances, deep neural networks still struggle to acquire this property reliably. Neural models for emergent communication look to endow artificial agents with compositional language by simulating the pressures that form human language. In this work, we introduce CELEBI\footnote{Our oficial implementation can be found at https://github.com/SugarFreeManatee/CELEBI} (Compressive-Expressive Language Emergence through a discrete Bottleneck and Iterated learning), a novel self-supervised framework for inducing compositional representations through a reconstruction-based communication game between a sender and a receiver. Building on theories of language emergence and the iterated learning framework, we integrate three mechanisms that jointly promote compressibility, expressivity, and efficiency in the emergent language. First, \emph{Progressive Decoding} incentivizes intermediate reasoning by requiring the receiver to produce partial reconstructions after each symbol. Second, \emph{Final-State Imitation} trains successive generations of agents to imitate reconstructions rather than messages, enforcing a tighter communication bottleneck. Third, \emph{Pairwise Distance Maximization} regularizes message diversity by encouraging high distances between messages, with formal links to entropy maximization. Our method significantly improves both the efficiency and compositionality of the learned messages on the Shapes3D and MPI3D datasets, surpassing prior discrete communication frameworks in both reconstruction accuracy and topographic similarity. This work provides new theoretical and empirical evidence for the emergence of structured, generalizable communication protocols from simplicity-based inductive biases.

\textbf{}
\end{abstract}

\section{Introduction}

% Composicionality in language
% Composicionality in thought -> mind hypothesis
% Humans might be compositional because of communication:
% Humans possess an innate ability to segment, recombine, and transform cognitive and perceptual representations by modifying their constituent components and properties, much like composing words into complex sentences to describe and interpret the surrounding world \cite{arun2022using}. This compositional capability enables us to conceive infinitely many possible worlds and effectively communicate these imagined scenarios using natural language. Such linguistic expression leverages our proficiency in recombining previously acquired words and phrases, rather than requiring explicit memorization or learning of each individual sentence. Consequently, compositionality allows us to factorize our understanding into reusable conceptual and linguistic elements, greatly facilitating efficient and flexible communication even within entirely novel contexts.

In natural languages, compositionality enables humans to communicate an infinite number of ideas using a finite set of elements \cite{chomsky2014aspects-of-syntax}. This principle enables speakers to flexibly combine known words and structures to convey novel meanings, supporting flexible and generalizable communication across diverse and previously unseen contexts.

% Communication as a engine for compositionality in cultural evolution.
% Add explicit biases for expressivity and compressivity
% Tradeoff between expressibility and generality. In our setting, this tradeoff takes the form of non-ambiguity and efficiency.
% Tradeoff between Compressibility, expressivity and efficiency. Cultural evolution attributes the underlying universal caracteristics of language (compositionality, zipfian distribution of lengths, unambiguosness) to different pressures arising from interactions between experienced speakers and teacher-student interactions between experienced and novel speeakers.
Works on the emergence of compositionality in language \cite{iteratedlearningkirby,Kirby2002, KIRBY201587} have argued that opposing pressures are necessary for the natural selection of compositional languages between generations of speakers \cite{elmoznino2024complexitybasedtheorycompositionality,beppu2009iterated}. On the one hand, successful communication requires \textbf{high expressivity} in order to usefully describe the world, allowing speakers to produce distinct messages for a wide range of meanings. On the other hand, models of emergent communication such as the iterated learning framework \cite{iteratedlearningkirby} state that natural speakers tend to minimize the complexity of languages through cultural transmission, implying that \textbf{simpler and more compressive languages} are more easily passed on to new speakers. 

These opposite pressures of expressivity and compressibility thus generate a trade-off, highlighted in several works \cite{Kirby2002,elmoznino2024complexitybasedtheorycompositionality,beppu2009iterated} which is argued to be optimized by compositional languages, whereby languages evolve to maximize communicative efficiency -- remaining expressive enough to convey diverse meanings while being simple and structured enough to be easily learned and transmitted by successive generations of speakers. When either pressure is dominant, undesirable languages tend to emerge: (a) excessive language compression leads to \textit{degenerate} languages where multiple meanings are mapped to the same messages making them ambiguous and thus hard to use, and (b) prioritizing expressivity alone produces in \textit{holistic} languages i.e. languages where messages are mapped arbitrarily to meanings without respecting their structures, hindering their transmission across generations \cite{ren2020Compositional}.

% % We want machines to communicate compositional.
% One of the main challenges in modern machine learning is to faithfully capture this ability, also known as compositional generalization. In particular, the ability to imagine new combinations of known components translates directly into compositionality in generative models.

% Enabling machines to generalize compositionally is thought to be crucial for faster and flexible learning \cite{lake2017building}.
% Enabling machines to generalize compositionally is thought to be crucial for intelligent behaviour \cite{lake2017building}.
Enabling machines to generalize compositionally is thought to be crucial for them to quickly adapt to novel situations beyond their training experience \cite{lake2017building}.
Inducing compositional behavior in neural networks remains a major challenge \cite{lake2017original-scan,keysers2019cfq,ruis2020gscan,kim2020cogs}. A growing body of work suggests that a model’s ability to generalize compositionally is highly sensitive to its training conditions \cite{lake2019meta-seq2seq,conklin2021meta,ruis2022improving,lake2023human,delrio2025data}, including factors such as data distribution, learning objectives, and task design. One promising direction is the study of emergent communication, where discrete languages evolve for coordination between independent neural agents.

In this work, we build on the Lewis reconstruction game framework \cite{rita2022emergent} to study emergent communication in a reconstruction task. A sender encodes an image into a discrete message and a receiver reconstructs the original. Drawing on theories of language evolution \cite{iteratedlearningkirby,Kirby2002}, we develop a novel framework for inducing compositional communication grounded in simplicity bias.

We introduce CELEBI (Compressive-Expressive Language Emergence through a discrete Bottleneck and Iterated learning). Within this framework, we introduce three mechanisms in the learned communication protocol:
% \begin{itemize}

\textbf{Progressive Decoding}, in which the receiver makes reconstruction predictions after each incoming symbol, rather than waiting until the full message is received. This biases the system towards using intermediate reasoning steps, thereby imposing a pressure towards lower-complexity and less holistic encodings.
%a process shown to support compositional generalization in settings such as chain-of-thought reasoning \cite{wei2022chain}. 
% We implement this mechanism by modifying the training objective to minimize the average divergence between each intermediate prediction and the target image representation, thereby encouraging the model to convey partial meanings incrementally. To modulate the expressivity pressure across the sequence, we introduce a weighting parameter $\lambda$ that increases the relative importance of later symbols in the loss computation.
While this mechanism does not directly improve reconstruction accuracy, it yields more efficient and structured communication.
% : message length is reduced by 24.4\% on average, and compositionality—measured using Topographic Similarity (TopSim) \cite{Brighton2006UnderstandingLE}—improves by 7.3\%.
% Experimentally, we also show that further increase our performance on XXXX, by optimizing our $\lambda$ parameter.

% \textbf{Final-state imitation}, a modification of the standard imitation phase present in the iterated learning (IL) framework of cultural evolution proposed by \citet{Kirby2002,kirby2014iterated,KIRBY201587}. We show that by limiting the information passed from teacher to student during imitation, tightening the generational bottleneck, we increase the bias towards compressibility in emergent communication \cite{smithiterated} and the necessity of compositionality for communication protocols to survive during training. This addition results in increased compositionality with minimal loss in reconstruction error.

\textbf{Final-State Imitation} modifies the standard imitation phase in the iterated learning (IL) framework of cultural evolution \cite{Kirby2002,kirby2014iterated,KIRBY201587}. Instead of imitating the entire message, the student is trained to reproduce only the final output of the receiver, effectively reducing the information transmitted across generations. This tighter generational bottleneck increases pressure for compressibility in the emergent communication protocol \cite{smithiterated}, thereby promoting the emergence of compositional structure as a necessary condition for successful transmission. Empirically, this leads to increased compositionality with minimal degradation in reconstruction accuracy.

% This addition changes the standard imitation phase training pipeline by introducing a communication bottleneck \cite{havrylov2017emergence,kharitonov2020entropy,lerch2019rate,Resnick2019CapacityBA}, where the sender of the next generation is trained under a teacher-student regime to imitate the behavior of the previous sender—not by replicating its message directly, but by achieving the same final reconstruction in the receiver. This shift in supervision, from message-level to reconstruction-level imitation, further tightens the transmission bottleneck and discourages the persistence of holistic communication strategies \cite{smithiterated}. Empirically, this mechanism yields a 12.3\% increase in TopSim relative to a non-iterated baseline. While comparable gains in compositionality have been observed in other IL-based methods, our approach also improves communicative fidelity, reducing reconstruction loss by 13.3\%.

% Unlike previous IL approaches \cite{ren2020Compositional,li2019easeofteachinglanguagestructureemergent}, which typically supervise imitation through the message space, we supervise it via the state space, allowing for greater flexibility and compositional diversity in the emergent communication protocol. Additionally, we introduce a \textit{final state reconstruction} objective during this phase, which encourages variability in the encoded representations while preserving semantic fidelity—further promoting disentangled and compositional message structures. \textbf{?? REVISAR RAFA}

\textbf{Pairwise Distance Maximization} as a regularization term, i.e., as a further pressure towards increased diversity in the emergent language, that encourages exploration during the imitation phase. This term pushes the student sender to maximize an approximate Hamming distance between messages in a given batch, finding the most "diverse" protocol. We prove that this regularization gives a lower bound for entropy maximization \cite{entropymax}, and an upper bound on the contrastive learning loss NT-Xtent \cite{chen2020simpleframeworkcontrastivelearning}.

The opposing pressures induced by these three mechanisms impose a tight regularization on the complexity of the emergent language. These ingredients make explicit the pressures proposed in the cognitive science and linguistic literature to shape language evolution, thus aligning with the IL paradigm and leading to measurably higher compositionality.
% \end{itemize}

% \textbf{CONCLUSIONS}

% Our main contributions are:
% \begin{itemize}
%     \item
% \end{itemize}
% \begin{itemize}
%     \item We propose a joint implementation of the Lewis Reconstruction Game and IL framework, which to our best knowledge is the first instance of such framework that takes into account the peculiarities of self-supervised generation tasks.
%     \item We introduce a novel regularization technique to favor exploration and diversity in encoding, with mathematically justified tunable biases towards compositionality and encoding efficiency.
%     \item We accompany the above by a rigorous mathematical framework (developed in the appendices), that defines the setting and objectives of compositional generation, and justifies our architecture choices.
% \end{itemize}

\section{Background}
\subsection{Problem setup: recovering compositional representations}
\label{sec:compgen}
In a formalism similar to \cite{ren2024improving}, we model a dataset $\mathcal D=\{x=\mathsf{GenX}(\mathbf G):\ \mathbf G\in\mathcal G\}$, i.e. data $x$ are created via a deterministic function $\mathsf{GenX}$ from a set $\mathcal G$ of generating factors $\mathbf{G}$. We have access to samples from $\mathcal D$ but not to $\mathsf{GenX}$ or to the structure of $\mathbf G$, and the goal is to reproduce the set of $x$, i.e., generate images $\widehat x$ that approximate the distribution of $x$ in a natural metric such as $\mathbb E_{x\in\mathcal D} [\text{MSE}(x,\widehat x)]$, where $\text{MSE}$ is the mean squared error on pixels. 

The factors $\mathbf{G}$ are assumed to have compositional structure of the form $\mathbf G=[G_1,\dots,G_n]$, in which $G_i$ represent independent characteristics having finitely many possible values (as in the dataset of Sec. \ref{sec:datasets} for example). In this case, an efficient way to reproduce elements of $\mathcal D$ is to impose that $\widehat x$ is similarly generated as $\widehat x=\mathsf{GenX'}(\mathbf{G'})$, with generating factors $\mathbf{G'}$ giving a learned encoding of $\mathbf{G}$. 
%In fact, we may argue, similar to \cite{ren2024improving}, that if $\mathbf{G'}$ is a compressed representation, then it tends to have low Kolmogorov complexity and to be \emph{isomorphic} to the actual $\mathbf{G}$, leading to \emph{compositional generation} of $\widehat X$. 

As $\mathcal G$ has a large number of classes (one per combination of factor values) and we have a single example per class, successful reconstruction relies on the compositional structure of $\mathbf G$. Thus the goal is to \textbf{build a framework for finding the optimized compositional encoding $\mathbf{G'}$ from observation of a small training set $\mathcal D_{train}\subset \mathcal D$}. The compositional nature of $\mathbf{G}$ is what makes reconstruction possible from a small random $\mathcal D_{train}$, i.e. from data corresponding to a small subset of classes. This would be of course impossible for non-compositional data (see \S \ref{app:compositionaliso} for a formal treatment).

\subsection{Approach: Lewis reconstruction game}
\label{sec:siggame}

We frame the above reconstruction problem in the form of a cooperative game between two agents: a sender $S$ and a receiver $R$. The goal of $S$ will be to build the reconstruction factors $\mathbf G'$ and the receiver $R$ will map the factors to an image $\widehat x$. The two agents aim to create a protocol that reconstructs $x$ to good accuracy. This setup fits in the general class of Lewis Reconstruction Games (LRG) \cite{rita2022emergent}, a subclass of so-called Lewis Signaling Games (LSGs) \cite{lewis1969convention}.

Within the framework of signaling games, the factors $\mathbf G'$ are interpreted as a language which $S$ uses to communicate $x$ to $R$ as accurately as possible. The allowed message space is finite, i.e. $\mathbf{G'}\in\mathcal M:=V^C$, and we take $C$ fixed and large enough that $|V|^C\gg |\mathcal G|$, so that the size of the message space is not a limiting factor on the emergent communication. The following diagram summarizes our notations:
\[
 \mathcal G\stackrel{\mathsf{GenX}}{\longrightarrow} \mathcal D\subseteq \mathcal X\stackrel{S}{\longrightarrow}\mathcal M=V^C\stackrel{R}{\longrightarrow}\mathcal X
\]
Our main focus is on defining a IL process which favors the emergence of a compositional language between $S$ and $R$ that accurately recovers $x\in \mathcal D$. 

\subsection{Base model architecture}
We adopt a standard emergent communication setup as described by \citet{elvae}, see implementation details in \S~\ref{sec:baseline-implementations-details}.
Sender and Receiver are implemented using the EGG framework \cite{kharitonov:etal:2021}. Input images are encoded into latent representations using a pretrained VAE visual backbone. The sender encodes this latent input into a discrete message using an LSTM and then transmits it through a fixed-length communication channel. Message generation is made differentiable via a Gumbel-Softmax bottleneck \cite{jang2017categoricalreparameterizationgumbelsoftmax}. The receiver takes the full message as inputs in an LSTM, and produces a latent vector, which is subsequently decoded into a reconstructed image by the VAE decoder. Reconstruction loss is computed between the original and generated images using Mean Squared Error (MSE). Both VAE encoder and decoder remain frozen during training.
% Further architectural and implementation details are provided in Section~\ref{sec:baseline-implementations-details}.

% \subsection{Interactive Phase Training}

% The communication model is trained end-to-end to enable the receiver to reconstruct the original image from the message produced by the sender. The message consists of $C$ discrete symbols, and following prior work \cite{elvae, ren2020Compositional, rita2022emergent}, only the final message is used by the receiver to decode the image. Training minimizes the following expected reconstruction loss:

% \begin{align} \label{lossint}
%     \mathcal{L}_{\omega,\phi} := \mathbb{E}_{x } \left[ \text{MSE} \left( x, R_{\omega}(S_{\phi}(x)) \right) \right],
% \end{align}

% \noindent
% where $S_{\phi}(x)$ denotes the complete message sequence produced by the sender for input $x$, $R_{\omega}$ is the receiver's reconstruction function and $\text{MSE}(x,\hat x)$ is the mean square error.
% % The distance function $d_{\mathcal{X}}$ measures the discrepancy between the input and reconstructed samples in the original image space or latent space, as appropriate.

\section{Methods}
This section describes our proposed improvements to the EL framework \cite{elvae} (cf. Figure~\ref{fig:method}). Intuitively, our additions attempt to increase the language drift between iterations of speakers, in order to thoroughly explore the landscape of possible languages, while maintaining efficient and useful communication. For the latter, we introduce \textbf{Progressive Decoding (PD)}, to serve as an anchor for the emergent language during interaction, and for the former we introduce \textbf{Final-State Imitation (FiSI)} and \textbf{Parwise Distance Maximization (PDM)}, which aim to incentivize exploration during the imitation phase. We expect the combination of these factors to show the strongest impact towards compositionality in the learned language, more so than the factors taken separately.

\subsection{Progressive Decoding (PD)}
Our first proposed enhancement to the baseline communication protocol aims to improve the efficiency of the emergent language determined by the Sender and Receiver, by condensing information in fewer message tokens. This type of efficiency has been argued to be necessary for the emergence of natural language properties \cite{Kanwal2017-hs, Luna2020InternalAE}.

Standard reconstruction games typically optimize a reconstruction loss computed only at the end of the communication phase, after the receiver has observed the full message \cite{rita2022emergent}, and as shown by \citet{rita-etal-2020-lazimpa}, without an incentive to efficiently use the information received in the message, the receiver only updates its internal state after the full message has been received. To improve communication efficiency, we propose the following \textit{Progressive Decoding (PD)} objective: the receiver generates a prediction after each received symbol, and our interaction phase loss explicitly weights the reconstruction error of each sub-message, as follows.

\begin{align} \label{lossint}
    \mathcal{L}^{\text{int}}_{\omega,\phi} := \mathbb{E}_{x } \left[ \frac{1}{C} \sum_{i=1}^C \lambda^i \, \text{MSE}\left(x, R_{\omega}(S_{\phi}(x)_{[i]}) \right) \right],
\end{align}

where $S_{\phi}(x)_{[i]}$ is the prefix of length $i$ of the sender's message, and $\lambda \geq 1$ is an expressivity hyperparameter that increases the weight of later reconstructions. This loss has the following effects:
% \begin{enumerate}
%     \item \textbf{Efficiency pressure:} Loss \eqref{lossint} rewards accurate reconstructions as early as possible, favoring messages with shorter useful length.
%     \item \textbf{Interpretable sub-messages:} The model is encouraged to structure messages such that an interpretation by $R_\omega$ is available for sub-messages, making each symbol accountable for partial reconstruction. This echoes findings in prior theoretical and empirical work on chain-of-thought and compositionality \cite{velickovic2020Neural,jiang2021inducing,chakravarthy2022systematicity,nye2022show,wei2022chain,prystawski2024think}, which show that forcing intermediate semantic consistency or intermediate reasoning steps encourages models to show increased compositional behavior. 
%     \item \textbf{Tuning the efficiency pressure:} Small values of $\lambda$ yield fast but potentially coarse reconstructions, whereas larger values prioritize detailed reconstruction at the cost of longer, possibly more redundant, messages. In our experiments (see Section~\ref{sec:iterated-learning}), we explore this trade-off empirically and show that values of $\lambda > 1$ yield the best performance across both generalization, communication efficiency metrics, and compositionality.
% \end{enumerate}

\textbf{Efficiency pressure:} Loss \eqref{lossint} rewards accurate reconstructions made as early as possible, favoring messages with shorter useful length.

\textbf{Interpretable sub-messages:} The model is encouraged to structure messages such that an interpretation by $R_\omega$ is available for sub-messages, making each symbol accountable for partial reconstruction. This echoes findings in prior theoretical and empirical work on chain-of-thought and compositionality \cite{velickovic2020Neural,jiang2021inducing,chakravarthy2022systematicity,nye2022show,wei2022chain,prystawski2024think}, which show that forcing intermediate semantic consistency or intermediate reasoning steps encourages models to show increased compositional behavior. 

\textbf{Tuning the efficiency pressure:} Small $\lambda$ yields fast but potentially coarse reconstructions, while larger values favor detailed outputs at the cost of longer, possibly redundant messages, and in the limit $\lambda\to\infty$ \eqref{lossint} becomes equivalent to using just the last term in brackets, i.e., the full-length message reconstruction. In our experiments (see Section~\ref{sec:iterated-learning}), we explore this trade-off empirically and show that $\lambda \equiv 1.5$ achieves the best performance across generalization, communication efficiency metrics, and compositionality.

\subsection{Imitation Phase}
\label{Im}
Iterated learning is known to add compressibility pressure to emergent communication schemes \cite{iteratedlearningkirby}, resulting in languages that are "easy to learn", thus favoring simple communication schemes that can be compressed: this is known as a \textit{communication (or generational) bottleneck}. It is argued \cite{iteratedlearningkirby} that languages that remain relatively stable even when a learner only observes a small subset of the language of the previous generation need to be compositional.

Our case (and in fact, most deep learning applications of IL) differ from the original IL formulation \cite{iteratedlearningkirby}, in that the transmission bottleneck does not constrict the amount of examples presented to the student, but rather constricts the information obtainable from these examples in order to perfectly reconstruct the language. This paradigm shift for transmission bottlenecks is at the heart of discrete bottlenecks \cite{ren2020Compositional, rita2022emergent, rita-etal-2020-lazimpa} and noisy bottlenecks \cite{ren2024improving}. 

We next pass to describe our proposed imitation scheme and its motivations compared to previous methods.
\subsubsection{Final-State Imitation (FiSI)}
As usual in IL, we apply a teacher-student regime to initialize the next iteration sender $S_{\phi^{t+1}}$. 

Our main proposed novelties for the imitation phase are as follows. 
% \begin{enumerate}
%     \item\textbf{State space imitation:} As opposed to other IL implementations \cite{ren2020Compositional, ren2020Compositional, li2019easeofteachinglanguagestructureemergent}, our models are not directed to match the student protocols from different iterations in the message space, but rather in the state space, using the frozen receiver network from the interaction phase. This change allows a wider range of sender strategies, as the new loss is more permissive.

%     \item\textbf{Final state reconstruction:}   
%     During imitation phase we use the following reconstruction loss, where $R^\star_t=R_{\omega^t}, S^\star_t=S_{\phi^t}$ are frozen receiver and sender from the previous iteration and $S_{t+1}=S_{\phi^{t+1}}$ is the newly trained sender: 
%     \begin{equation} \label{lossrec}
%     \mathcal{L}^{rec}_{\phi^{t+1}} :=\mathbb{E}_{x}\left[ d_{\mathcal X}(R^\star_t(S^\star_t(x)), R^\star_t(S_{\phi^{t+1}}(x)))\right].
%     \end{equation}
    
%     Our loss $\mathcal L^{rec}_{\phi^{t+1}}$ only depends on the state associated to the full message, without testing reconstruction on sub-messages. This change allows the intermediate message tokens of student sender $S_{\phi^{t+1}}(x)_i, i<C$ to drift further away from the ones of the teacher mapping $S^\star_t(x)_i$ and thus more easily find different strategies with the same end result. 
% \end{enumerate}

\textbf{State space imitation:} As opposed to other IL implementations \cite{ren2020Compositional, ren2020Compositional, li2019easeofteachinglanguagestructureemergent}, our models are not directed to match the student protocols from different iterations in the message space, but rather in the state space, using the frozen receiver network from the interaction phase. This change allows a wider range of sender strategies, as the new loss is more permissive.

\textbf{Final state reconstruction:}   During imitation phase we use the following reconstruction loss, where $R^\star_t=R_{\omega^t}, S^\star_t=S_{\phi^t}$ are frozen receiver and sender from the previous iteration and $S_{t+1}=S_{\phi^{t+1}}$ is the newly trained sender: 
     \begin{equation} \label{lossrec}
     \mathcal{L}^{rec}_{\phi^{t+1}} :=\mathbb{E}_{x}\left[ d_{\mathcal X}(R^\star_t(S^\star_t(x)), R^\star_t(S_{\phi^{t+1}}(x)))\right].
     \end{equation}
Our loss $\mathcal L^{rec}_{\phi^{t+1}}$ only depends on the state associated to the full message, without testing reconstruction on sub-messages. This change allows the intermediate message tokens of the student sender $S_{\phi^{t+1}}(x)_i, i<C$ to drift further away from the ones of the teacher mapping $S^\star_t(x)_i$ and thus more easily find different strategies with the same end result. 

State space imitation is justified as follows. Note that as message space is discrete, there exists a value $\delta_0$ such that any two messages at distance $<\delta_0$ in fact coincide. Then $d_{\mathcal M}(S^\star_t(x),S_{t+1}(x))<\delta_0$ always implies $d_{\mathcal X}(R^\star_t(S^\star_t(x)), R^\star_t(S_{t+1}(x)))=0$, while the vice-versa is not always true: if $R^\star_t$ is not injective, any $S_{t+1}(x)\in (R^\star_t)^{-1}(S^\star_t(x))$ will preserve zero distance of $R^\star_t$-images. If $R^\star_t$ has mild Lipschitz regularity assumption, this principle extends to the regime of small loss, showing that our new loss is less restrictive on the choices of $S_{t+1}$ than the traditional message space loss. Thus, state space imitation enables wider sender strategy exploration without penalization losses.

\begin{figure*}[h!]
    \centering

    \begin{subfigure}{0.6\textwidth}
    \includegraphics[width=\linewidth]{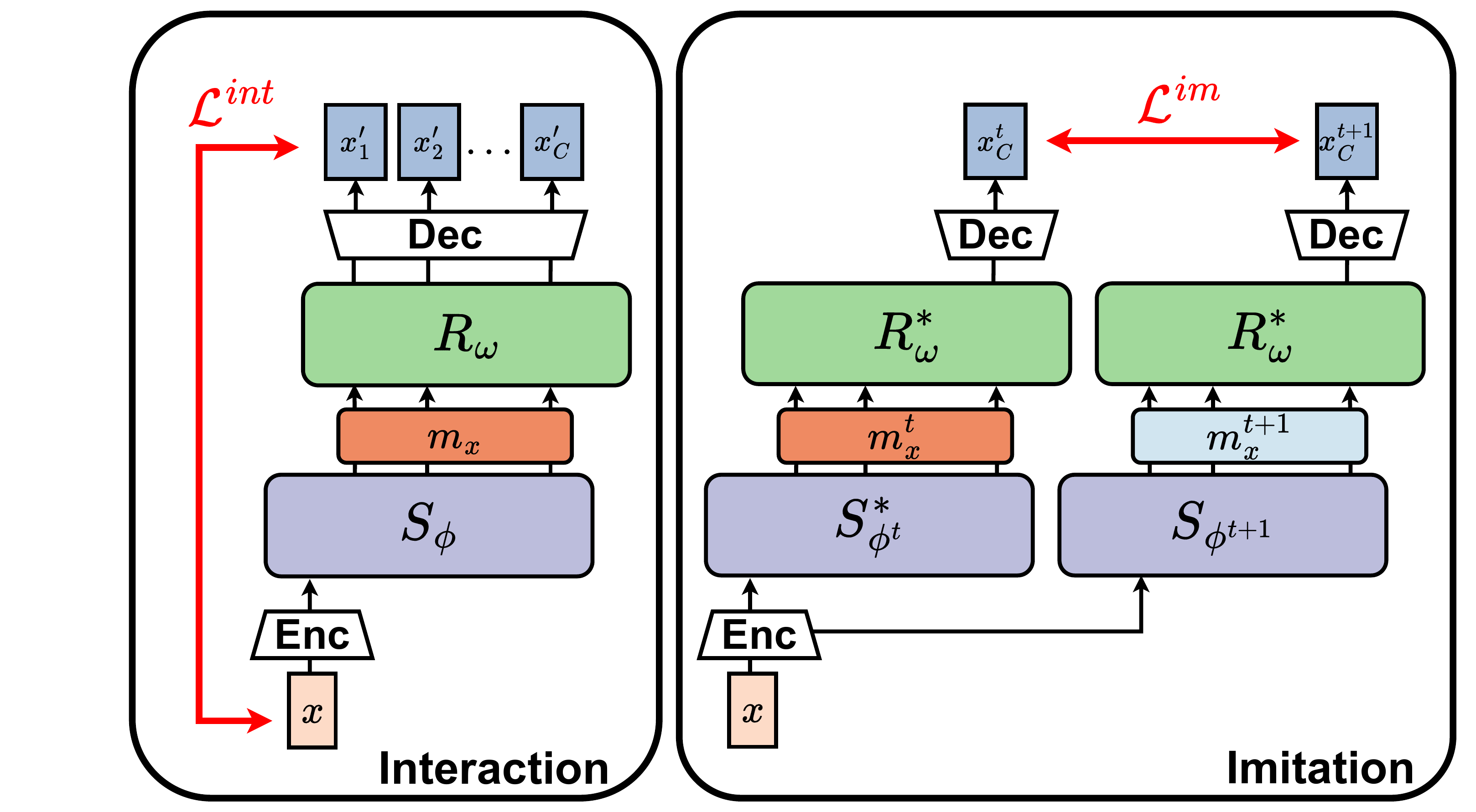}
        \caption{Overview of our proposed architecture}
        \label{fig:method}
    \end{subfigure}   
    \begin{subfigure}{0.38\textwidth}
    \includegraphics[width=\linewidth]{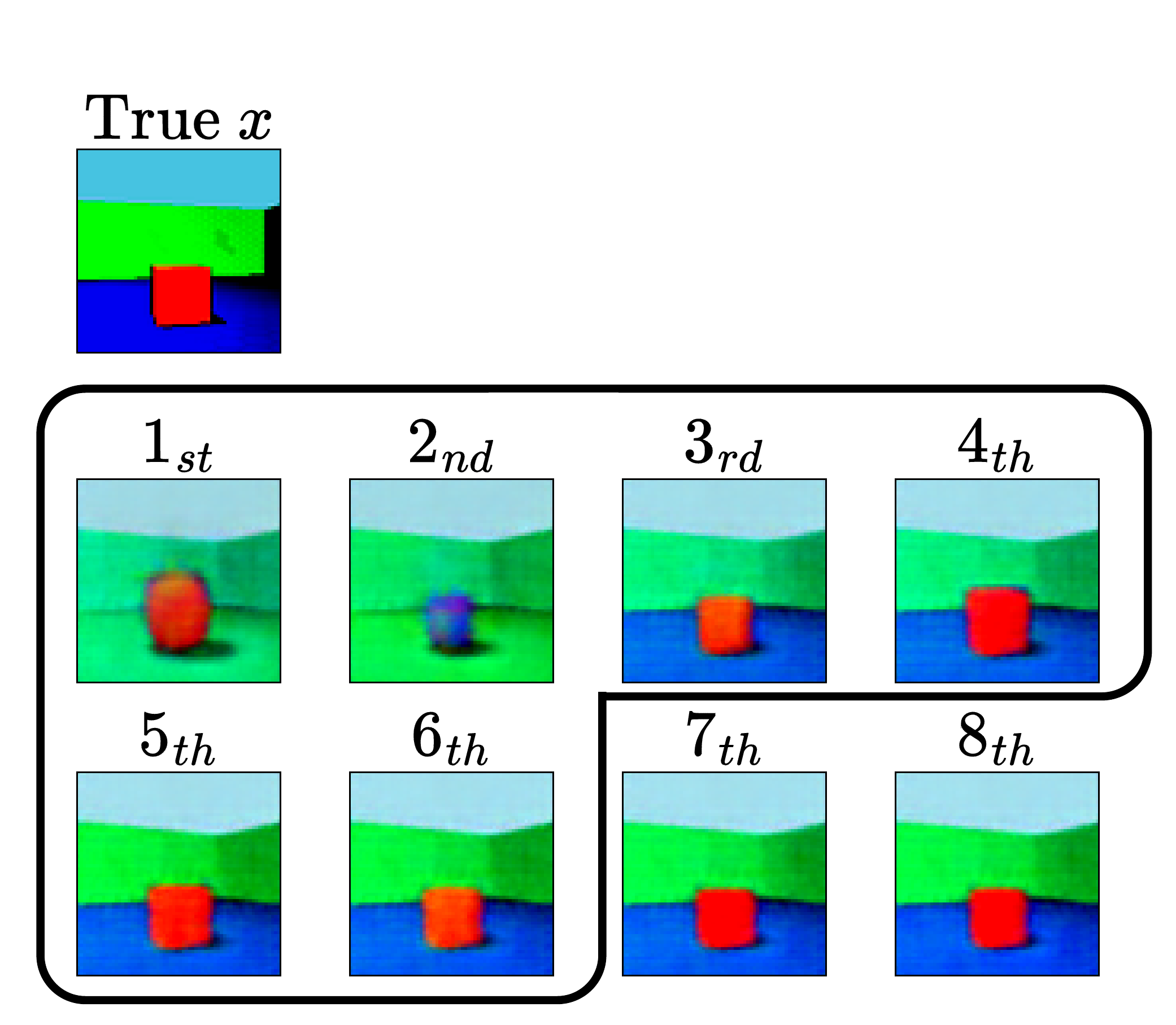}
        \caption{Qualitative example of image decoding}
        \label{fig:decoding}
    \end{subfigure}

    \caption{\textbf{(a) Overview of our proposed architecture.} \textbf{Interaction Phase:} The sender $S_{\phi^t}$ and receiver $R_{\omega^t}$ are jointly trained to minimize the reconstruction error between the input state $x$ and the predicted states $\{x'\}$, by encoding $x$ into a message $m^t_x$. \textbf{Imitation Phase:} A new sender $S_{\phi^{t+1}}$ is trained to imitate the final predicted output of $R_{\omega^t}(m^t_x)$, while also maximizing pairwise message diversity to encourage exploration. \textbf{(b) Qualitative example of image decoding:} the receiver reconstructs the input image from the sender's message for each sub-message, giving a series of reconstructions. The \textit{useful length} of a message corresponds to the first reconstruction that has error below a threshold $\epsilon>0$: in this example the last two message tokens are not useful in that they do not add to the reconstruction accuracy.}
    \label{fig:method+decoding}

\end{figure*} 

The choice of using just final states is justified because final state reconstruction allows greater freedom during imitation for the student sender.
%\begin{proposition}[See Prop. \ref{prop:fullvsfinal-app}]\label{prop:fullvsfinal}
%Set $d_i^S(x):=d_\mathcal X(R^\star(S^\star(x)_{[i]}),R^\star(S(x)_{[i]})) $ and for $\epsilon>0$ set
%\begin{equation}\label{eq:fullvsfinal}
%    \mathcal S_{full}^\epsilon:=\Big\{S:\ \mathbb E_x\Big[\frac1C\sum_{i=1}^Cd_i^S(x)\Big] <\epsilon\Big\},\quad \mathcal S_{final}^\epsilon:= \left\{S:\ \mathbb E_x\left[d_C^S(x)\right] <\epsilon\right\}.
%\end{equation}
%1. For $i<C$ and any choice of $M,\epsilon>0$ we have $\mathcal R_{[i]}(\mathcal S_{full}^M)\subset \mathcal R_{[i]}(\mathcal S_{final}^\epsilon)$ where $\mathcal R_{[i]}$ is the restriction of values of $S$ to sub-messages of length $i$, $\mathcal R_{[i]}S(x):=S(x)_{[i]}$.
%
%2. In general, $\mathcal S_{full}^\epsilon\subseteq \mathcal S_{final}^{\epsilon/C}$. If $\mathbb E_x[d_j^S(x)]$ is non-decreasing in $j$ then $\mathcal S_{full}^\epsilon\subseteq \mathcal S_{final}^\epsilon$ as well.
%\end{proposition}
%The freedom of exploration ensured by item 1 of the proposition is our main motivation. 
By only constraining the reconstruction of the full message, we allow much more freedom for the encoding strategy of $S(x)$ at earlier sub-messages at each iteration (see Proposition \ref{prop:fullvsfinal-app} for a formal statement and proof).

\subsubsection{Pairwise Distance Maximization (PDM)} 

To further improve protocol diversity and promote exploration of the language space during the imitation phase, we introduce a regularization objective that promotes dissimilarity between messages within each batch. In addition to enabling a larger search space for optimal student strategies. Enhancing the diversity of student strategies also fits the general IL argument according to which compositional languages are the most resistant to noisy language transmission \cite{ iteratedlearningkirby, KIRBY201587, kirby2014iterated}.
% \textcolor{red}{[Poner papers de Kirby, y quiza \cite{rita2022emergent}?]}.

Specifically, we approximate the Hamming distance by computing the position-wise cosine dissimilarity between the probability distributions over symbols in each message.\footnote{For one-hot encoded messages, this formulation is equivalent to maximizing the Hamming distance.} This encourages the sender to generate messages that are maximally distinct while remaining semantically aligned with the teacher protocol. The strength of this regularization is controlled by a hyperparameter $\beta$.

The resulting imitation loss is defined as:

\begin{equation}\label{lossim}
    \mathcal{L}^{im}_{\phi^{t+1}} := \mathcal{L}^{rec}_{\phi^{t+1}} + \beta \frac{1}{N_{batch}^2}\sum_{i, j} d(S_{\phi^{t+1}}(x_i), S_{\phi^{t+1}}(x_j)),
\end{equation}

where $d(\cdot, \cdot)$ is the mean cosine similarity between corresponding symbol distributions.
%:
%\begin{equation}
%    d(m_1, m_2) = 1 - \frac{1}{C} \sum_{i=1}^{C} \frac{m_{1[i]} \cdot m_{2[i]}}{\|m_{1[i]}\|\|m_{2[i]}\|},
%\end{equation}
%\noindent where $m_1$ and $m_2$ are sequences of $C$ probability distributions output by the sender $S_{\phi^{t+1}}$.

In \S~\ref{app:nhereg}, we show that this regularization term provides a lower bound on entropy maximization objectives~\cite{entropymax}, and serves as an upper bound for contrastive losses such as NT-Xent~\cite{chen2020simpleframeworkcontrastivelearning}, thus connecting our formulation to well-established principles in representation learning.

\section{Experimental setup}

\subsection{Datasets}\label{sec:datasets}

\paragraph{Shapes3D}
We tested our framework using the Shapes3D \cite{dsprites17} dataset, which consists of colored images of 3D geometric shapes, with 6 underlying generating factors $G$: floor hue, wall hue, object hue, shape, scale, and orientation. 
The total amount of attribute-value combinations is $480,000$.

\paragraph{MPI3D}
%We evaluated our framework 
We also evaluated using the compositional split of the MPI3D dataset \cite{gondal2019transfer}. %introduced by \citet{gondal2019transfer}. 
This dataset consists of colored images of a robot arm interacting with objects, rendered in controlled 3D scenes with 7 underlying generative factors $G$: object color, object shape, object size, camera height, background color, horizontal arm position, and vertical arm position. The dataset comprises $1,036,800$ unique combinations of these factor values. 
% The compositional split ensures that all individual values for each generative factor appear in the training set, while some specific combinations are withheld for testing.

For both datasets we use the compositional split from \citet{schott2022visual}, which ensures all attribute values appear in training, yet some combinations are reserved for the testing set (See \S \ref{app:compositionaliso} for a proof that this situation would anyways hold for random train/test splittings with high probability).

\subsection{Evaluation Metrics}

Following \citet{chaabouni2022emergent}, we assess compositionality using \textit{Topographic Similarity} (TopSim)~\cite{Brighton2006UnderstandingLE}, a widely used proxy for compositionality. TopSim measures the correlation between pairwise Hamming distances in message space and the generating factor space, capturing the extent to which semantically similar inputs yield similar messages under a structured encoding.
% To detect pooling behavior, we also report the average pairwise Hamming distance across all messages.

To evaluate communication efficiency, we define the \emph{useful length} $\hat{\ell}_\epsilon(x)$ , which estimates the minimal prefix of a message necessary to achieve near-maximal reconstruction quality. Formally:
\begin{equation*}
    \hat{\ell}_\epsilon(x) := \min\left\{i \in \{1,\dots,C\} : \text{MSE}\left(x, R_{\omega}(S_{\phi}(x)_{[i]})\right) \leq \epsilon \right\},
\end{equation*}
where $C$ is the maximum message length and $\ell_\epsilon(x) = C$ if the threshold is unmet.

We choose $\epsilon$ by viewing the loss distributions on both tested datasets and estimating a common plateau point for each. The full position-wise loss values can be found in the supplementary material.

% We define $\epsilon$ by ... \textbf{RAFA}

% We evaluate \textit{expressivity} through two metrics: perceptual distance using LPIPS~\cite{lpips} between reconstructed and original images, and the $W_2$ distance between their predicted latent representations.

% All reported results are averaged over 5 seeds, and we include the standard error of the mean.

Finally, to approximate the expressivity of the communication protocol, we evaluate reconstruction quality using the MSE between the generated image and the original input. 

We discuss the use of additional language metrics in \S \ref{app:variation}.

\section{Results}

For all experiments, we report the mean and standard error of 10 random seeds.
To assess the robustness of our findings, we also perform permutation tests to evaluate the statistical significance of the observed differences (see \S \ref{app:emergent-permutation-tests}).
Additionally, we include qualitative reconstruction experiments in \S \ref{app:reconstruction}, where we visually compare the outputs of all evaluated methods across progressive decoding steps.
Further implementation details are provided in \S \ref{app:impledetails}.

\subsection{Progressive Decoding reduces the message's useful length}

\begin{table}[]
\label{tab:results}
\caption{Performance comparison across all experiments on the \textsc{Shapes3D} and \textsc{MPI3D} datasets. We report Topographic Similarity (TopSim $\uparrow$), Useful Length ($\hat{\ell}_{\epsilon}$ $\downarrow$), and Last Symbol MSE ($\downarrow$) for each ablation. The results are grouped by the components of our proposed framework: \textbf{Interaction}, \textbf{Imitation}, and \textbf{Regularization}. The synergetic mixture of our proposed methods (PD+FiSI+PDM) consistently improves compositionality and message efficiency over the baseline and prior variants.}
\centering
\begin{subtable}{\textwidth}
\resizebox{\textwidth}{!}{
\begin{tabular}{lcccl}
\toprule
& & TopSim $\uparrow$ & $\hat{\ell}_{2\times10^{-1}}$ $\downarrow$ & Last symbol MSE $\downarrow$ \\
\midrule
\multirow{2}{*}{{\textbf{Interaction}}} 
& Full message reconstruction, no IL (Baseline) & $0.244\pm 0.002$ & $10.0 \pm 0.0$ & $0.212 \pm 0.003$ \\
& Progressive Decoding(PD, ours)& $0.270 \pm 0.001$ & $7.5 \pm 0.215$ & $0.238 \pm 0.007$ \\
\midrule
\multirow{4}{*}{{\textbf{Imitation}}}
& Message imitation (baseline) & $0.257 \pm 0.005$ & $9.9 \pm 0.3$ & $0.200 \pm 0.007$ \\
& Full state imitation & $0.256 \pm 0.003$ & $9.9 \pm 0.3$ & $0.179 \pm 0.003$ \\
& Final-State Imitation (FiSI, ours)& $0.283 \pm 0.003$ & $10.0 \pm 0.0$ & $\mathbf{0.176 \pm 0.002}$ \\
\midrule
\multirow{2}{*}{{\textbf{Regularization}}}
& PD+FiSI+KoLeo  $\lambda=1.5$ & $0.256 \pm 0.002$ & $8.555 \pm 0.167$ & $0.179 \pm 0.003$ \\
& PD+FiSI+PDM (ours) $\lambda=1.5$ & $\mathbf{0.292 \pm 0.002}$ & $\mathbf{7.0 \pm 0.155}$ & $0.194 \pm 0.005$ \\
\bottomrule

\end{tabular}
}
\caption{Shapes3D}
\label{tab:shapes3d}
\end{subtable}

\begin{subtable}{\textwidth}
\resizebox{\textwidth}{!}{
\begin{tabular}{lcccl}
\toprule
& & TopSim $\uparrow$ & $\hat{\ell}_{1.95\times10^{-2}}$ $\downarrow$ & Last symbol MSE $\downarrow$ \\
\midrule
\multirow{2}{*}{{\textbf{Interaction}}} 
& Full message reconstruction, no IL (Baseline) & $0.133 \pm 0.001$ & $9.3 \pm 0.002$ & $\mathbf{0.015 \pm 0.0}$ \\
& Progressive Decoding(PD, ours)& $0.137 \pm 0.001$ & $\mathbf{6.6 \pm 0.341}$ & $\mathbf{0.015 \pm 0.0}$ \\
\midrule
\multirow{4}{*}{{\textbf{Imitation}}}
& Message imitation (baseline) & $0.135 \pm 0.001$ & $9.733 \pm 0.029$ & $ 0.016 \pm 0.0$ \\
& Full state imitation & $0.137 \pm 0.001$ & $9.923 \pm 0.020$ & $0.018 \pm 0.0$ \\
& Final-State Imitation (FiSI, ours)& $\mathbf{0.156 \pm 0.001}$ & $9.7 \pm 0.046$ & $0.02 \pm 0.0$ \\
\midrule
\multirow{2}{*}{{\textbf{Regularization}}}
& PD+FiSI+KoLeo  $\lambda=1.5$ & $0.147 \pm 0.002$ & $8.9 \pm 0.221$ & $0.02 \pm 0.0$ \\
& PD+FiSI+PDM (ours) $\lambda=1.5$ & $0.153 \pm 0.001$& $9.0 \pm 0.167$ & $0.02 \pm 0.0$  \\

\bottomrule
\end{tabular}
}
\caption{MPI3D}
\label{tab:shapes3d}
\end{subtable}
\label{tab:results}
\end{table}

We begin by evaluating the effect of incorporating PD during the interactive phase of training. As shown in the first two rows of Table~\ref{tab:results}, this modification leads to a 25-29\% reduction in the useful message length, indicating more efficient communication. These gains are achieved without compromising expressivity on MPI3D and with only a slight compromise on Shapes3D, as indicated by stable reconstruction quality in the former and a minor degradation in the latter.

To further promote efficient and structured communication, we introduce a geometric penalty term $\lambda$ that penalizes reconstruction error more heavily when longer messages are used. As shown in Figure~\ref{fig:interaction-lambda}, increasing $\lambda$ up to values near 1.5 consistently improves reconstruction quality and useful length, indicating higher expressivity and efficiency. However, when $\lambda$ becomes too large, the reconstruction error incurred at the end of the message dominates the loss function, effectively nullifying PD. 
We found that setting $\lambda=1.5$ yields favorable results in efficiency and expressivity, while maintaining high compressibility.
\begin{figure}[h!]
    \centering
    \begin{subfigure}{0.32\textwidth}
        \includegraphics[width=\linewidth,trim={0 12mm 0 12mm},clip]{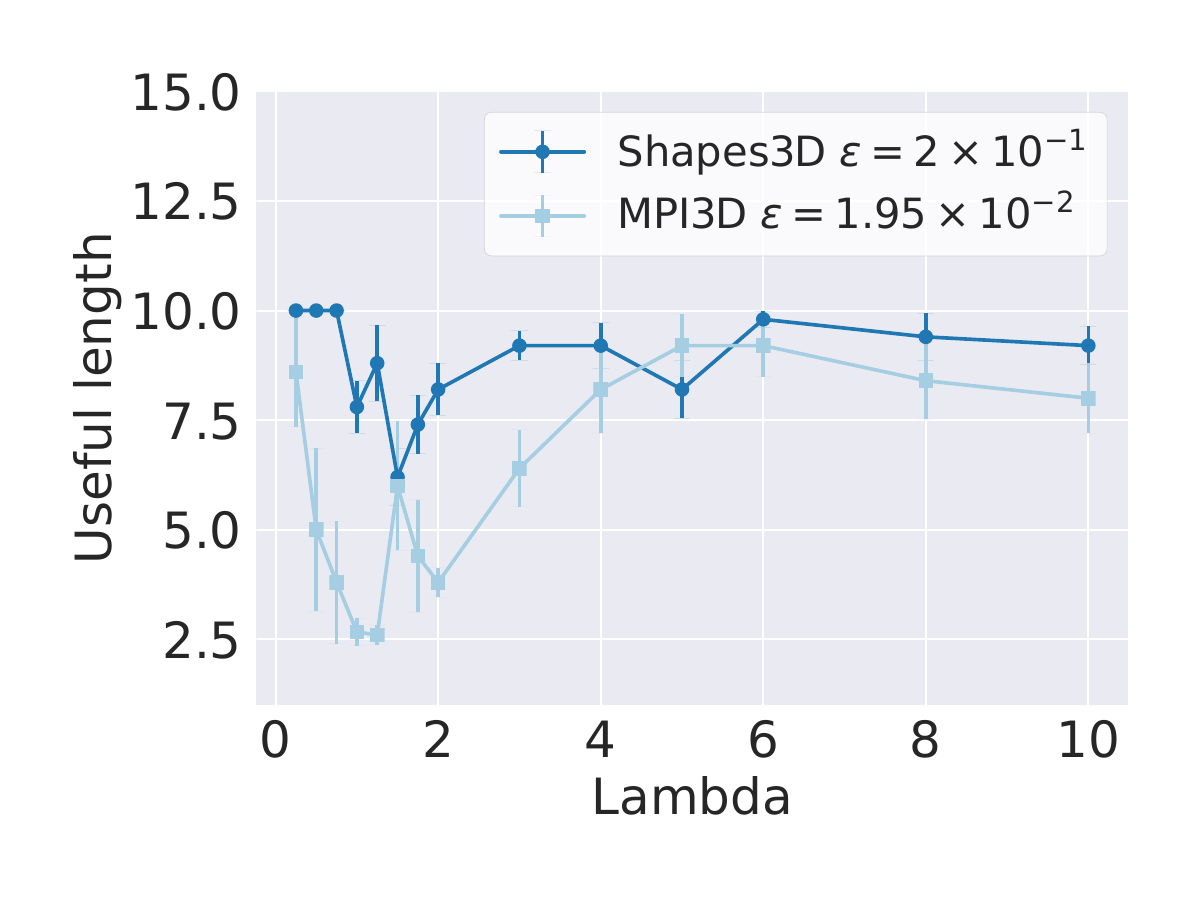}    
        \caption{Useful Length}
    \end{subfigure}
    \begin{subfigure}{0.32\textwidth}
        \includegraphics[width=\linewidth,trim={0 10mm 0 8mm},clip]{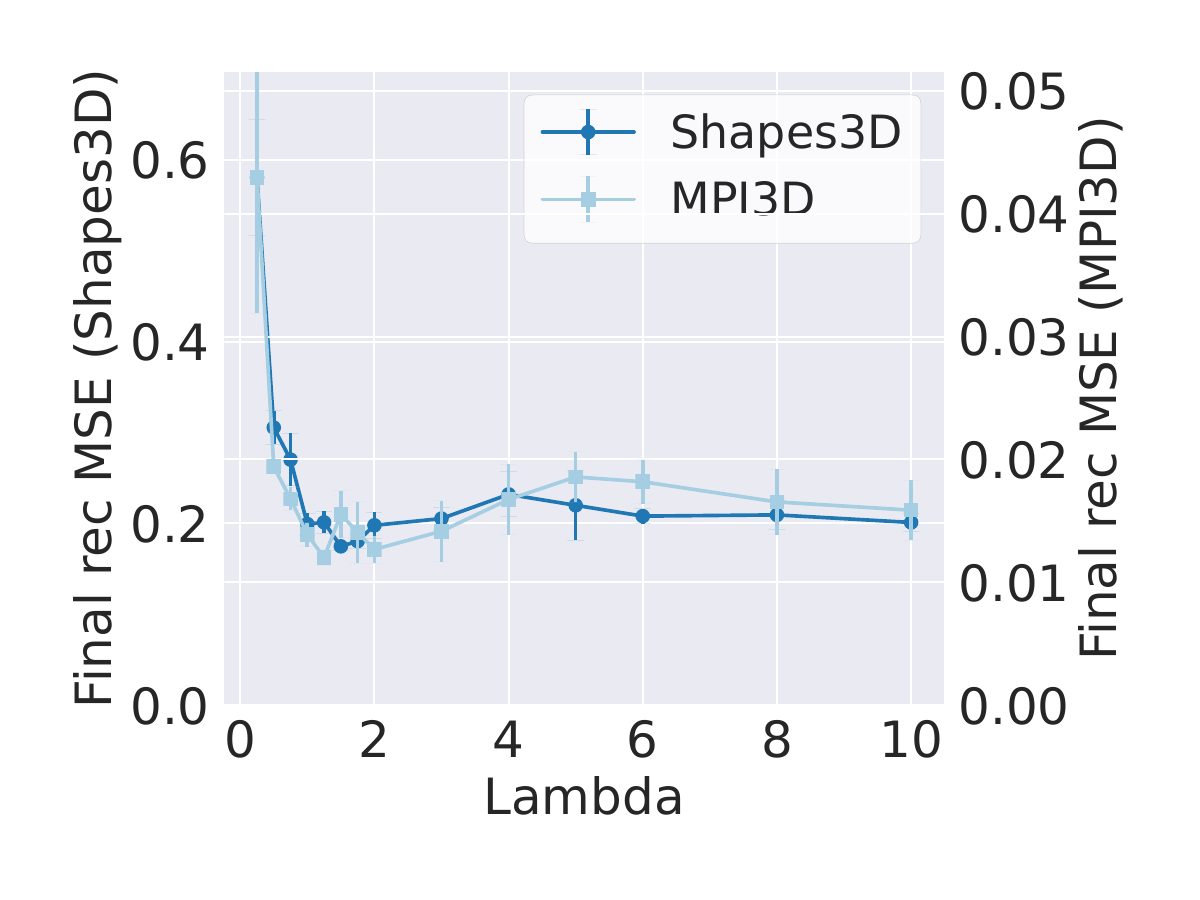}
        \caption{Reconstruction MSE}
    \end{subfigure}
    \begin{subfigure}{0.32\textwidth}
        \includegraphics[width=\linewidth,trim={0 12mm 0 12mm},clip]{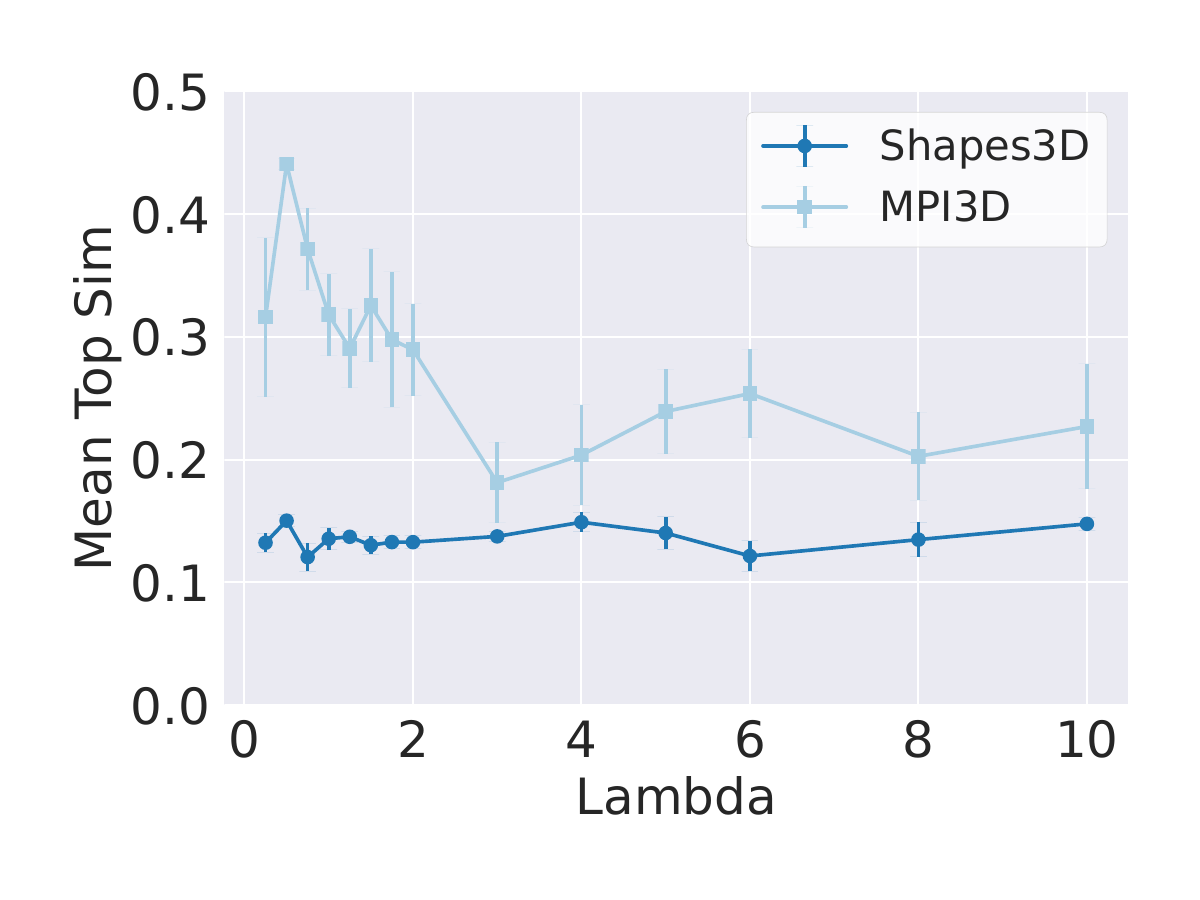}
        \caption{TopSim}
    \end{subfigure}
    \caption{\textbf{Effect of geometric penalty $\lambda$ on emergent communication.} We introduce a geometric weighting term $\lambda$ that penalizes reconstruction error more strongly for longer messages. Increasing $\lambda$ initially improves useful length (a). However, once a certain threshold is met, useful length and reconstruction error (b) increase. Values close to 1.5 of $\lambda$ strike a favorable balance—achieving efficient and expressive communication.}
    \label{fig:interaction-lambda}
\end{figure}

\subsection{Final State Imitation increases TopSim}\label{sec:iterated-learning}

Second, we investigate the impact of final state imitation on the structure of the emergent language.

In these experiments, we compare our approach to a baseline model without imitation and to a standard IL variant that imitates the teacher’s message directly—a strategy commonly adopted in prior work for optimizing this phase~\cite{ren2020Compositional, ren2024improving, li2019easeofteachinglanguagestructureemergent}.

As shown in the imitation section in Table~\ref{tab:results}, in the MPI3D dataset, we find that our \textit{final state imitation} method consistently enhances the compositionality of the emergent messages. It outperforms the non-IL baseline and surpasses message-level imitation significantly (\S~\ref{app:emergent-permutation-tests}).

More broadly, our results support prior findings that IL promotes greater structure in emergent communication protocols~\cite{li2019easeofteachinglanguagestructureemergent, ren2020Compositional}. They also underscore the value of shifting the learning objective from message imitation to image reconstruction, which enforces a tighter generational bottleneck by requiring the student to recover the intended output without access to the original message.
% , providing a tighter generational bottleneck for the student to learn the language from the teacher.

Moreover, this setup increases flexibility in protocol exploration by constraining the student only to the reconstruction target, not the exact message form. Permitting generational variation in message structure while preserving semantic fidelity, promoting diversity without sacrificing expressivity.

Together, these factors exert pressure toward discovering more systematic and compositional protocols—structures that are inherently more transmissible and robust across generations.

\subsection{Pairwise Distance Maximization increases compositionality and efficiency}

We evaluate a model that combines all previously identified best-performing components—specifically, $\lambda = 1.5$, final-state imitation, PD and further introduce an entropy-based regularization term using PDM. We compare this with the KoLeo entropy estimator proposed by \citet{sablayrolles2018spreading}.

When examining the effect of incorporating PDM, Table~\ref{tab:results} shows that PDM outperforms both FiSi and PD in fostering structure in the message.
On the Shapes3D dataset, PDM leads to a marked improvement in TopSim and a lower useful message length, indicating more efficient and structured communication, albeit with a slight increase in reconstruction error. For MPI3D, this configuration doesn't achieve the highest TopSim, albeit close to the best, and it increases useful length and leads to a marginally higher reconstruction loss compared to using only PD. This highlights a trade-off between expressivity and compressibility, in line with prior theoretical accounts on communication efficiency and compositionality \cite{ambiguityandefficiency, iteratedlearningkirby, Luna2020InternalAE}.

When comparing PDM to the KoLeo estimator, Regularization section in Table~\ref{tab:results}, we find that on Shapes3D, PDM achieves superior TopSim and shorter useful length, with slight reconstruction gains. On MPI3D, KoLeo leads to a lower useful length but exhibits reduced compositionality, with both regularizers showing similar reconstruction performance.

\subsection{Comparison to disentangled representation learning}

Following the comparison made by \citet{elvae}, we test our generated messages against standard self-supervised disentangled representation learning frameworks, namely $\beta$-VAE and $\beta$-TCVAE, as well as the baseline VAE+EL \cite{elvae}. For the continuous models, we extract the first half of the latent vector (corresponding to the predicted means $\mu$) and evaluate it using the DCI disentanglement score \cite{DCI2018}. In addition, we train a two-layer MLP to predict ground-truth generative factors, reporting RMSE for continuous factors and classification accuracy for categorical ones. 

For the discrete models, we use the predicted messages for each image. We calculate the DCI disentanglement as-is, and use an embedding matrix to transform the discrete messages into the same dimension as the continuous vectors for the MLP. Training is conducted on three subsets of size 1000 of the training set and averaged. Evaluations are conducted on the test sets of the compositional splits for the Shapes3D and MPI3D datasets.

Across both datasets, our method (VAE+CELEBI) demonstrates consistent improvements in disentanglement over baseline methods. On Shapes3D, we observe a substantial increase in DCI scores compared to continuous models and the discrete baseline (see statistical significance tests in \S~\ref{app:disen-permutation-tests}). However, both discrete models fall behind the continuous baselines in downstream accuracy and RMSE, suggesting a trade-off between representation interpretability and usefulness in downstream tasks, which is consistent with previous work indicating that discrete representations may be less accessible to simple classifiers (as shown in \cite{xu2022compositional}), and that inductive biases for compositionality do not imply disentanglement \cite{montero2021the}.

On MPI3D, our method slightly under-performs when compared to the continuous baselines in categorical accuracy, however greatly outperforming all baselines in continuous regression.  We speculate that this discrepancy arises from the lower correlation between pixel-level input statistics and continuous generative factors in MPI3D as opposed to Shapes3D, which may favor communication-based models over purely reconstructive. 

\begin{table}[t]
\caption{Disentanglement metrics for the Shapes3D and MPI3D datasets. We report the DCI score to evaluate disentanglement quality, and the RMSE and classification accuracy of a linear probe trained on the latent representations (both message and vector embeddings) to assess their suitability for downstream tasks.}
\centering
\resizebox{.95\textwidth}{!}{
\begin{tabular}{llcccc}
\toprule
\textbf{Dataset} & \textbf{Model} & Disentanglement $\uparrow$ & RMSE $\downarrow$ & Acc $\uparrow$\\
\midrule
\multirow{4}{*}{Shapes3D}
& $\beta$-VAE          & $0.045 \pm 0.004$ & $2.460 \pm 0.144$ & $0.702 \pm 0.025$ \\
& $\beta$-TCVAE        & $0.043 \pm 0.005$ & $\mathbf{2.378 \pm 0.123}$ & $\mathbf{0.721 \pm 0.028}$ \\
& VAE + EL             & $0.108 \pm 0.000$ & $3.168 \pm 0.027$ & $0.343 \pm 0.019$ \\
& VAE + CELEBI         & $\mathbf{0.112 \pm 0.001}$ & $2.932 \pm 0.045$ & $0.453 \pm 0.024$ \\
\midrule
\multirow{4}{*}{MPI3D}
& $\beta$-VAE           & $0.031 \pm 0.001$ & $21.819 \pm 0.111$ & $0.581 \pm 0.006$ \\
& $\beta$-TCVAE         & $0.031 \pm 0.001$ & $21.854 \pm 0.188$ & $\mathbf{0.582 \pm 0.005}$ \\
& VAE + EL              & $0.114 \pm 0.002$ & $15.971 \pm 0.301$ & $0.529 \pm 0.006$ \\
& VAE + CELEBI          & $\mathbf{0.137 \pm 0.002}$ & $\mathbf{14.82 \pm 0.012}$ & $0.542 \pm 0.002$ \\
\bottomrule
\end{tabular}
}
\label{tab:disentanglement}
\end{table}

\section{Related Work}

The use of language reference games (LRGs) in conjunction with iterated learning for the purpose of learning self-supervised representations remains underexplored. \citet{elvae} compare an emergent language autoencoder—comprising a visual backbone and an LSTM-based sender and receiver—with disentanglement frameworks such as the $\beta$-VAE and $\beta$-TCVAE. They find that the representations induced by the emergent language model generalize better in downstream tasks, highlighting the potential of symbolic communication as a compositional bottleneck.

Many previous works have explored using emergent language to induce compositional behavior \cite{kharitonov2020entropy, Dess2021InterpretableAC, auersperger2022defendingcompositionalityemergentlanguages, Denamganai2020OnS, chaabouni2020compositionality}. Several works utilize a referential LSG in conjunction with IL, using predicted messages as a generational bottleneck \cite{ren2020Compositional, li2019easeofteachinglanguagestructureemergent}. Alternative bottlenecks for emergent language have been proposed, including simplicial embeddings \cite{ren2024improving, Elberg_2024}, code-books \cite{zheng2024iterated} and noisy channels \cite{ueda2024lewisssignalinggamebetavae}. Our message tokens can also be considered as roughly comparable to the slots in slot-attention architectures such as \cite{eslami2016attend,greff2019multi,Lin2020SPACE,jiangscalor,lin2020improving,Engelcke2020GENESIS,locatello2020object, brady2025interaction,akan2025slotguided}, however note that among other differences, the slot-attention typically additive decoding layer \cite{burgess2019monet,lachapelle2023additive} does not compare directly to our proposed Progressive Decoding.

To replicate the Zipfian distributions observed in natural languages \cite{Zipf49, Kanwal2017-hs}, \citet{rita-etal-2020-lazimpa} propose the LazImpa framework.  The LazImpa \textit{Impatient listener} loss is similar to our proposed Progressive Decoding, in that they both aim to induce incremental informativeness and efficiency by reconstructing the input at every timestep. However LazImpa is optimized for a referential game task (and not a reconstruction task as ours) and uses a linear length penalization in conjunction with a cross-entropy loss. Our proposal instead uses the actual reconstruction loss and the length penalization is exponential and based on a tunable parameter $\lambda$.

% Iteretaed learning add: \cite{zheng2024iterated}

\section{Limitations and Future Work}

Our experiments are conducted exclusively on the synthetic datasets \textit{Shapes3D} and \textit{MPI3D} \cite{3dshapes18, gondal2019transfer}, which feature clean, disentangled generative factors and lack observational noise. In future work, we plan to extend our the same principle of making explicit pressures towards complexity reduction of underlying representations to natural image datasets where factor supervision may be unavailable or weakly defined. Furthermore, we aim to formalize the notion of language compression defined in IL in terms of Kolmogorov complexity, for representing datasets with more complex compositional structures.

Moreover, it is debatable whether MSE is the most appropriate choice for the self-supervised loss function, especially in the context of natural image data. While MSE is a simple choice, future work may explore reconstruction losses that better align with human visual perception, such as perceptual similarity metrics based on deep feature embeddings (e.g., LPIPS \cite{lpips}), or adversarial and contrastive losses that promote more semantically structured representations.

Additionally, we froze the visual backbone across all EL model variants trained on the same dataset. While this ensures controlled comparisons within each dataset, it may introduce an initialization bias when comparing with continuous baselines, where the encoder is jointly optimized. Although our results remain statistically significant, future work could investigate whether co-training the vision encoder yields improved alignment between symbolic and perceptual representations, or leads to new trade-offs in compression and expressivity.

Finally, our experiments were restricted to compositional generalization metrics and reconstruction-based evaluation. Other facets of emergent communication, such as robustness and interpretability, are left for future work. We believe our framework offers a strong foundation for such extensions and a tractable setting to explore the interplay between compression, efficiency, and generalization.

\section{Conclusion}

We introduce the CELEBI framework with three novel mechanisms for enhancing compositional learning in IL frameworks, as well as mathematical justification for their design. First, with \textit{Progressive Decoding} (PD) we reward informative communication at each step of the message, creating an inductive bias toward efficient and distributed representations. We show that PD promotes both message compressibility and compositional alignment.

Second, in our \textit{Final State Imitation} (FiSI) in IL, the student is trained to reproduce the final prediction of the teacher, rather than the message itself. This shift enables greater exploration of the message space while preserving semantic consistency. We provided theoretical motivation and empirical evidence that this approach yields more expressive and compositional protocols, outperforming traditional message imitation in TopSim and reconstruction metrics.

Third, our regularizer based on \textit{Pairwise Distance Maximization} (PDM), which provably approximates entropy maximization over messages, serves as a practical inductive bias for promoting diversity and structure in emergent languages, particularly during the imitation phase.

Together, these contributions enrich the IL framework, and give new insights into how training dynamics and inductive pressures shape the emergence of language-like representations. As confirmed by our empirical findings, the proposed methods lead to more compositional and generalizable communication schemes.

\begin{ack}
This work was supported by ANID Chile, Fondecyt Regular grant 1231724, as well research centers of excellence with code FB210017 (Basal CENIA), ICN2021\_004 (Millenium iHealth), and ICN17\_002 (Millenium IMFD).

\end{ack}

\bibliography{main,felipe}
\bibliographystyle{plainnat}

%%%%%%%%%%%%%%%%%%%%%%%%%%%%%%%%%%%%%%%%%%%%%%%%%%%%%%%%%%%%

\newpage
\section*{NeurIPS Paper Checklist}

%%% BEGIN INSTRUCTIONS %%%
The checklist is designed to encourage best practices for responsible machine learning research, addressing issues of reproducibility, transparency, research ethics, and societal impact. Do not remove the checklist: {\bf The papers not including the checklist will be desk rejected.} The checklist should follow the references and follow the (optional) supplemental material.  The checklist does NOT count towards the page
limit. 

Please read the checklist guidelines carefully for information on how to answer these questions. For each question in the checklist:
\begin{itemize}
    \item You should answer \answerYes{}, \answerNo{}, or \answerNA{}.
    \item \answerNA{} means either that the question is Not Applicable for that particular paper or the relevant information is Not Available.
    \item Please provide a short (1–2 sentence) justification right after your answer (even for NA). 
   % \item {\bf The papers not including the checklist will be desk rejected.}
\end{itemize}

{\bf The checklist answers are an integral part of your paper submission.} They are visible to the reviewers, area chairs, senior area chairs, and ethics reviewers. You will be asked to also include it (after eventual revisions) with the final version of your paper, and its final version will be published with the paper.

The reviewers of your paper will be asked to use the checklist as one of the factors in their evaluation. While "\answerYes{}" is generally preferable to "\answerNo{}", it is perfectly acceptable to answer "\answerNo{}" provided a proper justification is given (e.g., "error bars are not reported because it would be too computationally expensive" or "we were unable to find the license for the dataset we used"). In general, answering "\answerNo{}" or "\answerNA{}" is not grounds for rejection. While the questions are phrased in a binary way, we acknowledge that the true answer is often more nuanced, so please just use your best judgment and write a justification to elaborate. All supporting evidence can appear either in the main paper or the supplemental material, provided in appendix. If you answer \answerYes{} to a question, in the justification please point to the section(s) where related material for the question can be found.

IMPORTANT, please:
\begin{itemize}
    \item {\bf Delete this instruction block, but keep the section heading ``NeurIPS Paper Checklist"},
    \item  {\bf Keep the checklist subsection headings, questions/answers and guidelines below.}
    \item {\bf Do not modify the questions and only use the provided macros for your answers}.
\end{itemize}

%%% END INSTRUCTIONS %%%

\begin{enumerate}

\item {\bf Claims}
    \item[] Question: Do the main claims made in the abstract and introduction accurately reflect the paper's contributions and scope?
    \item[] Answer: \answerYes{} % Replace by \answerYes{}, \answerNo{}, or \answerNA{}.
    \item[] Justification: The claims and contributions are clearly stated in the abstract and introduction, and are backed by theoretical and experimental results. 
    \item[] Guidelines:
    \begin{itemize}
        \item The answer NA means that the abstract and introduction do not include the claims made in the paper.
        \item The abstract and/or introduction should clearly state the claims made, including the contributions made in the paper and important assumptions and limitations. A No or NA answer to this question will not be perceived well by the reviewers. 
        \item The claims made should match theoretical and experimental results, and reflect how much the results can be expected to generalize to other settings. 
        \item It is fine to include aspirational goals as motivation as long as it is clear that these goals are not attained by the paper. 
    \end{itemize}

\item {\bf Limitations}
    \item[] Question: Does the paper discuss the limitations of the work performed by the authors?
    \item[] Answer: \answerYes{} % Replace by \answerYes{}, \answerNo{}, or \answerNA{}.
    \item[] Justification: The paper contains a separate limitations section.
    \item[] Guidelines:
    \begin{itemize}
        \item The answer NA means that the paper has no limitation while the answer No means that the paper has limitations, but those are not discussed in the paper. 
        \item The authors are encouraged to create a separate "Limitations" section in their paper.
        \item The paper should point out any strong assumptions and how robust the results are to violations of these assumptions (e.g., independence assumptions, noiseless settings, model well-specification, asymptotic approximations only holding locally). The authors should reflect on how these assumptions might be violated in practice and what the implications would be.
        \item The authors should reflect on the scope of the claims made, e.g., if the approach was only tested on a few datasets or with a few runs. In general, empirical results often depend on implicit assumptions, which should be articulated.
        \item The authors should reflect on the factors that influence the performance of the approach. For example, a facial recognition algorithm may perform poorly when image resolution is low or images are taken in low lighting. Or a speech-to-text system might not be used reliably to provide closed captions for online lectures because it fails to handle technical jargon.
        \item The authors should discuss the computational efficiency of the proposed algorithms and how they scale with dataset size.
        \item If applicable, the authors should discuss possible limitations of their approach to address problems of privacy and fairness.
        \item While the authors might fear that complete honesty about limitations might be used by reviewers as grounds for rejection, a worse outcome might be that reviewers discover limitations that aren't acknowledged in the paper. The authors should use their best judgment and recognize that individual actions in favor of transparency play an important role in developing norms that preserve the integrity of the community. Reviewers will be specifically instructed to not penalize honesty concerning limitations.
    \end{itemize}

\item {\bf Theory assumptions and proofs}
    \item[] Question: For each theoretical result, does the paper provide the full set of assumptions and a complete (and correct) proof?
    \item[] Answer: \answerYes{} % Replace by \answerYes{}, \answerNo{}, or \answerNA{}.
    \item[] Justification: All theoretical results are rigorously proven either in the main text or in the supplementary material, and justifications of the result's hypotheses are given.
    \item[] Guidelines:
    \begin{itemize}
        \item The answer NA means that the paper does not include theoretical results. 
        \item All the theorems, formulas, and proofs in the paper should be numbered and cross-referenced.
        \item All assumptions should be clearly stated or referenced in the statement of any theorems.
        \item The proofs can either appear in the main paper or the supplemental material, but if they appear in the supplemental material, the authors are encouraged to provide a short proof sketch to provide intuition. 
        \item Inversely, any informal proof provided in the core of the paper should be complemented by formal proofs provided in appendix or supplemental material.
        \item Theorems and Lemmas that the proof relies upon should be properly referenced. 
    \end{itemize}

    \item {\bf Experimental result reproducibility}
    \item[] Question: Does the paper fully disclose all the information needed to reproduce the main experimental results of the paper to the extent that it affects the main claims and/or conclusions of the paper (regardless of whether the code and data are provided or not)?
    \item[] Answer: \answerYes{}{} % Replace by \answerYes{}, \answerNo{}, or \answerNA{}.
    \item[] Justification: All implementation details, including hyperparameters, used libraries and experimental details, are included either in the main text or in the supplementary material. We plan to release the all code upon acceptance. 
    \item[] Guidelines:
    \begin{itemize}
        \item The answer NA means that the paper does not include experiments.
        \item If the paper includes experiments, a No answer to this question will not be perceived well by the reviewers: Making the paper reproducible is important, regardless of whether the code and data are provided or not.
        \item If the contribution is a dataset and/or model, the authors should describe the steps taken to make their results reproducible or verifiable. 
        \item Depending on the contribution, reproducibility can be accomplished in various ways. For example, if the contribution is a novel architecture, describing the architecture fully might suffice, or if the contribution is a specific model and empirical evaluation, it may be necessary to either make it possible for others to replicate the model with the same dataset, or provide access to the model. In general. releasing code and data is often one good way to accomplish this, but reproducibility can also be provided via detailed instructions for how to replicate the results, access to a hosted model (e.g., in the case of a large language model), releasing of a model checkpoint, or other means that are appropriate to the research performed.
        \item While NeurIPS does not require releasing code, the conference does require all submissions to provide some reasonable avenue for reproducibility, which may depend on the nature of the contribution. For example
        \begin{enumerate}
            \item If the contribution is primarily a new algorithm, the paper should make it clear how to reproduce that algorithm.
            \item If the contribution is primarily a new model architecture, the paper should describe the architecture clearly and fully.
            \item If the contribution is a new model (e.g., a large language model), then there should either be a way to access this model for reproducing the results or a way to reproduce the model (e.g., with an open-source dataset or instructions for how to construct the dataset).
            \item We recognize that reproducibility may be tricky in some cases, in which case authors are welcome to describe the particular way they provide for reproducibility. In the case of closed-source models, it may be that access to the model is limited in some way (e.g., to registered users), but it should be possible for other researchers to have some path to reproducing or verifying the results.
        \end{enumerate}
    \end{itemize}

\item {\bf Open access to data and code}
    \item[] Question: Does the paper provide open access to the data and code, with sufficient instructions to faithfully reproduce the main experimental results, as described in supplemental material?
    \item[] Answer: \answerYes{} % Replace by \answerYes{}, \answerNo{}, or \answerNA{}.
    \item[] Justification: Our full code and details to reproduce our results can be found in the oficial repository.
    \item[] Guidelines:
    \begin{itemize}
        \item The answer NA means that paper does not include experiments requiring code.
        \item Please see the NeurIPS code and data submission guidelines (\url{https://nips.cc/public/guides/CodeSubmissionPolicy}) for more details.
        \item While we encourage the release of code and data, we understand that this might not be possible, so “No” is an acceptable answer. Papers cannot be rejected simply for not including code, unless this is central to the contribution (e.g., for a new open-source benchmark).
        \item The instructions should contain the exact command and environment needed to run to reproduce the results. See the NeurIPS code and data submission guidelines (\url{https://nips.cc/public/guides/CodeSubmissionPolicy}) for more details.
        \item The authors should provide instructions on data access and preparation, including how to access the raw data, preprocessed data, intermediate data, and generated data, etc.
        \item The authors should provide scripts to reproduce all experimental results for the new proposed method and baselines. If only a subset of experiments are reproducible, they should state which ones are omitted from the script and why.
        \item At submission time, to preserve anonymity, the authors should release anonymized versions (if applicable).
        \item Providing as much information as possible in supplemental material (appended to the paper) is recommended, but including URLs to data and code is permitted.
    \end{itemize}

\item {\bf Experimental setting/details}
    \item[] Question: Does the paper specify all the training and test details (e.g., data splits, hyperparameters, how they were chosen, type of optimizer, etc.) necessary to understand the results?
    \item[] Answer: \answerYes{} % Replace by \answerYes{}, \answerNo{}, or \answerNA{}.
    \item[] Justification: We use and disclose publicly available splits of known datasets, and specify all training hyperparameters either in the main text or the supplementary material.
    \item[] Guidelines:
    \begin{itemize}
        \item The answer NA means that the paper does not include experiments.
        \item The experimental setting should be presented in the core of the paper to a level of detail that is necessary to appreciate the results and make sense of them.
        \item The full details can be provided either with the code, in appendix, or as supplemental material.
    \end{itemize}

\item {\bf Experiment statistical significance}
    \item[] Question: Does the paper report error bars suitably and correctly defined or other appropriate information about the statistical significance of the experiments?
    \item[] Answer: \answerYes{} % Replace by \answerYes{}, \answerNo{}, or \answerNA{}.
    \item[] Justification: We disclose the amount of seeds used in each experiments, and show standard error for the results in all figures and tables.
    \item[] Guidelines:
    \begin{itemize}
        \item The answer NA means that the paper does not include experiments.
        \item The authors should answer "Yes" if the results are accompanied by error bars, confidence intervals, or statistical significance tests, at least for the experiments that support the main claims of the paper.
        \item The factors of variability that the error bars are capturing should be clearly stated (for example, train/test split, initialization, random drawing of some parameter, or overall run with given experimental conditions).
        \item The method for calculating the error bars should be explained (closed form formula, call to a library function, bootstrap, etc.)
        \item The assumptions made should be given (e.g., Normally distributed errors).
        \item It should be clear whether the error bar is the standard deviation or the standard error of the mean.
        \item It is OK to report 1-sigma error bars, but one should state it. The authors should preferably report a 2-sigma error bar than state that they have a 96\% CI, if the hypothesis of Normality of errors is not verified.
        \item For asymmetric distributions, the authors should be careful not to show in tables or figures symmetric error bars that would yield results that are out of range (e.g. negative error rates).
        \item If error bars are reported in tables or plots, The authors should explain in the text how they were calculated and reference the corresponding figures or tables in the text.
    \end{itemize}

\item {\bf Experiments compute resources}
    \item[] Question: For each experiment, does the paper provide sufficient information on the computer resources (type of compute workers, memory, time of execution) needed to reproduce the experiments?
    \item[] Answer: \answerYes{} % Replace by \answerYes{}, \answerNo{}, or \answerNA{}.
    \item[] Justification: We provide details of the computational resources used for this paper in the Appendix.
    \item[] Guidelines:
    \begin{itemize}
        \item The answer NA means that the paper does not include experiments.
        \item The paper should indicate the type of compute workers CPU or GPU, internal cluster, or cloud provider, including relevant memory and storage.
        \item The paper should provide the amount of compute required for each of the individual experimental runs as well as estimate the total compute. 
        \item The paper should disclose whether the full research project required more compute than the experiments reported in the paper (e.g., preliminary or failed experiments that didn't make it into the paper). 
    \end{itemize}
    
\item {\bf Code of ethics}
    \item[] Question: Does the research conducted in the paper conform, in every respect, with the NeurIPS Code of Ethics \url{https://neurips.cc/public/EthicsGuidelines}?
    \item[] Answer: \answerYes{} % Replace by \answerYes{}, \answerNo{}, or \answerNA{}.
    \item[] Justification: We have reviewed the code of ethics and have confirmed that our research complies with every applicable requirement.
    \item[] Guidelines:
    \begin{itemize}
        \item The answer NA means that the authors have not reviewed the NeurIPS Code of Ethics.
        \item If the authors answer No, they should explain the special circumstances that require a deviation from the Code of Ethics.
        \item The authors should make sure to preserve anonymity (e.g., if there is a special consideration due to laws or regulations in their jurisdiction).
    \end{itemize}

\item {\bf Broader impacts}
    \item[] Question: Does the paper discuss both potential positive societal impacts and negative societal impacts of the work performed?
    \item[] Answer: \answerNA{} % Replace by \answerYes{}, \answerNo{}, or \answerNA{}.
    \item[] Justification: We do not foresee any significant societal impact, as the paper explores \textit{principles} for compositional learning and \textit{not} its applications to society.
    \item[] Guidelines:
    \begin{itemize}
        \item The answer NA means that there is no societal impact of the work performed.
        \item If the authors answer NA or No, they should explain why their work has no societal impact or why the paper does not address societal impact.
        \item Examples of negative societal impacts include potential malicious or unintended uses (e.g., disinformation, generating fake profiles, surveillance), fairness considerations (e.g., deployment of technologies that could make decisions that unfairly impact specific groups), privacy considerations, and security considerations.
        \item The conference expects that many papers will be foundational research and not tied to particular applications, let alone deployments. However, if there is a direct path to any negative applications, the authors should point it out. For example, it is legitimate to point out that an improvement in the quality of generative models could be used to generate deepfakes for disinformation. On the other hand, it is not needed to point out that a generic algorithm for optimizing neural networks could enable people to train models that generate Deepfakes faster.
        \item The authors should consider possible harms that could arise when the technology is being used as intended and functioning correctly, harms that could arise when the technology is being used as intended but gives incorrect results, and harms following from (intentional or unintentional) misuse of the technology.
        \item If there are negative societal impacts, the authors could also discuss possible mitigation strategies (e.g., gated release of models, providing defenses in addition to attacks, mechanisms for monitoring misuse, mechanisms to monitor how a system learns from feedback over time, improving the efficiency and accessibility of ML).
    \end{itemize}
    
\item {\bf Safeguards}
    \item[] Question: Does the paper describe safeguards that have been put in place for responsible release of data or models that have a high risk for misuse (e.g., pretrained language models, image generators, or scraped datasets)?
    \item[] Answer: \answerNA{} % Replace by \answerYes{}, \answerNo{}, or \answerNA{}.
    \item[] Justification: The paper does not involve models with high risk of misuse, as it explores simplified models for the sake of finding new principles for enhancing compositionality of representations based on an iterated communication game setup.
    \item[] Guidelines:
    \begin{itemize}
        \item The answer NA means that the paper poses no such risks.
        \item Released models that have a high risk for misuse or dual-use should be released with necessary safeguards to allow for controlled use of the model, for example by requiring that users adhere to usage guidelines or restrictions to access the model or implementing safety filters. 
        \item Datasets that have been scraped from the Internet could pose safety risks. The authors should describe how they avoided releasing unsafe images.
        \item We recognize that providing effective safeguards is challenging, and many papers do not require this, but we encourage authors to take this into account and make a best faith effort.
    \end{itemize}

\item {\bf Licenses for existing assets}
    \item[] Question: Are the creators or original owners of assets (e.g., code, data, models), used in the paper, properly credited and are the license and terms of use explicitly mentioned and properly respected?
    \item[] Answer: \answerYes{} % Replace by \answerYes{}, \answerNo{}, or \answerNA{}.
    \item[] Justification: All assets are properly credited and used in accordance with their licenses: Shapes3D and MPI3D are under CC-BY 4.0, and the EGG framework is released under the MIT license.
    \item[] Guidelines:
    \begin{itemize}
        \item The answer NA means that the paper does not use existing assets.
        \item The authors should cite the original paper that produced the code package or dataset.
        \item The authors should state which version of the asset is used and, if possible, include a URL.
        \item The name of the license (e.g., CC-BY 4.0) should be included for each asset.
        \item For scraped data from a particular source (e.g., website), the copyright and terms of service of that source should be provided.
        \item If assets are released, the license, copyright information, and terms of use in the package should be provided. For popular datasets, \url{paperswithcode.com/datasets} has curated licenses for some datasets. Their licensing guide can help determine the license of a dataset.
        \item For existing datasets that are re-packaged, both the original license and the license of the derived asset (if it has changed) should be provided.
        \item If this information is not available online, the authors are encouraged to reach out to the asset's creators.
    \end{itemize}

\item {\bf New assets}
    \item[] Question: Are new assets introduced in the paper well documented and is the documentation provided alongside the assets?
    \item[] Answer: \answerNA{} % Replace by \answerYes{}, \answerNo{}, or \answerNA{}.
    \item[] Justification: The paper does not release new assets.
    \item[] Guidelines:
    \begin{itemize}
        \item The answer NA means that the paper does not release new assets.
        \item Researchers should communicate the details of the dataset/code/model as part of their submissions via structured templates. This includes details about training, license, limitations, etc. 
        \item The paper should discuss whether and how consent was obtained from people whose asset is used.
        \item At submission time, remember to anonymize your assets (if applicable). You can either create an anonymized URL or include an anonymized zip file.
    \end{itemize}

\item {\bf Crowdsourcing and research with human subjects}
    \item[] Question: For crowdsourcing experiments and research with human subjects, does the paper include the full text of instructions given to participants and screenshots, if applicable, as well as details about compensation (if any)? 
    \item[] Answer: \answerNA{} % Replace by \answerYes{}, \answerNo{}, or \answerNA{}.
    \item[] Justification: The paper does not use crowdsourcing experiments or experiments with human subjects.
    \item[] Guidelines:
    \begin{itemize}
        \item The answer NA means that the paper does not involve crowdsourcing nor research with human subjects.
        \item Including this information in the supplemental material is fine, but if the main contribution of the paper involves human subjects, then as much detail as possible should be included in the main paper. 
        \item According to the NeurIPS Code of Ethics, workers involved in data collection, curation, or other labor should be paid at least the minimum wage in the country of the data collector. 
    \end{itemize}

\item {\bf Institutional review board (IRB) approvals or equivalent for research with human subjects}
    \item[] Question: Does the paper describe potential risks incurred by study participants, whether such risks were disclosed to the subjects, and whether Institutional Review Board (IRB) approvals (or an equivalent approval/review based on the requirements of your country or institution) were obtained?
    \item[] Answer: \answerNA{} % Replace by \answerYes{}, \answerNo{}, or \answerNA{}.
    \item[] Justification: The paper does not involve research with human subjects.
    \item[] Guidelines:
    \begin{itemize}
        \item The answer NA means that the paper does not involve crowdsourcing nor research with human subjects.
        \item Depending on the country in which research is conducted, IRB approval (or equivalent) may be required for any human subjects research. If you obtained IRB approval, you should clearly state this in the paper. 
        \item We recognize that the procedures for this may vary significantly between institutions and locations, and we expect authors to adhere to the NeurIPS Code of Ethics and the guidelines for their institution. 
        \item For initial submissions, do not include any information that would break anonymity (if applicable), such as the institution conducting the review.
    \end{itemize}

\item {\bf Declaration of LLM usage}
    \item[] Question: Does the paper describe the usage of LLMs if it is an important, original, or non-standard component of the core methods in this research? Note that if the LLM is used only for writing, editing, or formatting purposes and does not impact the core methodology, scientific rigorousness, or originality of the research, declaration is not required.
    %this research? 
    \item[] Answer: \answerNA{} % Replace by \answerYes{}, \answerNo{}, or \answerNA{}.
    \item[] Justification: The paper's results were not based on, or aided by, research using LLMs.
    \item[] Guidelines:
    \begin{itemize}
        \item The answer NA means that the core method development in this research does not involve LLMs as any important, original, or non-standard components.
        \item Please refer to our LLM policy (\url{https://neurips.cc/Conferences/2025/LLM}) for what should or should not be described.
    \end{itemize}

\end{enumerate}
%[h!]

\newpage

\appendix

\newpage

\section{Background}

\subsection{Iterated Learning}

The IL model of cultural evolution \cite{iteratedlearningkirby} proposes to emulate the emergence of language through the interactions of adult and new learning agents. In the original formulation, a space of pairs of signal + meaning is randomly generated, and in each iteration, a learning agent is partially exposed to the language learned by the adult agent in the previous iteration. The learning agent becomes the adult agent for the next iteration, and transmits a subset of its learned "language". By iterating this process, the initial random language gains structure and observable properties, such as compositionality.

This framework has been widely extended to include an additional task \cite{li2019easeofteachinglanguagestructureemergent, ren2024improving,zhou2022fortuitousforgettingconnectionistnetworks}, instead of simply reshaping the language. Under this formulation, the training regime can be separated into two parts: Interaction (\textbf{Interaction}), where an agent is optimized to solve the additional task, and Imitation (\textbf{Imitation}), where a newly initialized agent "learns" from the previous agent through a shared learned language. 

\subsection{Lewis reconstruction game}
The Lewis reconstruction game (LRG) \cite{rita2022emergent} is a special case of the Lewis Signaling Game (LSG) \cite{lewis1969convention} framework, in which two agents cooperate to reconstruct an observed object. Specifically, a sender agent parameterized by $\varphi$ observes an object $x$ from an object space $\mathcal{X}$ and produces a message $m\in \mathcal{M}$, where $\mathcal{M}=V^\star$ is the set of all possible strings formed by concatenating symbols drawn from a finite vocabulary $V$. A receiver agent parameterized by $\omega$ observes $m$ and produces a reconstruction $x'$. Both models are optimized to minimize a reconstruction loss $\mathcal{L}_R(x, x')$. 

Importantly, in the original LSG, the receiver is tasked with predicting $x$ from a finite set of states, whereas in this formulation, the problem becomes a regression task, and $\mathcal{X}$ can be a continuous space.

\section{Implementation Details \& Hyperparameters}\label{app:impledetails}

\subsection{Baseline Model}\label{sec:baseline-implementations-details}
\begin{itemize}
    \item \textbf{Sender and Receiver:} Both consist of a single-layer LSTM with an embedding size of 64 and a hidden size of 256. Outputs are passed through a two-layer MLP with ReLU activations.
    \item \textbf{Communication Channel:} The vocabulary size is set to $|V| = 15$ and messages are composed of $C = 10$ discrete symbols. These values are chosen such that the amount of posible messages ($5.76\times10^12$) greatly outnumbers the amount of states. The Gumbel-Softmax bottleneck is used without straight-through estimation.
    \item \textbf{VAE Backbone:} The VAE is implemented using the disentanglement\_lib \cite{locatello2019challenging} python library, using the default setting of two convolutional layers of 32 filters, with kernel size 4 and stride 2, two convolutional layers of 64 filters, with kernel size 2 and stride 2, each with ReLU activation, as well as a linear layer of size 256 and ReLU activation and an output layer with size 2*128 for concatenated $\mu$s and log$\sigma$s. The encoder is mirrored replacing convolutional layers with transposed convolutions, and an input layer of size 128 and ReLU activations on all layers save the output.
    \item \textbf{VAE Training:} The VAE is pretrained for 15 epochs using the Adam optimizer with a learning rate of $1\times10^{-3}$.
\end{itemize}

The interaction-imitation games were run for a maximum of 100 iterations, alternating one full epoch in each phase per iteration. 

We used early stopping based on the validation MSE of the final reconstruction, starting from epoch $5$, with a minimum $\delta$ of $1\times10^{-3}$ and patience of $5$. We used the Adam optimizer with default parameters and learning rates of $1\times10^{-3}$ and $1\times10^{-4}$ for Shapes3D \cite{3dshapes18} and MPI3D \cite{gondal2019transfer}, respectively, as we found the training for the latter became unstable at higher learning rates.

We found that \textbf{Interaction} required smaller batch sizes to converge, whereas the \textbf{Imitation} phase \textbf{PDM} becomes a better entropy approximation at higher batch sizes, however scaling quadratically in computation. Therefore, we set the \textbf{Interaction} and \textbf{Imitation} batch sizes to 256 and 512, respectively.

To implement \textbf{Imitation} we defined a receiver $R_\omega$ and two senders $S_{\phi_1}$ and $S_{\phi_2}$ with their own optimizers. After participating in an \textbf{Interaction} iteration, $\phi^1$ and $\omega$ are frozen and used to train $S_{\phi_2}$ as described in \S \ref{Im}. After \textbf{Imitation}, the weights $\phi_2$ are copied to $\phi_1$, and $\phi_2$ is reset. Importantly, the optimizers for $\phi_1$ and $\phi_2$ are \textbf{never reset nor copied between them.}

\section{Compute Resources}

All experiments were conducted on our internal laboratory cluster, using NVIDIA A40 GPUs with 48\,GB of VRAM. Each job was allocated a single GPU with 8\,GB of VRAM usage on average, alongside access to a CPU with 20 cores, 118\,GB of RAM, and local SSD storage for datasets and model checkpoints.

The experiments reported in this paper comprise 360 training runs, totaling approximately 180 GPU hours. Individual runs varied in length from 15 to 50 minutes, depending on early stopping criteria. These runs represent the finalized experiments whose results are presented in the main text.

In total, the research project required 2,310 experimental runs over the course of development, amounting to approximately 245 GPU days (5,880 GPU hours) of compute time. This includes preliminary experiments, ablations, and failed runs that were instrumental to model and protocol design but are not individually reported in the paper.

% We note that the compute requirements of the finalized experiments represent only a small portion of the overall research cost, highlighting the exploratory nature of developing inductive biases for compositional communication.

\subsection{Useful length $\epsilon$}

To define the useful length $\epsilon$ for both datasets, we observed reconstruction loss distribution over all positions in the message (see \ref{fig:loss_pos}) for different $\lambda$ values, and estimated an average plateau point. For MPI3D we used a more conservative and precise threshold, as the reconstruction error varied more between methods.

\begin{figure*}[h!]
    \centering

    \begin{subfigure}{\textwidth}
    \includegraphics[width=\linewidth]{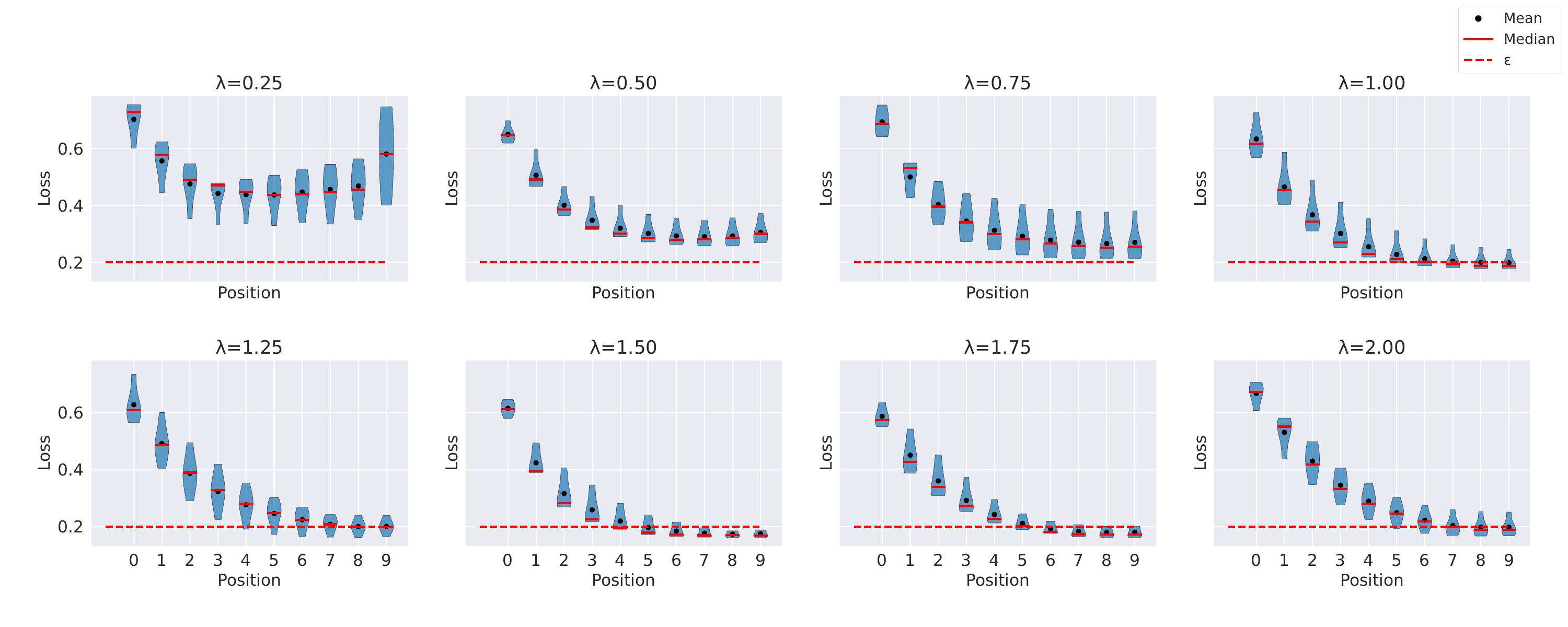}
        \label{fig:loss_pos_shapes}
    \end{subfigure}   
    \begin{subfigure}{\textwidth}
    \includegraphics[width=\linewidth]{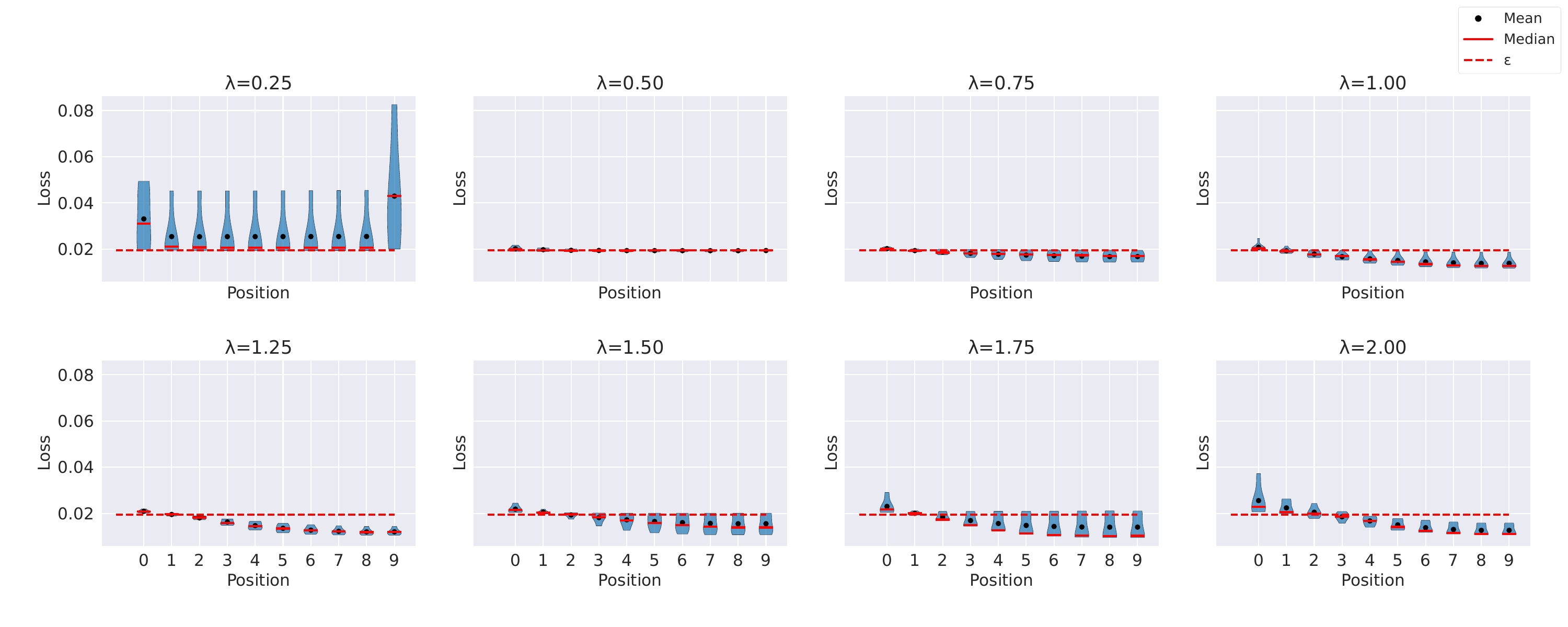}
    \label{fig:loss_pos_mpi}
    \end{subfigure}
\caption{Loss distribution over positions and $\lambda$ values for \textbf{(a)} Shapes3D and \textbf{(b)} MPI3D}
\label{fig:loss_pos}
\end{figure*}

\begin{figure*}
    \begin{subfigure}{0.48\textwidth}    \includegraphics[width=0.9\textwidth]{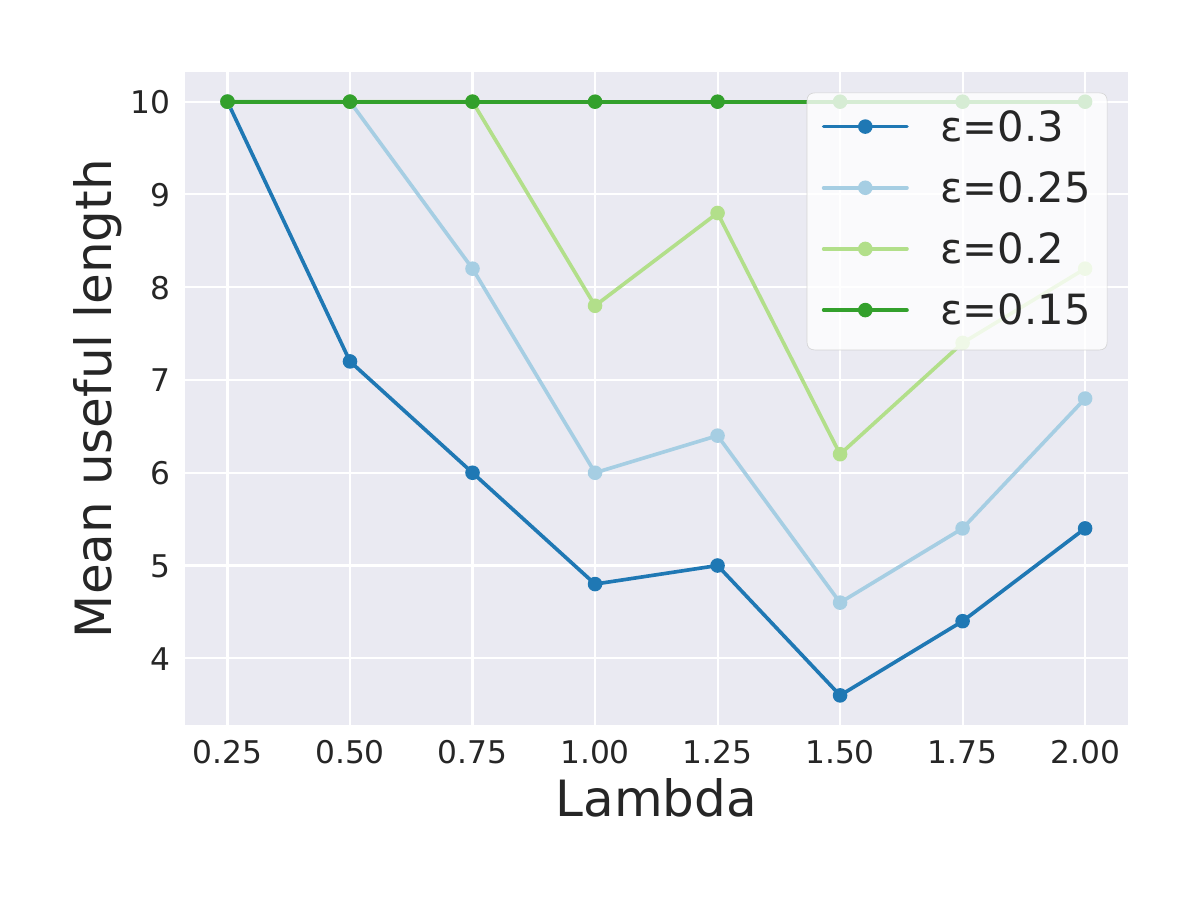}
        \label{fig:decoding}
    \end{subfigure}
\begin{subfigure}{0.48\textwidth}    \includegraphics[width=0.9\textwidth]{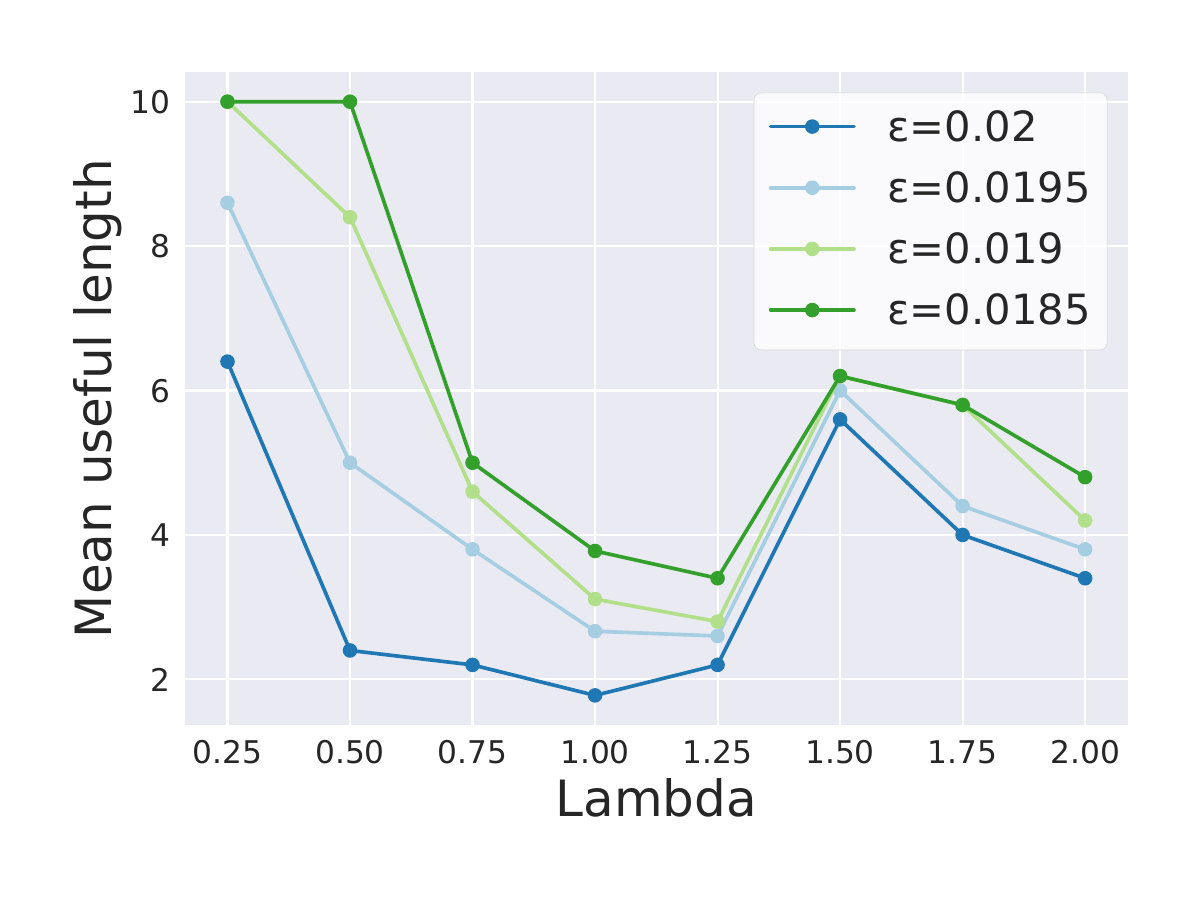}
        \label{fig:decoding}
    \end{subfigure}
\caption{Useful length values for different $\epsilon$ and $\lambda$ values for \textbf{(a)} Shapes3D and \textbf{(b)} MPI3D}
    \label{fig:useful_length_lambdas}
    
\end{figure*} 
Additionally, we analyzed the useful length graph for multiple $\epsilon$ values, and discarded those values where most models converged to either maximum or minimum possible length (see \ref{fig:useful_length_lambdas}).

\section{Recovering compositional structure from partially observed data}\label{app:compositionaliso}
For the sake of clarity of treatment, we assume here that $\mathbf G = [G_1,\dots,G_n]$ in whichi factor $G_i$ has the same number $N$ of possible values, and as a consequence $\mathcal G=[1:N]^n$. Our results will extend directly to the general $\mathcal G$ case as shown in Rmk. \ref{rmk:generalfactors}.

We work under the simplified deterministic generation hypotheses, in which our dataset is $\mathcal D=\mathsf{GenX}(\mathcal G)\subset \mathcal X$ in which $\mathsf{GenX}$ is a deterministic injective function. 

Our main result is the following:
\begin{theorem}
    \label{thm:recovercomp}
    Assume that $\mathcal G=[1:N]^n$ and that $\mathsf{GenX}:\mathcal G\to \mathcal X$ is an injective function, and that $\mathcal D_{train}\subseteq \mathcal D$ has cardinality $|\mathcal D_{train}|=p|\mathcal D|$ for some $p\in(0,1)$ such that $pN^n$ is an integer. 

    We assume that the compositional structure of $\mathcal G$ is such that the following holds:
    \begin{enumerate}
        \item[\textbf{(A)}] For some $k<n$, if we observe a set of data whose generating factor combinations feature all possible combinations of generating factor values $(G_{i_1},\dots,G_{i_k})$ for all choices of indices $1\le i_1<\cdots<i_k\le n$ then this is sufficient to reconstruct $\mathcal G$ to good accuracy.
    \end{enumerate}

    Then as $|\mathcal G|\to\infty$ the probability that $\mathcal D_{train}$ is sufficient to reconstruct $\mathcal G$ tends to $1$.
\end{theorem}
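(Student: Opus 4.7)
The plan is to reduce Assumption (A) to a coupon-collector-style covering problem on $\mathcal{G}=[1{:}N]^n$ and conclude via a union bound. Since $\mathsf{GenX}$ is injective, the random observation $\mathcal{D}_{train}\subseteq \mathcal{D}$ lifts to a uniformly chosen subset $\widetilde{\mathcal D}_{train}\subseteq \mathcal G$ of size $pN^n$. It thus suffices to show that with probability tending to $1$, the following covering property holds: for every $k$-subset of indices $I=\{i_1<\cdots<i_k\}\subseteq[1{:}n]$ and every value assignment $v\in[1{:}N]^k$, some $\mathbf{G}\in\widetilde{\mathcal D}_{train}$ satisfies $(G_{i_1},\dots,G_{i_k})=v$.

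For fixed $(I,v)$, let $T_{I,v}=\{\mathbf{G}\in\mathcal G:\ \mathbf{G}_I=v\}$, so that $|T_{I,v}|=N^{n-k}$, and let $B_{I,v}$ denote the event $\widetilde{\mathcal D}_{train}\cap T_{I,v}=\emptyset$. The probability is hypergeometric,
\begin{equation*}
P(B_{I,v})=\prod_{j=0}^{pN^n-1}\frac{N^n-N^{n-k}-j}{N^n-j},
\end{equation*}
and a term-by-term comparison (each ratio is at most $1-N^{-k}$, since $\tfrac{N^n-N^{n-k}-j}{N^n-j}-(1-N^{-k})=-\tfrac{jN^{n-k}}{(N^n-j)N^n}\le 0$) yields $P(B_{I,v})\le(1-N^{-k})^{pN^n}\le\exp(-pN^{n-k})$. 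A union bound over the $\binom{n}{k}N^k$ choices of $(I,v)$ then gives
\begin{equation*}
P\!\left(\bigcup_{I,v}B_{I,v}\right)\le\binom{n}{k}N^k\exp\!\bigl(-pN^{n-k}\bigr).
\end{equation*}

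To finish, I would verify this bound vanishes as $|\mathcal G|=N^n\to\infty$. Since $k<n$ forces $n-k\ge 1$, the exponential $\exp(-pN^{n-k})$ decays super-polynomially in $N$ (covering the regime $n$ fixed, $N\to\infty$) and at least geometrically in $n$ (covering $N\ge 2$ fixed, $n\to\infty$, where the prefactor grows only as $O(n^k)$), and any mixed growth is handled by the same domination. Via Rmk.~\ref{rmk:generalfactors}, the extension to heterogeneous factor cardinalities follows by replacing $N^{n-k}$ by the minimum product of the $n-k$ unused factor sizes. The main obstacle, beyond formalizing Assumption (A) into the clean covering statement above, is the hypergeometric-to-binomial comparison; although standard, it requires care because $pN^n$ is comparable to $|\mathcal G|$, so naive concentration bounds degrade. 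The hypothesis $k<n$ is essential: were $k=n$, complete coverage would require $p=1$, contradicting $p\in(0,1)$.
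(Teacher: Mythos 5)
Your proposal is correct and follows essentially the same route as the paper: you reduce the theorem via Assumption \textbf{(A)} to the covering event, bound the probability of missing a factor-value combination by the same product of hypergeometric ratios (each at most $1-N^{-k}$, giving $(1-N^{-k})^{pN^n}$), apply a union bound over the $\binom{n}{k}N^k$ pairs $(I,v)$, and handle the limit $N^n\to\infty$ by the same two-regime case analysis. The only cosmetic difference is your use of $\exp(-pN^{n-k})$ in place of the paper's logarithmic case discussion, which changes nothing substantive.
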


Here \textbf{(A)} makes rigorous the common assumption that having a training set that features examples for diverse enough combinations of factors is sufficient for reconstructing all combinatorial generators $\mathbf G$. 

Note that in this work we \textbf{do not discuss} \begin{enumerate}
    \item what features of the pre-trained encodings $\mathcal G\to\mathcal D$ and what requirements on the compositional structure of $\mathcal G$ would allow to actually guarantee this assumption,
    \item alternatives for a rigorous definition of what is meant by "to good accuracy" in the statement of assumption \textbf{(A)}.
\end{enumerate}
Once the above points are settled and defined, we can replace \textbf{(A)} by an explicit requirement. Since however these points seem like an ambitious whole research line and are far beyond the scope of this work, we leave the definition of "to good accuracy" and the proof of reconstruction to future works. 

Thus, \textbf{introducing \textbf{(A)} allows us to prove fully rigorous statements such as Thm. \ref{thm:dtraingood}, in absence of a full theory of compositional reconstruction}.

Nevertheless, we make the following remarks:
\begin{itemize} 
\item There is a natural trade-off between assumptions on how big $\mathcal D_{train}$ should be, vs. how complicated the compositional reconstruction should be allowed to be for reconstruction to be feasible: the richer set of such combinations we require $\mathcal D_{train}$ to contain (thus making more restrictive assumptions on $\mathcal D_{train}$), the lower is the requirement for our encoding framework and for the compositional structure for \textbf{(A)} to hold. 
\item It seems natural to conjecture that if assumption \textbf{(A)} does not hold for large value of $k$, then there is no hope for the compositional reconstruction to be achievable. Thus lowering the requirement on "good enough accuracy" to make \textbf{(A)} hold seems the only avenue of future research.
\end{itemize}

Theorem \ref{thm:recovercomp} is a consequence of below Theorem \ref{thm:dtraingood} that proves that assumption \textbf{(A)} holds with probability tending to $1$ as $|\mathcal G|\to\infty$ and $|\mathcal D_{train}|/|\mathcal G|=p$ is bounded away from zero. What Theorem \ref{thm:dtraingood} says is that for large sets of generators, it will become overwhelmingly unlikely that $\mathcal D_{train}$ does not contain enough data to reconstruct the generators.

\subsection{Observing $\mathcal D_{train}$ allows to observe finite statistics of the factors with high probability}\label{app:finitestat}
Our first result for this section is that when $\mathcal D_{train}\subseteq \mathcal D$ is chosen uniformly at random amongst sets of size $p|\mathcal D|$, for $|\mathcal D|=|\mathcal G|$ large enough, we find that with overwhelming probability all possible values of each of the factors $G_1,\dots,G_n$ are achieved by some elements of $\mathcal D_{train}$. We state the result in Proposition \ref{prop:dtraingood}, whose proof in our view contains the main ideas useful also for the full result of Theorem \ref{thm:dtraingood}, which can be considered our main technical result for this section. 

The first result is as follows:
\begin{proposition}
    \label{prop:dtraingood}
    Assume that for $n,N\ge 2$ we have $\mathcal G = \{\mathbf G=[G_1,\dots,G_n]:\ (\forall i\le n) 1\le G_i\le N\}=[1:N]^n$ and that $\mathsf{GenX}:\mathcal G\to\mathcal X$ is an injective function with image $\mathcal D$ as above. Denote by $\mathcal G_{train}:=\mathsf{GenX}^{-1}(\mathcal D_{train})$ the generating factors related to the training set, and assume that $\mathcal D_{train}\subset \mathcal D$ is chosen uniformly at random amongst subsets of cardinality $p|\mathcal D|$, for some $p\in(0,1)$ such that $pN^n$ is an integer. Then 
    \begin{equation}
        \label{eq:probadtrain}
        \mathbb P\{(\exists i\le n)(\exists j\le N)(\forall \mathbf G=[G_1,\dots,G_n]\in \mathcal G_{train})\: G_i\neq j\} \leq nN \left(1-\frac1N\right)^{pN^n},
    \end{equation}
    in particular at fixed $p$, the above event has probability tending to zero as $N^n\to\infty$. 
\end{proposition}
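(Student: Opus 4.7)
The plan is to reduce the claim to a union bound over $nN$ simple avoidance events and then bound each one by a standard hypergeometric computation. First I would define, for each $i\le n$ and $j\le N$, the event $E_{i,j}:=\{(\forall \mathbf G\in\mathcal G_{train})\ G_i\neq j\}$, so that the event in \eqref{eq:probadtrain} is exactly $\bigcup_{i,j}E_{i,j}$. A union bound then reduces the task to proving $\mathbb P(E_{i,j})\le (1-1/N)^{pN^n}$ for each fixed pair $(i,j)$.

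Next I would compute $\mathbb P(E_{i,j})$ by counting. The set $A_{i,j}:=\{\mathbf G\in\mathcal G:G_i=j\}$ has cardinality exactly $N^{n-1}$ by independence of the coordinates of $\mathbf G$, and $E_{i,j}$ is the event that a uniformly random $pN^n$-element subset of the $N^n$-element population $\mathcal G$ avoids $A_{i,j}$ entirely. This gives the hypergeometric expression
\begin{equation*}
\mathbb P(E_{i,j}) \;=\; \frac{\binom{N^n-N^{n-1}}{pN^n}}{\binom{N^n}{pN^n}} \;=\; \prod_{\ell=0}^{pN^n-1}\frac{N^n-N^{n-1}-\ell}{N^n-\ell}.
\end{equation*}
I would then apply the elementary per-factor inequality $\frac{M-k-\ell}{M-\ell}\le \frac{M-k}{M}$, valid for $0\le \ell\le M-k$, with $M=N^n$ and $k=N^{n-1}$. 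This expresses the fact that sampling without replacement makes avoidance \emph{less} likely than its with-replacement analogue, and factor-by-factor it yields $\mathbb P(E_{i,j})\le (1-1/N)^{pN^n}$. The boundary regime $p>1-1/N$ is automatic, since the upper binomial then vanishes and $\mathbb P(E_{i,j})=0$. Combining with the union bound closes the first claim.

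For the asymptotic statement at fixed $p$, I would invoke $(1-1/N)^{pN^n}\le \exp(-pN^{n-1})$, giving the overall upper bound $nN\exp(-pN^{n-1})$. This decays to zero as $|\mathcal G|=N^n\to\infty$ in every regime: if $N$ is fixed and $n\to\infty$ the exponential kills the linear-in-$n$ prefactor, while if $n$ is fixed and $N\to\infty$ the decay is doubly exponential in $N$, again dominating the $nN$ factor.

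I do not expect serious obstacles; the argument is elementary counting plus a union bound. The only step that needs care is stating the hypergeometric vs.\ with-replacement comparison in the correct direction and justifying it via the per-factor inequality above, since on a first pass one might be tempted to model the draws as i.i.d.\ coordinate-by-coordinate, which would not match the uniform-subset sampling model assumed in the statement.
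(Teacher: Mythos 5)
Your proposal is correct and follows essentially the same route as the paper's proof: a union bound over the $nN$ avoidance events, the hypergeometric ratio of binomial coefficients written as a telescoping product, the per-factor bound by $1-\tfrac1N$, and the same two-regime asymptotic discussion (your uniform bound $nN\exp(-pN^{n-1})$ is a slightly cleaner way to phrase the paper's case analysis, but the idea is identical).
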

    Thus, in an equivalent formulation of the last part of the above proposition, we have that in the limit of large $|\mathcal G|=N^n$, the probability that each generating factor $G_i$ takes all of its possible values on some generated element of $\mathcal G_{train}$ tends to $1$.
\begin{proof}
    The possible choices of $\mathcal G_{train}\subseteq \mathcal G$ of cardinality $p|\mathcal G|=pN^n$ are $\tfrac{(N^n)!}{(pN^n)!((1-p)N^n)!}$. Then, calling $A_{ij}$ the set of elements $\mathbf G\in \mathcal G_{train}$ such that $G_i\neq j$, we find that $|A_{ij}|=N^{n-1}(N-1)=N^n\tfrac{N-1}{N}$, so that the number of possible choices of $\mathcal G_{train}\subseteq A_{ij}$ of cardinality $pN^n$ is given by $\tfrac{(N^n\tfrac{N-1}{N})!}{(pN^n)!(N^n(\tfrac{N-1}{N}-p))!}$. Now we bound the probability of the event on the left hand side in \eqref{eq:probadtrain} via a union bound over the $Nn$ possible choices of $A_{ij}$ for $1\le i\le n,1\le j\le N$, so that
    
    \begin{eqnarray*}
    \mathbb P\left\{\begin{array}{l}\exists i\le n,\ \exists j\le N,\\\forall \mathbf G=[G_1,\dots,G_n]\in \mathcal G_{train},\\ G_i\neq j\end{array}\right\} &\leq& nN\frac{(pN^n)!((1-p)N^n)!}{(N^n)!}\frac{(N^n\tfrac{N-1}{N})!}{(pN^n)!(N^n(\tfrac{N-1}{N}-p))!}\\
    &=&nN\frac{N^n\frac{N-1}N}{N^n}\frac{N^n\frac{N-1}N -1}{N^n-1}\cdots\frac{N^n\frac{N-1}{N} - pN^n+1}{N^n-pN^n+1}\\
    &\leq&nN\left(1-\frac1N\right)^{pN^n},
    \end{eqnarray*}
    
    in which in the first line we used a union bound together with the computations of combinatorial coefficients from above, in the second line we performed a simplification and reordering, and in the third line we used the fact that $1-1/N=(N^n(N-1)/N)/N^n > (N^n(N-1)/N - i)/(N^n-i)$ for all $1\leq i< pN^n-1$. 

    Finally, considering the limit $N^n\to \infty$ of the upper bound in \eqref{eq:probadtrain}, we have two cases: (1) if $N\geq 2$ stays bounded and $n\to\infty$, then we can bound $nN(1-1/N)^{pN^n}\leq C_1 n C_2^{C_3^n}$ with constants $C_2\in (0,1)$ and $C_3>1$, and by taking logarithms we check that the quantity tends to zero; (2) if $N\to \infty$ and $n\geq 2$, and in this case $(1-1/N)^{pN^n}\sim e^{-pN^{n-1}}$ and again by taking logarithms we find that the quantity tends to zero.
\end{proof}

In fact Prop. \ref{prop:dtraingood} is gives us guarantees that it will be possible to observe all options for each $G_i$ in the training set, however this may not be sufficient for the main aim of recovering the compositional structure of $\mathcal G$. However we next prove, in an extension of Prop. \ref{prop:dtraingood}, that actually we will also observe with high probability \textbf{any possible combination of finitely many factors}:

\begin{theorem}
    \label{thm:dtraingood}
    Under the same hypotheses on $\mathcal G, \mathsf{GenX}, \mathcal G_{train}, p$ as in Prop. \ref{prop:dtraingood}, for any $k\leq n$ we have the following.
    \begin{align}
        \label{eq:probadtraincomb}
        \mathbb P\left\{\left.\begin{array}{l}\exists i_1<\dots<i_k\le n\\\exists j_1,\dots,j_k\le N\\
        \forall \mathbf G=[G_1,\dots,G_n]\in \mathcal G_{train}\end{array}\right. (G_{i_1},\dots,G_{i_k})\neq (j_1,\dots,j_k)\right\} \nonumber\\
        \leq \frac{n!}{k!(n-k)!}N^k \left(1-\frac{1}{N^k}\right)^{pN^n},
    \end{align}
    which for fixed $p, k$ tends to zero as $N^n\to\infty$.
\end{theorem}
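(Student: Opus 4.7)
The strategy is to extend the argument of Proposition \ref{prop:dtraingood} by enlarging the family of "bad events": instead of singletons $(i,j)$, we consider tuples $(I,J)$ where $I = (i_1,\dots,i_k)$ with $i_1 < \dots < i_k$ and $J = (j_1,\dots,j_k) \in [1:N]^k$. Define the bad event $B_{I,J}$ as the event that no $\mathbf{G} \in \mathcal{G}_{train}$ satisfies $(G_{i_1},\dots,G_{i_k}) = (j_1,\dots,j_k)$. The probability in \eqref{eq:probadtraincomb} is exactly $\mathbb{P}\bigl(\bigcup_{I,J} B_{I,J}\bigr)$, and since the number of choices of $I$ is $\binom{n}{k}$ and of $J$ is $N^k$, a union bound will yield the prefactor $\tfrac{n!}{k!(n-k)!}N^k$ in the claim.

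Next I would bound $\mathbb{P}(B_{I,J})$ for a single fixed $(I,J)$. The $\mathbf{G} \in \mathcal{G}$ that are compatible with $B_{I,J}$ (i.e., that do not match $J$ on the coordinates $I$) number $N^n - N^{n-k} = N^n\bigl(1 - N^{-k}\bigr)$, since exactly $N^{n-k}$ elements have the prescribed values on positions $I$. Thus
\begin{equation*}
\mathbb{P}(B_{I,J}) \;=\; \frac{\binom{N^n(1 - N^{-k})}{pN^n}}{\binom{N^n}{pN^n}} \;=\; \prod_{i=0}^{pN^n - 1} \frac{N^n(1 - N^{-k}) - i}{N^n - i}.
\end{equation*}
Applying the same telescoping bound as in Prop. \ref{prop:dtraingood} (which rests on the elementary inequality $a/b \geq (a-i)/(b-i)$ whenever $0 < a \leq b$ and $0 \leq i < a$, here with $a = N^n(1 - N^{-k})$ and $b = N^n$) yields $\mathbb{P}(B_{I,J}) \leq \bigl(1 - N^{-k}\bigr)^{pN^n}$. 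Combining with the union bound gives \eqref{eq:probadtraincomb}.

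To conclude the vanishing of the bound as $N^n \to \infty$ at fixed $p$ and $k$ (with $k < n$), I would split into the same two regimes as in Prop. \ref{prop:dtraingood}. If $N$ stays bounded and $n \to \infty$, then $\binom{n}{k}N^k$ grows only polynomially in $n$, while $N^n$ grows exponentially and hence $\bigl(1 - N^{-k}\bigr)^{pN^n}$ decays doubly-exponentially in $n$, and taking logarithms confirms the product tends to zero. If instead $N \to \infty$ with $n \geq k+1$, then $\bigl(1 - N^{-k}\bigr)^{pN^n} \sim \exp\bigl(-pN^{n-k}\bigr)$, which is super-polynomial in $N$ and dominates $\binom{n}{k}N^k$.

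The main obstacle is essentially bookkeeping: correctly counting the $\mathbf{G}$'s that avoid a prescribed $k$-pattern and verifying that the telescoping inequality extends with $1/N^k$ in place of $1/N$. Both are direct adaptations of the argument already carried out for $k=1$, so the proof is expected to follow without new conceptual ingredients.
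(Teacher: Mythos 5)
Your proposal is correct and follows essentially the same route as the paper: a union bound over the $\binom{n}{k}N^k$ "bad" index/value patterns, the count $N^n - N^{n-k}$ of factor combinations avoiding a fixed pattern, the telescoping ratio bound giving $(1-N^{-k})^{pN^n}$ per pattern, and the same two-regime limit analysis. Your explicit restriction to $k<n$ in the vanishing argument is if anything slightly more careful than the paper's statement, and matches the intended use in assumption \textbf{(A)}.
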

\begin{proof}
    As mentioned before, roughly the overall proof strategy is the same as for Prop. \ref{prop:dtraingood}. We introduce the sets $A^{j_1,\dots,j_k}_{i_1,\dots,i_k}=A^{\mathbf j}_{\mathbf i}$ and observe that the number of choices of $i_1<\cdots<i_k\leq n$ is $\tfrac{n!}{k!(n-k)!}$ while the number of choices of $j_1,\dots,j_k\leq N$ is $N^k$. Thus the number of sets of the form $A^{\mathbf j}_{\mathbf i}$ is $N_A:=\tfrac{N^k n!}{k!(n-k)!}$. Furthermore, each set $A^{\mathbf j}_{\mathbf i}$ only imposes the constraint of entries of indices $i_1,\dots, i_k$ from a $\mathbb G$ to avoid one combination of values $\vec j$, and thus it has cardinality $|A^{\mathbf j}_{\mathbf i}|=N^{n-k}(N^k-1)$. Then the number of choices of $\mathcal G_{train}\subseteq A^{\mathbf j}_{\mathbf i}$ of cardinality $pN^n$ is just the combinatorial coefficient $\tfrac{(N^{n-k}(N^k-1))!}{(pN^n)!(N^{n-k}(N^k-1-pN^k))!}$ and by union bound we then find by the same reasoning as in Prop. \ref{prop:dtraingood},
    \begin{eqnarray*}
        (\text{left-hand side of \eqref{eq:probadtraincomb}}) &\leq& N_A \frac{(pN^n)!((1-p)N^n)!}{(N^n)!}\frac{(N^{n-k}(N^k-1))!}{(pN^n)!(N^{n-k}(N^k-1-pN^k))!}\\
        &=&N_A \frac{N^n - N^{n-k}}{N^n}\frac{N^n - N^{n-k}-1}{N^n-1}\cdots\frac{N^n - N^{n-k} -pN^n +1}{N^n-pN^n+1}\\
        &\leq &N_A \left(1-\frac{1}{N^k}\right)^{pN^n}=\frac{n!}{k!(n-k)!}N^k \left(1-\frac{1}{N^k}\right)^{pN^n}.
    \end{eqnarray*}
    This proves \eqref{eq:probadtraincomb}. In order to show that the bound tends to zero as $N^n\to\infty$ at fixed $p>0, k\leq n$, we first note that $\tfrac{n!}{k!(n-k)!}\leq n^k/k!$ and thus (as $k$ is assumed fixed) it suffices to show that
    \[
    n^k N^k\left(1-\frac{1}{N^k}\right)^{pN^n}\to 0 \quad\text{as}\quad N^n\to \infty.
    \]
    Here again, we can proceed with the discussion of the two cases ($N$ bounded and $n\to\infty$, or $N\to\infty$) exactly as in the end of the proof of Prop. \ref{prop:dtraingood}, to conclude.
\end{proof}

\begin{remark}[Extension of Prop. \ref{prop:dtraingood} and Theorems \ref{thm:dtraingood} and \ref{thm:recovercomp} to general $\mathcal G$]
    \label{rmk:generalfactors}
    If, for a choice of $n\geq 2$ and $N_1,\dots,N_n\geq 2$ we have $\mathcal G=\{\mathbf G=[G_1,\dots,G_n]:\ (\forall i\leq n) 1\le G_i\le N_i\}$, then $|\mathcal G|=N_1\cdots N_n$ and by carefully following the same steps as in the proof of Prop. \ref{prop:dtraingood} we find
        \begin{equation}
        \label{eq:probadtrain2}
        \mathbb P\left\{\begin{array}{c}\exists i\le n,\quad\exists j\le N_i,\\ \forall \mathbf G=[G_1,\dots,G_n]\in \mathcal G_{train},\\
        \text{it holds }G_i\neq j\end{array}\right\} \leq \sum_{i=1}^nN_i\left(1-\frac1N_i\right)^{pN_1\dots N_n},
    \end{equation}
    Again, this tends to zero for fixed $p$ as $|\mathcal G|=N_1\dots N_n\to \infty$: to prove it, we can proceed as follows. In a first case, if $n>2$ stays bounded, then we can discuss separately for each $1\le i\le n$ whether $N_i\to\infty$ or whether it stays bounded, as done in the proof of Prop. \ref{prop:dtraingood}, and in either case the corresponding summand on the left of \eqref{eq:probadtrain} tends to zero, as in case (2) from the proof of Prop. \ref{prop:dtraingood}. Otherwise, $n\to \infty$ and we can proceed analogously to case (1) from the end of proof of Prop. \ref{prop:dtraingood}. 

    For the extension of Thm. \ref{thm:dtraingood} to general $\mathcal G$ we get the bound
        \begin{align}
        \label{eq:probadtraincomb2}
        \mathbb P\left\{\left.\begin{array}{l}\exists i_1<\dots<i_k\le n\\\exists j_1,\dots,j_k\le N\\\forall \mathbf G=[G_1,\dots,G_n]\in \mathcal G_{train}\end{array}\right. (G_{i_1},\dots,G_{i_k})\neq (j_1,\dots,j_k)\right\} \nonumber\\
        \leq \sum_{1\le i_1<\cdots<i_k\le n}N_{i_1}\cdots N_{i_k} \left(1-\frac{1}{N_{i_1}\cdots N_{i_k}}\right)^{pN_1\cdots N_n},
    \end{align}
    and the strategy for its proof and for the proof that this goes to zero if $k,p$ fixed and $|\mathcal G|=N_1\cdots N_n\to\infty$ are a direct extension of the above.

    Theorem \ref{thm:recovercomp} is a direct consequence of assumption \textbf{(A)} and Theorem \ref{thm:dtraingood}, and thus it also directly extends to the general case.
\end{remark}
\section{Proof that final state reconstruction allows for wider message choice freedom}
We first recall the setting. Consider two fixed maps 
\[
R^\star:V^C\to\mathcal X, \quad S^\star: \mathcal D\subseteq \mathcal X \to V^C,
\]
and let $\mathcal R_{[i]}:V^C\to V^i$ be the restriction to the first $i$ tokens, i.e. in previous notation, for $m=(m_1,\dots,m_C)\in V^C$ we set $\mathcal R_{[i]}(m):=m_{[i]}=(m_1,\dots,m_i)$.

For $R^\star, S^\star$ as above and $S:\mathcal D\subseteq \mathcal X\to V^C$ we set, for $x\in\mathcal D$
\[
d_i^S(x):= d_{\mathcal X}(R^\star(S^\star(x)_{[i]})), R^\star(S(x)_{[i]}))=d_{\mathcal X}(R^\star(\mathcal R_{[i]}(S^\star(x)))), R^\star(\mathcal R_{[i]}(S(x)))).
\]
We also defined the following spaces:
\begin{equation}\label{eq:fullvsfinal-app}
    \mathcal S_{full}^\epsilon:=\Big\{S:\ \mathbb E_x\Big[\frac1C\sum_{i=1}^Cd_i^S(x)\Big] <\epsilon\Big\},\quad \mathcal S_{final}^\epsilon:= \left\{S:\ \mathbb E_x\left[d_C^S(x)\right] <\epsilon\right\}.
\end{equation}

Our main result here is the following:
\begin{proposition}\label{prop:fullvsfinal-app}
With the above notations, we have the following

1. For $i<C$ and any choice of $M,\epsilon>0$, the restrictions on $i$-token sub-messages of $S(x)$ by requiring $S\in \mathcal S_{full}^M$ are more restrictive than those imposed by requiring $S\in\mathcal S_{final}^\epsilon$, i.e. we have $\mathcal R_{[i]}(\mathcal S_{full}^M)\subset \mathcal R_{[i]}(\mathcal S_{final}^\epsilon)$.

2. For the full message case $i=C$ in general we have $\mathcal S_{full}^{\epsilon}\subseteq \mathcal S_{final}^{C\epsilon}$. If we further assume that $\mathbb E_x[d_j^S(x)]$ is non-increasing in $j$ then we have the stronger inclusion $\mathcal S_{full}^\epsilon\subseteq \mathcal S_{final}^\epsilon$.
\end{proposition}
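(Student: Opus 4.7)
The proposition splits into two claims and I would attack the easier part (item 2) first since it reduces to elementary inequalities, then handle the prefix-image inclusion (item 1) by an explicit construction.

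For item 2, I would use only that each $d_j^S(x)$ is non-negative. The inclusion $\mathcal S_{full}^{\epsilon}\subseteq \mathcal S_{final}^{C\epsilon}$ follows from the pointwise bound $d_C^S(x)\le \sum_{j=1}^C d_j^S(x)$; taking expectations gives $\mathbb E_x[d_C^S(x)]\le C\cdot\mathbb E_x\bigl[\tfrac{1}{C}\sum_{j=1}^C d_j^S(x)\bigr]<C\epsilon$, which is exactly the $\mathcal S_{final}^{C\epsilon}$ condition. Under the additional monotonicity hypothesis $\mathbb E_x[d_j^S(x)]$ non-increasing in $j$, one has $\mathbb E_x[d_C^S(x)]\le \mathbb E_x[d_j^S(x)]$ for all $j$, so averaging over $j$ gives $\mathbb E_x[d_C^S(x)]\le \tfrac{1}{C}\sum_{j=1}^C\mathbb E_x[d_j^S(x)]<\epsilon$, i.e. $S\in\mathcal S_{final}^{\epsilon}$.

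For item 1, the plan is constructive. Take any $S\in\mathcal S_{full}^{M}$ and let $f:=\mathcal R_{[i]}\circ S$ be its $i$-prefix function. To show $f\in\mathcal R_{[i]}(\mathcal S_{final}^{\epsilon})$, I would build a sender $S'$ agreeing with $S$ on the first $i$ tokens and choose the remaining $C-i$ tokens pointwise in $x$ so as to push $R^\star(S'(x))$ toward $R^\star(S^\star(x))$. Formally, for each $x$ I would set $S'(x)_{[i]}=S(x)_{[i]}$ and then pick $(a_{i+1},\dots,a_C)\in V^{C-i}$ minimizing $d_{\mathcal X}\bigl(R^\star(S^\star(x)),R^\star(S(x)_{[i]},a_{i+1},\dots,a_C)\bigr)$. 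By construction $\mathcal R_{[i]}\circ S'=f$, and it remains to argue $\mathbb E_x[d_C^{S'}(x)]<\epsilon$. The relevant leverage is the non-injectivity/abundance of preimages of $R^\star$ discussed immediately before the proposition: on the discrete space $V^C$, many distinct suffixes typically land on (or near) the desired image of $R^\star$, so the suffix-completion problem is solvable for each fixed prefix.

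The main obstacle is precisely making this existence step rigorous. Nothing in the statement of the proposition assumes that every prefix $(v_1,\dots,v_i)\in V^i$ admits a suffix $(a_{i+1},\dots,a_C)$ with $R^\star(v_1,\dots,v_i,a_{i+1},\dots,a_C)$ close to a prescribed target, and in full generality this can fail. The clean way to close the argument is to add a standing hypothesis amounting to a mild surjectivity/flexibility property of $R^\star$ on its trailing coordinates, for instance that the suffix-parameterized family $\{R^\star(v_1,\dots,v_i,\cdot)\}_{(a_{i+1},\dots,a_C)\in V^{C-i}}$ $\epsilon$-covers $R^\star(\mathcal M)$ uniformly in the prefix. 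Under any such hypothesis the construction above gives $\mathbb E_x[d_C^{S'}(x)]<\epsilon$ for the chosen $\epsilon$, completing the inclusion; without it the proposition should be stated as a conditional result, which is the presentation I would adopt.
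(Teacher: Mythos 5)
Your item 2 is exactly the paper's proof: the pointwise bound $\frac1C d_C^S(x)\le\frac1C\sum_{j=1}^C d_j^S(x)$ gives $\mathcal S_{full}^\epsilon\subseteq\mathcal S_{final}^{C\epsilon}$ after taking expectations, and under the monotonicity hypothesis $\mathbb E_x[d_C^S(x)]\le\min_j\mathbb E_x[d_j^S(x)]\le\frac1C\sum_j\mathbb E_x[d_j^S(x)]<\epsilon$, giving $\mathcal S_{full}^\epsilon\subseteq\mathcal S_{final}^\epsilon$. Nothing to add there.

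On item 1 the two arguments diverge. The paper dispatches it in a single sentence: the requirement $S\in\mathcal S_{final}^\epsilon$ imposes ``no restriction on intermediate sub-messages $S(x)_{[i]}$,'' so the achievable $i$-prefixes under the final-state loss are treated as unrestricted and hence contain $\mathcal R_{[i]}(\mathcal S_{full}^M)$. You instead try to certify each prefix of an $S\in\mathcal S_{full}^M$ by explicitly completing it with a suffix that drives $R^\star(S'(x))$ toward $R^\star(S^\star(x))$, and you observe that this completion step needs a reachability/covering property of $R^\star$ in its trailing coordinates that is nowhere assumed. That observation is legitimate, and it is precisely the assumption hidden in the paper's one-liner: since $R^\star$ processes the whole message sequentially, its final output does depend on the prefix, and for an $R^\star$ that is injective with well-separated images and for small $\epsilon$, any $S\in\mathcal S_{final}^\epsilon$ must agree with $S^\star$ on most inputs, so its prefixes are not free and the literal set inclusion can fail; meanwhile $\mathcal R_{[i]}(\mathcal S_{full}^M)$ for large $M$ contains every prefix map. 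So your conditional formulation (adding a suffix-flexibility hypothesis on $R^\star$) is the rigorous version of what the paper asserts informally: the paper's proof buys brevity by reading item 1 as ``the final-state loss does not test sub-message reconstructions,'' whereas your construction makes explicit the extra hypothesis needed to turn that reading into a genuine inclusion of prefix images.
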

\begin{proof}
    Item 1 is direct, as no restriction on intermediate sub-messages $S(x)_{[i]}=\mathcal R_{[i]}(S(x))$ by the requirement $S\in \mathcal S_{final}^\epsilon$.

    For item 2, we observe that 
    \[
    \frac1C d_C^S(x)\leq \frac1C \sum_{i=1}^Cd_i^S(x),
    \]
    and by taking expectation we find that $\mathbb E_x\left[\frac1C \sum_i d_i^S(x)\right]<\epsilon$ implies $\mathbb E_x[d_C^S(x)]\leq C  \epsilon $, and thus $ \mathcal S^{\epsilon}_{full}\subseteq \mathcal S^{C\epsilon}_{final}$, which gives the first part of the statement.

    For the second part of item 2 we note that if $\mathbb E_x[d_i^S(x)]\geq \mathbb E_x[d_C^S(x)]$ for all $i\leq C$ we get
    \[
    \mathbb E_x\left[d_C^S(x)\right] \leq \min_{1\leq j\leq C}\mathbb E_x[d_j^C(x)] \leq \frac1C \sum_{i=1}^C\mathbb E_x\left[d_i^S(x)\right] =\mathbb E_x\left[\frac1C \sum_{i=1}^Cd_i^S(x)\right],
    \]
    and thus $\mathcal S_{full}^{\epsilon}\subseteq \mathcal S_{final}^\epsilon$ as desired.
\end{proof}

The freedom of exploration ensured by item 1 of the proposition is our main motivation: by only constraining the reconstruction of the full message, we allow much more freedom for the encoding strategy of $S(x)$ at earlier sub-messages at each iteration.

Item 2 of Prop. \ref{prop:fullvsfinal-app} shows that the two losses in the definition of $\mathcal S_{full}, \mathcal S_{final}$ give comparable guarantees. At the beginning of iterations we will need to use the general bound with $\epsilon/C$, and as $R^\star$ will have been trained on more and more messages during the iteration, the hypothesis of $\mathbb E_x[d_i^S(x)]$ non-increasing will become true allowing for the sharper bound with the same value of $\epsilon$, as $R^\star$ (an LSTM taking as input successive message tokens) will become efficient in using sub-messages of pairs $(m, m')=(S(x), S^\star(x))$ to gradually distill information allowing to distinguish them for a large set of $S,x$.

\section{Regularization to counteract Holistic Encodings}\label{app:nhe}
In this section, we include rigorous results useful to justify the regularization of our \textbf{Imitation} phase, aimed at balancing \textbf{compressibility} and \textbf{efficiency}.

This corresponds to the balance between compositional and parsimonious communication, a general theme in classical IL literature (see e.g. \cite{Kirby2002, KIRBY201587}). It seems useful to specify more precise definitions in our generative setting, in order to help justify our architecture choices.

Our aim in this section is threefold:
\begin{enumerate}
    \item In \S \ref{app:defs} we set rigorous definitions of Holistic Encodings within the setting of \S \ref{sec:compgen} and \S \ref{sec:siggame}.
    \item In \S \ref{app:kolmo} we check that \textbf{compressibility} is required for lowest Kolmogorov Complexity encodings.
    \item We verify that optimizing for just \textbf{expressivity} and \textbf{efficiency} on our training set, will produce Holistic Encodings, and thus is not satisfactory for reconstructing $
    \mathcal G$ (see \S \ref{app:eehe});
    \item We give a few possible options on how to enrich the loss functions in our IL setup, aimed at establishing a better balance between \textbf{efficiency} and \textbf{compressibility} (see \S \ref{app:nhereg}).
\end{enumerate}

\subsection{Definitions of Holistic and Compositional encodings, and role of the \textbf{compressibility} condition}\label{app:defs}
In the classical IL setup \cite{Kirby2002, KIRBY201587}, a holistic system of communication is defined as an encoding $\mathcal G \to \mathcal M$ that maps $\mathcal G\ni\mathbf G\mapsto \mathbf{G'}\in\mathcal M$ injectively but disregards the compositional structure of the "language" to be encoded, i.e. in our case, without respecting the structure of $\mathbf G$. In our generation task formalization, a difference is that the relevant encoding map now factors through the generation map as $\mathcal G\stackrel{\mathsf{GenX}}{\longrightarrow} \mathcal X \stackrel{S}{\longrightarrow}\mathcal M$. Now the generation map $\mathsf{GenX}$ is considered fixed, thus the definition refers to only $S$ and is the following: 

\begin{definition}\label{def:holistic}
In the setting from \S \ref{sec:compgen} and \S \ref{sec:siggame}, sender $S:\mathcal X\to \mathcal M$ determines a \emph{holistic encoding} if it satisfies \textbf{expressivity} but the messages associated to data generated as $\mathsf{GenX}(\mathbf G)$ do not have a compositional structure in terms of generating factors $\mathbf G$, as specified by Definition \ref{def:compenc} below. 
\end{definition}

Note that, as in \cite{Kirby2002,KIRBY201587},  Definition \ref{def:holistic} of holistic encodings refers to "respecting compositional structure", which requires another definition. The definition of compositional encodings \emph{crucially depends on how complex we need/want to assume $\mathbf G$ to be}. In this paper, for the sake of concreteness, we restricted to the simple case of $\mathbf G$ being formed as an element of a product space of finite cardinality, i.e., an $n$-ple of independent factors $\mathbf G=[G_1,\dots, G_n]$ in which each $G_i$ can take finitely many values. We thus use a straightforward toy definition of compositional encodings, requiring that each of the $G_i$ is separately encoded as a string and these strings are concatenated together to give the message that encodes $\mathbf G$:

\begin{definition}
    \label{def:compenc}
    In the setting of \S \ref{sec:compgen} and \S \ref{sec:siggame}, assume that generating factors $\mathbf G= [G_1,\dots,G_n]$ have $G_i\in[1:N_i]$ for some $N_1,\dots,N_n\geq 2$. Then sender $S:\mathcal X\to\mathcal M=V^C$ will be defined to \emph{respect the structure of generating factors $\mathbf G$} if for each $1\le i\le n$ there exist an injective mapping $E_i:[1:N_i]\to V^{C_i}$, for which 
    \[
    \forall \mathbf G\in\mathcal G,\ S(\mathsf{GenX}(\mathbf G))=E_1(G_1)\cdots E_n(G_n)\quad \text{ and }\quad \sum_{i=1}^n C_i=C.
    \]
\end{definition}

The \textbf{compressibility} condition on the sender-receiver protocol just requires that the protocol be not holistic, and thus it is fully specified via definitions \ref{def:holistic} and \ref{def:compenc}. Furthermore, Definition \ref{def:compenc} fully specifies in which way the map $\mathcal G\to\mathcal M$ successfully approximates an isomorphic reconstruction of $\mathcal G$, if it satisfies conditions \textbf{compressibility} and \textbf{expressivity}.

\subsection{Encodings optimizing for only \textbf{expressivity} and \textbf{efficiency} have Kolmogorov Complexity much higher than compositional ones}\label{app:kolmo}
By the classical learning theory principle, if our sender generates encodings with lower complexity, these should be easier to learn by the receiver, in the sense that the receiver will tend to generalize with more ease form good accuracy on the training set to similar accuracy on the test set.

As a manageable measure of complexity we will use Kolmogorov Complexity (KC), and we verify that in our setting, encodings satisfying \textbf{expressivity} and \textbf{efficiency} will necessarily have higher KC than compositional ones. In particular, the receiver will have more trouble reconstructing from such encodings, justifying our push to enforce the \textbf{compressibility} condition.

Encodings of $\mathcal D_{train}$ satisfying \textbf{expressivity} and \textbf{efficiency} include all injective maps 
\begin{equation}
    \label{eq:holenc}
    \mathsf{enc}:\mathcal D_{train}\to V^{C_\star}, \quad C_\star = \lceil \log_{|V|}|\mathcal D_{train}|\rceil,
\end{equation}
in which $\lceil a \rceil$ denotes the smallest integer larger than real number $a$. For a random map $\mathsf{enc}$ as above, the encoding to be Kolmogorov-irreducible, so that the expected KC of a random $\mathsf{enc}$ would be up to constant factor the one corresponding to  explicitly enumerating the $|\mathcal D_{train}|$ strings of length $C_\star$ in alphabet $V$ that describe the encodings of each element of the training set. Each such string thus requires $\approx C_\star \log_2|V|$ bits. Thus if $U$ is the uniform distribution over injective maps \eqref{eq:holenc} then we have
\begin{equation}
    \label{eq:holisticck}
    \mathbb E_{\mathsf{enc}\sim U}[KC(\mathsf{enc})] \approx |\mathcal D_{train}| C_\star \log_2|V| \approx|\mathcal D_{train}|\log_2|\mathcal D_{train}|.
\end{equation}
On the other hand, to specify a compositional encoding $\mathsf{comp}$ as in Definition \ref{def:compenc} requires only assigning a string of length $C_i$ in alphabet $V$ for every generating factor $G_i$, where we can take $C_i=\lceil\log_{|V|}N_i\rceil$. Thus such string requires $\approx \log_2N_i$ bits, and the whole encoding has KC given by 
\begin{equation}
    \label{eq:compck}
    KC(\mathsf{comp}) \approx \sum_{i=1}^n \log_2N_i=\log_2|\mathcal D_{train}|.
\end{equation}
If we assume that $|\mathcal D_{train}|/|\mathcal D|=p\in(0,1)$ and we take $|\mathcal D|\to\infty$, then we get
\begin{equation}
    \label{eq:ckcompare}
    \frac{\text{r.h.s. of \eqref{eq:holisticck}}}{\text{r.h.s. of \eqref{eq:compck}}}=\frac{p|\mathcal D|(\log_2 p + \log_2|\mathcal D|)}{\log_2|\mathcal D|} \sim p |\mathcal D| \quad \text{as }|\mathcal D|\to\infty.
\end{equation}
The computation \eqref{eq:ckcompare} shows that
\begin{proposition}
    \label{prop:kccompare}
    Under the same hypotheses on $\mathcal G$ as in the previous section, consider the limit $|\mathcal D|\to \infty$ and assume that $p=|\mathcal D_{train}|/|\mathcal D|$ stays bounded away from zero in the limit. Then the expected Kolmogorov complexity of encodings satisfying \textbf{expressivity} and \textbf{efficiency} becomes overwhelmingly larger than the one of compositional encodings:
    \[
    \frac{\mathbb E_{\mathsf{enc}\sim U}[KC(\mathsf{enc})]}{KC(\mathsf{comp})}\sim p|\mathcal D|\to\infty.
    \]
\end{proposition}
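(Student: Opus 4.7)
The plan is to combine the two Kolmogorov complexity estimates already laid out in the text and verify that the asymptotic ratio is the claimed $p|\mathcal D|$. The key inputs are the bound \eqref{eq:holisticck} on the expected KC of a uniformly random injective encoding and the bound \eqref{eq:compck} on the KC of a compositional encoding. With these in hand, substituting $|\mathcal D_{train}|=p|\mathcal D|$ and simplifying gives
\[
\frac{\mathbb E_{\mathsf{enc}\sim U}[KC(\mathsf{enc})]}{KC(\mathsf{comp})} \sim \frac{p|\mathcal D|(\log_2 p + \log_2|\mathcal D|)}{\log_2|\mathcal D|},
\]
and since $\log_2 p$ is a fixed constant while $\log_2|\mathcal D|\to\infty$, this ratio is asymptotically $p|\mathcal D|$. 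The conclusion $p|\mathcal D|\to\infty$ follows directly since $p$ is bounded away from zero.

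The main work is to justify carefully the estimate \eqref{eq:holisticck} for the expected KC of a uniformly random injective map $\mathsf{enc}:\mathcal D_{train}\to V^{C_\star}$. I would argue this via the classical incompressibility principle: the number of such injective maps is
\[
N_{inj} = \frac{|V^{C_\star}|!}{(|V^{C_\star}|-|\mathcal D_{train}|)!} \geq (|V^{C_\star}|-|\mathcal D_{train}|)^{|\mathcal D_{train}|},
\]
and by the choice $C_\star = \lceil \log_{|V|}|\mathcal D_{train}|\rceil$ we have $|V^{C_\star}|\geq |\mathcal D_{train}|$, so $\log_2 N_{inj}= \Theta(|\mathcal D_{train}|\log_2|\mathcal D_{train}|)$. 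A standard counting argument (at most $2^{k}$ programs of length $\leq k$) shows that for any constant $c$, at least a fraction $1-2^{-c}$ of elements of any finite set $S$ have KC at least $\log_2|S|-c$. Applying this to the set of injective maps and integrating gives
\[
\mathbb E_{\mathsf{enc}\sim U}[KC(\mathsf{enc})] \geq \log_2 N_{inj} - O(1) = \Theta(|\mathcal D_{train}|\log_2|\mathcal D_{train}|),
\]
matching the claim (the upper bound via direct tabulation of the map is trivial).

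For the denominator, the compositional encoding $\mathsf{comp}$ from Definition \ref{def:compenc} is specified by the tuple $(E_1,\dots,E_n)$ of factor-wise encodings plus a fixed concatenation routine. Each $E_i$ is a table of $N_i$ strings of length $C_i = \lceil\log_{|V|}N_i\rceil$, thus representable in $O(N_i \log_2 N_i)$ bits; together with the routine overhead, this yields $KC(\mathsf{comp}) = O\!\left(\sum_{i=1}^n N_i \log_2 N_i\right)$. If one furthermore assumes that each $E_i$ is itself the canonical base-$|V|$ representation (as is natural in this setting), then $E_i$ is computable from $N_i$ alone, and the cost collapses to $O(\sum_i \log_2 N_i)=O(\log_2|\mathcal D_{train}|)$, recovering \eqref{eq:compck}.

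The main obstacle is therefore the rigorous justification of the incompressibility step: KC is defined only up to an additive constant depending on the chosen universal machine, and one must be careful that the hidden constants do not interact badly with the asymptotics. Since both sides of the ratio differ by multiplicative factors of order $|\mathcal D|$, however, any fixed additive $O(1)$ or even $O(\log|\mathcal D|)$ slack is absorbed, and the asymptotic statement $\sim p|\mathcal D|$ is robust. The remaining two cases in the limit $|\mathcal D|=N_1\cdots N_n\to\infty$ (whether $n$ stays bounded with some $N_i\to\infty$, or $n\to\infty$) are both handled by the same ratio estimate, so no separate case analysis is needed.
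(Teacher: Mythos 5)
Your proposal is correct and follows essentially the same route as the paper: the paper's proof of Proposition~\ref{prop:kccompare} is precisely the ratio computation \eqref{eq:ckcompare} applied to the estimates \eqref{eq:holisticck} and \eqref{eq:compck}, which the paper justifies only heuristically (``Kolmogorov-irreducible'' random encodings, canonical per-factor tables), whereas you additionally make the numerator rigorous via the standard counting/incompressibility argument over the $N_{inj}$ injective maps and flag the canonical-encoding assumption needed for the denominator. Only a cosmetic caveat: your intermediate bound $N_{inj}\ge(|V^{C_\star}|-|\mathcal D_{train}|)^{|\mathcal D_{train}|}$ degenerates when $|\mathcal D_{train}|$ is (close to) an exact power of $|V|$, but the conclusion $\log_2 N_{inj}=\Theta(|\mathcal D_{train}|\log_2|\mathcal D_{train}|)$ survives by using $N_{inj}\ge |\mathcal D_{train}|!$ and Stirling.
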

\subsection{Encodings that optimize \textbf{efficiency} under \textbf{expressivity} over $\mathcal D_{train}$ are holistic}\label{app:eehe}

First, we clarify the definitions:
\begin{itemize}
\item The requirement of "\textbf{expressivity} over $\mathcal D_{train}$" means that the encodings are injective over $\mathcal D_{train}$. 
\item The requirement of "\textbf{efficiency} over $\mathcal D_{train}$" means that the encoding maps $\mathcal D_{train}$ into $V^{C_\star}$ for the minimum value of $C_\star$.
\end{itemize}
Similarly to previous subsection (see \eqref{eq:holenc}), keeping in mind that $|\mathcal D_{train}|=p|\mathcal D|=p|\mathcal G|$, with fixed $p\in(0,1)$, we have
\begin{equation}
    \label{eq:cstaragain}
    C_\star=\lceil \log_{|V|}|\mathcal D_{train}|\rceil =\lceil \log_{|V|}p|\mathcal G|\rceil \leq \lceil\log_{|V|}|\mathcal G|\rceil - \lfloor |\log_{|V|}p|\rfloor.
\end{equation}
For compositional encodings of $\mathbf G=[G_1,\dots,G_n]$ the encoding will require to set apart separate message sub-strings of length $C_i=\lceil \log_{|V|}N_i\rceil$ for factor $G_i$ (see Def. \ref{def:compenc}), thus the required length for the encoding will be 
\begin{equation}
    \label{eq:compcodelength}
    C_0:=\sum_{i=1}^n\lceil \log_{|V|}N_i\rceil \stackrel{(\star)}{\geq} \left\lceil\sum_{i=1}^n\log_{|V|}N_i\right\rceil = \lceil \log_{|V|}|\mathcal G|\rceil\stackrel{(\star\star)}{\geq}C_\star + \lfloor |\log_{|V|}p|\rfloor,
\end{equation}
in which $(\star \star)$ follows from \eqref{eq:cstaragain}.

We now discuss when the inequality signs in \eqref{eq:compcodelength} are sharp or not, since sharp inequality $C_0>C_\star$ implies our claim that optimizing for \textbf{efficiency} under \textbf{expressivity} implies non-compositional (i.e., holistic) encodings:
\begin{enumerate}
    \item If we use relatively small $|V|\lesssim 10$ (assuming commonly used values $p\lesssim 0.1$), we will have
    \begin{equation}
        \label{eq:Vpcompare}
        |V|\leq\frac1p,
    \end{equation}
    and thus $\log_{|V|} p\leq -1$, and thus the second inequality $(\star\star)$ in \eqref{eq:compcodelength} is guaranteed to be sharp, showing the desired strict inequality $C_0>C_\star$ independently of the sizes of factor ranges $N_i, i=1,\dots,n$.
    \item If the gaps between $\log_{|V|}N_i$ and the lowest integer larger or equal to it, sum to a value of at least $1$, i.e., 
    \begin{equation}
        \label{eq:sumgaps}
        \sum_{i=1}^n\left(\lceil\log_{|V|}N_i\rceil - \log_{|V|}N_i\right)\geq 1,
    \end{equation}
    then $(\star)$ becomes sharp, again guaranteeing $C_0>C_\star$ as desired. This could be probable to happen in our setting, and becomes more likely for large $|V|>1/p$ for which \eqref{eq:Vpcompare} fails, \textbf{given that the $N_i$ are supposed to be unknown}, so it is likely that for large $|V|>1/p$ for  a few of the $N_i$ the gaps in \eqref{eq:sumgaps} are nontrivial. For example, \eqref{eq:sumgaps} holds true for $|V|>1/p\geq 10$ if, say, at least two of the $G_i$ are binary factors, so that $N_i=2$.
\end{enumerate}
In summary we have the following:
\begin{proposition}
    \label{prop:eeueholistic}
    Under the same running assumptions over $\mathcal G$, $p$, we have that optimizing only for efficiency of encodings \textbf{efficiency} under injectivity constraints \textbf{expressivity} over $\mathcal D_{train}$ is guaranteed to produce holistic (i.e., non-compositional) encodings if one or both of the conditions \eqref{eq:Vpcompare} and \eqref{eq:sumgaps} holds.  
\end{proposition}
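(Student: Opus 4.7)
The plan is to show that any encoding that is simultaneously compositional in the sense of Definition \ref{def:compenc} and efficient (i.e., using the minimum message length $C_\star$) is impossible when either \eqref{eq:Vpcompare} or \eqref{eq:sumgaps} holds. Because \textbf{expressivity} forces injectivity on $\mathcal D_{train}$ and \textbf{efficiency} fixes the message length to exactly $C_\star$, while any compositional encoding requires length at least $C_0$, the proposition reduces to establishing the strict inequality $C_0 > C_\star$ under each of the two stated conditions.

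First, I would assemble the two inequalities already stated in the excerpt. From \eqref{eq:cstaragain} we have $C_\star \leq \lceil\log_{|V|}|\mathcal G|\rceil - \lfloor|\log_{|V|}p|\rfloor$, and from \eqref{eq:compcodelength} we have $C_0 \geq \lceil\log_{|V|}|\mathcal G|\rceil$, so chaining them yields $C_0 \geq C_\star + \lfloor|\log_{|V|}p|\rfloor$. This single inequality will serve as the common backbone for both cases.

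Next, I would split into the two assumed conditions. Under \eqref{eq:Vpcompare}, namely $|V| \leq 1/p$, one has $|\log_{|V|}p| \geq 1$, hence $\lfloor|\log_{|V|}p|\rfloor \geq 1$, which immediately gives $C_0 \geq C_\star + 1 > C_\star$ regardless of how the factor sizes $N_i$ are distributed. Under \eqref{eq:sumgaps} instead, I would sharpen step $(\star)$ in \eqref{eq:compcodelength}: since $\sum_i\lceil\log_{|V|}N_i\rceil$ and $\lceil\sum_i\log_{|V|}N_i\rceil$ are both integers, and their difference equals the sum of fractional gaps $\sum_i(\lceil\log_{|V|}N_i\rceil - \log_{|V|}N_i)$ rounded down (modulo a small correction by the fractional part of $\sum_i\log_{|V|}N_i$), the hypothesis that this sum of gaps is at least $1$ forces $C_0 \geq \lceil\log_{|V|}|\mathcal G|\rceil + 1 \geq C_\star + 1$, again giving strict inequality.

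The main obstacle is not conceptual but bookkeeping: one must carefully handle floor/ceiling identities, in particular the inequality $\lceil a - b\rceil \geq \lceil a\rceil - \lfloor b\rfloor$ used implicitly in \eqref{eq:cstaragain}, and the subtle interaction between the fractional parts of the individual $\log_{|V|}N_i$ and of their sum that appears in case two. Once these arithmetic details are pinned down, the conclusion that no efficient injective encoding can simultaneously satisfy Definition \ref{def:compenc} follows immediately from $C_0 > C_\star$, so any such encoding must fail compositionality and thus be holistic by Definition \ref{def:holistic}.
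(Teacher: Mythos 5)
Your proposal is correct and takes essentially the same route as the paper: the proposition is exactly the summary of the discussion showing $C_0>C_\star$, obtained by chaining \eqref{eq:cstaragain} and \eqref{eq:compcodelength} and then treating the two cases \eqref{eq:Vpcompare} (which makes $\lfloor|\log_{|V|}p|\rfloor\geq 1$) and \eqref{eq:sumgaps} (which sharpens the step $(\star)$, in your version via integrality of $C_0$). The only small slip is in your side remark: the ceiling/floor identity implicit in \eqref{eq:cstaragain} is $\lceil a-b\rceil\leq\lceil a\rceil-\lfloor b\rfloor$, not $\geq$, but since your argument uses the paper's displayed bounds directly this does not affect the proof.
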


\subsection{Possible choices of encoded message regularization}\label{app:nhereg}
As seen above, if we just optimize for \textbf{efficiency} and \textbf{expressivity} then we are likely to get holistic encodings (Prop. \ref{prop:eeueholistic}), which will then make it hard for the receiver to generalize the decoding strategy outside the training set due to having overwhelmingly higher expected complexity (Prop. \ref{prop:kccompare}). 

This proves that it is important to incentivize condition \textbf{compressibility}, i.e., to push encoding strategies away from overly efficient holistic encodings. In this section we discuss a few approaches to do this in practice, explaining our choice of \textbf{compressibility} regularization.

\begin{enumerate}
    \item \textbf{Entropy maximization.} 
    
    \textbf{Background and standard approach.} Introduced by  \cite{entropymax} for use in policy-based reinforcement learning, entropy maximization is known to enforce exploration in deterministic policies and avoid early convergence to single output choices for learned policies. Specifically, in a learning setup with input $x$, output $y$ and policy parameters $W$, the authors define the following estimator for fixed output value $\xi$:
    \begin{equation}
    \label{eq:entropyest}
    h(\xi, W, x) = -\ln Pr(y=\xi|W, x),
    \end{equation}
    such that if we take the expectation over $\xi$ we get:
    \begin{equation}\label{eq:entropyapp}
    \mathbb{E}[h(\xi, W, x)|W, x] = -\sum_{\xi}Pr(y=\xi|W, x)\ln Pr(y=\xi|W, x),
    \end{equation}
    which is the entropy of the network. Therefore $h$ is an unbiased estimator of the entropy. 
    They also note that, if $w_i$ and $x_i$ with $i\in U$ represent the weights and pattern of single neurons in a feed forward network, $\xi_i$ the $i_{th}$ position in a $n_u$-tuple $\xi$, and let $g_i$ be the probability density describing $y_i$, i.e:
    \begin{equation}
    \label{eq:networkprob}
    g_i(\xi,w^i, x^i) = Pr(y_i=\xi|w^i,x^i),
    \end{equation}
    then, for any feed forward network,
    \begin{equation} Pr(y=\xi|W, x) = \prod_{i\in U}g_i(\xi_i, w^i, x^i),
    \end{equation}
    and therefore
    \begin{equation}
    -\ln Pr(y=\xi|W, x) = -\sum_{i\in U}g_i(\xi_i, w^i, x^i) = h(y, W, X).
    \end{equation}

     In fact, this method can be extended onto sequence prediction models, and even IL, such as in \cite{ren2020Compositional}. 
     
     \textbf{Naive adaptation to our setting is not practical.} We take $x\in\mathcal X$ and $\xi\in \mathcal M=V^C$ and recall that in our notation $m_t$ indicates the $t$-th token of message $m$, and $m_{[t]}$ is the notation for the initial segment $m_1,m_2,\dots,m_t$, so that in particular $m_{[C]}=m$.
     
     Importantly, in our setting, to avoid high variance in estimation due to the large message space (see \S \ref{app:Rita}), we approximate the probability distribution of the RNN-produced message token at position $t\leq C$, denoted here $p(m_t|x,m_{[t-1]})$, as a $\delta$ distribution over the vocabulary $V$ by using the Gumbel-softmax trick:   \begin{equation}
    \label{eq:hestimator}
    (\exists v\in V),\quad Pr(m_t \mid x, W, m_{[t-1]}) \approx 
    \begin{dcases}
        1, & \text{ if }m_t=v \\
        0, & \text{ if } m_t\in V \setminus \{v\},
    \end{dcases}
    \end{equation}
    and therefore:
    \begin{equation}
    \label{eq:prouductC}
    Pr(m=\xi|W, x) = \prod_{t=1}^C Pr(m_t=\xi_t| W, x, m_{[t-1]}).
    \end{equation}
    Note that due to \eqref{eq:hestimator}, for all but a single one of the $|V|^C$ possible values of $\xi$ the quantity \eqref{eq:prouductC} is $\approx 0$, and thus $h$ becomes a poor estimator for the entropy of $p(m_t|x,m_{[t-1]})$, as we get
    \begin{equation}
    h(m, W, x) = -\ln \sum_{t=1}^CPr(m_t=\xi_t| x,W, m_{[t-1]}) \approx \infty.
    \end{equation}    
    In particular, it becomes clear that \emph{optimizing this $h$ becomes unfeasible with gradient methods.} 
    
    \textbf{Our proposal for regularization.} Our model can be seen as a different approximation of the entropy of the message. Instead of using the policy weights as probability values, we use the law of large numbers to estimate the probability of each message. For a batch $\mathcal B\subset \mathcal X$ consisting of $N_{batch}$ training examples and for given sender encoding protocol $S_\phi$:
    \begin{equation}
    \label{eq:llnestimate}
    \text{For a sufficiently large $N_{batch}$:} \quad 
    \Pr(m =\xi\mid x, \phi) \approx \frac{|\{x\in \mathcal B \mid S_{\phi}(x) = \xi\}|}{N_{batch}}\\
    \end{equation}    

    Yet again, due to the exponential size of the message space, utilizing this probability to estimate the entropy is still not practical. \textbf{We will assume approximate independence between symbol positions}, which allows to get the stronger approximation:
    \begin{equation}
    \label{eq:logprob}
    \Pr(m_t =v\mid x, \phi) \approx \frac{|\{x\in \mathcal B \mid S_{\phi}(x)_t = v\}|}{N_{batch}},
    \end{equation}
    as well as the additivity of entropy to be applied to \eqref{eq:entropyapp}:
    \begin{eqnarray}
        \label{eq:addentropy}
        H(Pr(m|x,\phi))&:=&-\sum_{\xi\in V^C}Pr(m=\xi\mid x,\phi)\ln Pr(m=\xi\mid x,\phi)\nonumber\\
        &=&-\sum_{t=1}^C\sum_{v\in V}Pr(m_t=v\mid x,\phi)\ln Pr(m_t=v\mid x,\phi).
    \end{eqnarray}
    Then we can perform the following computations, in which we let $m,m'$ be two i.i.d. copies of $m$:
    \begin{align}\label{eq:hbound}
        H(Pr(m\mid x,\phi)) 
        &\stackrel{\text{\eqref{eq:addentropy}}}{=} 
        -\sum_{t=1}^C\sum_{v\in V} Pr(m_t=v\mid x,\phi) \ln Pr(m_t=v\mid x,\phi) \nonumber\\
        &\overset{\substack{\text{($-x\ln x \geq x - x^2$}\\\text{for $x \leq 1$)}}}{\geq} 
        \sum_{t=1}^C\sum_{v\in V} \left[ Pr(m_t=v\mid x,\phi) - Pr(m_t=v\mid x,\phi)^2 \right] \nonumber\\
        &\stackrel{(\star)}{=} 
        \sum_{t=1}^C \left( \sum_{v,v'\in V} Pr(m_t=v\mid x,\phi) Pr(m_t'=v'\mid x,\phi) \right. \nonumber\\
        &\quad \left. - \sum_{v\in V} Pr(m_t=v\mid x,\phi)^2 \right) \nonumber\\
        &= 
        \sum_{t=1}^C \sum_{v\neq v'\in V} Pr(m_t=v\mid x,\phi) Pr(m_t'=v'\mid x,\phi) \nonumber\\
        &= 
        \sum_{t=1}^C Pr(m_t \neq m_t') 
        = \mathbb{E}[d_H(m, m')] \nonumber\\
        &\stackrel{\text{\eqref{eq:logprob}}}{\approx} 
        \mathbb{E}_{x,x'\in\mathcal{B}}[d_H(S_\phi(x), S_\phi(x'))],
    \end{align}
    
    % \begin{eqnarray}\label{eq:hbound}
    %     H(Pr(m\mid x,\phi))&\stackrel{\text{\eqref{eq:addentropy}}}{=}&-\sum_{t=1}^C\sum_{v\in V}(Pr(m_t=v\mid x,\phi))\ln(Pr(m_t=v\mid x,\phi))\nonumber\\&\geq &\sum_{t=1}^C\sum_{v\in V}(Pr(m_t=v\mid x,\phi) - Pr(m_t=v\mid x,\phi)^2)\quad\text{(because $-x\ln x\geq x-x^2$ for $x\leq 1$)}\nonumber\\&\stackrel{(\star)}{=}&\sum_{t=1}^C\left(\sum_{v,v'\in V}Pr(m_t=v\mid x,\phi)Pr(m_t'=v'\mid x,\phi) - \sum_{v\in V}(Pr(m_t=v\mid x,\phi)^2)\right)\nonumber\\&=&\sum_{t=1}^C\sum_{v\neq v'\in V}Pr(m_t=v\mid x,\phi)Pr(m_t'=v'\mid x,\phi)=\sum_{t=1}^CPr(m_t\neq m_t')=\mathbb E[d_H(m,m')]\nonumber\\
    %     &\stackrel{\text{\eqref{eq:logprob}}}{\approx}&\mathbb E_{x,x'\in\mathcal B}[d_H(S_\phi(x), S_\phi(x')],\quad\quad\text{with $\mathbb E$ taken w.r.t. the uniform measure over }\mathcal B.
    % \end{eqnarray}
    In the above step $(\star)$ we used that
    \begin{eqnarray*}
        \sum_{v\in V}Pr(m_t=v\mid x, \phi)&=&1=\left(\sum_{v\in V}Pr(m_t=v\mid x, \phi)\right)^2\\&=&\sum_{v,v'\in V}Pr(m_t=v\mid x,\phi)Pr(m_t'=v'\mid x,\phi).
    \end{eqnarray*}
    The consequence of bound \eqref{eq:hbound} is that maximizing the pairwise Hamming distance between predicted messages in a batch also maximizes a lower bound to the approximate entropy of the message space, justifying the choice of optimizing this quantity (which is easier to i include in practice) for favoring \textbf{compressibility}.
    %\begin{eqnarray}\label{eq:hbound}
    %    \frac{H(Pr(m\mid x,\phi))}{C}&\stackrel{\text{\eqref{eq:addentropy}}}{=}&\frac1C\sum_{t=1}^C\sum_{v\in V}f(Pr(m_t=v\mid x,\phi))\nonumber\\&\stackrel{\text{\eqref{eq:logprob}}}{\approx}&\sum_{v\in V}\frac1C\sum_{t=1}^Cf\left(\frac{|\{x\in\mathcal B\mid S_\phi(x)_t=v\}|}{N_{batch}}\right)\nonumber\\
    %    &\stackrel{\text{(Jensen ineq.)}}{\geq}&\sum_{v\in V}f\left(\frac1{C}\sum_{t=1}^C\frac{|\{x\in\mathcal B\mid S_\phi(x)_t=v\}|}{N_{batch}}\right)\nonumber\\
    %    &=&\sum_{v\in V}f\left(\frac1{C}\sum_{t=1}^C\left(1- \frac{|\{x\in \mathcal B \mid S_{\phi}(x)_t \neq v\}|}{N_{batch}}\right)\right)\nonumber\\
    %    &\stackrel{\text{(double counting)}}{=}&\sum_{v\in V}f\left(1-\frac{1}{CN_{batch}}\sum_{x\in\mathcal B}d_H(S_{\phi}(x),  v^C)\right)
    %\end{eqnarray}
    %where $v^C$ is the "constant" message consisting of symbol $v$ repeated $C$ times.
    %
    %Now note that $f(x)=-x\log x$ for $x\leq1$ is increasing and convex, thus in order to minimize the entropy lower bound \eqref{eq:hbound} we need the distances $d_H(S_\phi(x),v^C)$ to be large when averaged over the batch, and the maximum
    %\begin{eqnarray}\label{eq:hbound}
    %    \text{Note that, as $d_H$ is a normalized distance on $\mathcal{B}$:}\\
    %     0 \leq \frac{1}{N_{batch}}\sum_{x\in \mathcal{B}} d_H(S_\phi(x), v^C) \leq 1 \\
    %     0 \leq \frac{1}{CN_{batch}}\sum_{x\in \mathcal{B}} d_H(S_\phi(x), v^C) \leq \frac{1}{C} \\
    %    \text{For $0 \leq x \leq \frac{1}{2}$, $f(1-x) \leq f(x)$, so for $C\geq 2:$}\\   
    %\end{eqnarray}

    \item \textbf{Contrastive Learning losses.} In our framework, PDM can be seen as a form of contrastive loss, which we now show how to connect to standard contranstive losses from previous works.
    
    Contrastive loss functions such as the triplet loss \cite{triplet} and NT-Xtent \cite{chen2020simpleframeworkcontrastivelearning} work by defining positive and negative image pairs (usually through image augmentation), and aiming to minimize the distance between embeddings of positive pairs and maximize the distance between negative pairs. Specifically, instead of purposefully sampling negative samples for every $x$, NT-Xtent produces a single pair of samples for each $x$ in a mini-batch $N$ by augmenting each image. Each image in the batch therefore has $2N-1$ negative pairs. They use the cosine similarity $sim(u, v) = u^Tv/\Vert u\Vert \Vert v\Vert$ and some temperature value $\tau$ to define the following loss between positive pair embeddings $z_i$ and $z_j$:
    \begin{equation}
    \label{eq:ntxtent}
    \ell_{i, j} = -\log \frac{\exp(sim(z_i, z_j)/\tau)}{\sum^{2N}_k\exp(sim(z_i, z_k)/\tau)} 
    \end{equation}
    For simplicity, let $\tau = 1$ and thus
    \begin{equation}
    \ell_{i, j} = - sim(z_i, z_j)+\log \sum^{2N}_k\exp(sim(z_i, z_k)).
    \end{equation}    
    Here, we apply the same ideas but take this loss over an unaugmented batch, i.e the only application of this loss is for $z_i$ against itself. In this case it reduces to:
    \begin{equation}
    \label{eq:ntxtent}
    \ell_{i} = -1+ \log \sum^{N}_k\exp(sim(z_i, z_k))
    \end{equation}   

    In our representations, the embedding vectors from a batch $z_i, i=1,\dots,N$ are $V\times C$ matrices whose columns are approximately one-hot vectors that represent the tokens of a message. For one-hot matrices $z_i,z_k$ of this form, the dot product $z_i^Tz_k$ is equivalent to counting the amount of matching rows of $z_i,z_k$, i.e, the un-normalized Hamming distance of the associated message strings $m_i, m_k\in V^C$. Since strings have length $C$ and $\|z_i\|^2=z_i^Tz_i\simeq C$, we thus get:

    \begin{eqnarray*}
    \label{eq:hamm}
    sim(z_i, z_k) &=&  tr(z_i^Tz_k)/\Vert z_i\Vert \Vert z_k\Vert \simeq \frac{1}{C} \sum_{l=1}^{C}\mathds{1}[\arg\max_j((z_i)_{l,j}) = \arg\max_j((z_k)_{l,j})]\\
    &\simeq& 1-d_H(m_i,m_k), 
    \end{eqnarray*}
    in which $d_H$ is the (normalized by $1/C$) Hamming distance. We can replace this expression in the unaugmented NT-Xtent loss:
    \begin{eqnarray*}
    \label{eq:hamm}
    \ell_i &=& -1+ \log \sum_{k=1}^N \exp (1-d_H(m_i, m_k)) \\
    &=&-1+N-\log \sum_{k=1}^N \exp ( d_H(m_i, m_k))\\
    &\overset{\substack{\text{(Jensen}\\\text{ inequality)}}}{\leq}& -1 + N - \log N - \frac1N \sum_{k=1}^N d_H(m_i,m_k).
    \end{eqnarray*}
    Then, taking the mean loss over the full batch we get:
    \begin{equation}
    \label{eq:hamm}
    \frac{1}{N}\sum^{N}_{i=1}\ell_i \leq -1+ N-\log N -\frac{1}{N^2}\sum^{N}_{i=1}\sum^{N}_{k=1}d_H(m_i,m_k).
    \end{equation}
    We then include in the loss only the last term, i.e. the average of $d_H(m_i,m_k)$, which is the only term that actually depends on the batch elements, which forms an approximate upper bound on the NT-Xtent loss. 
    
    Note that our approximation of $sim(z_i, z_k)$ by $1-d_H(m_i,m_k)$, on which the above is based, becomes sharp when the $z_i$ have actual one-hot entries, thus the approximation becomes better as our model has high accuracy. Furthermore, Jensen's inequality, responsbile for the inequality between the term $\ell_i$ from NT-Xtent and our metric, becomes closer to equality messages $m_1,\dots,m_N$ become uniformly spread across $V^C$, which is the case for minimizers of our metric as well. This means that, at least heuristically speaking, we can consider our metric as having similar minimizers to the NT-Xtent regularization. 
    
    A precise matching lower bound to \eqref{eq:hamm} is left for future work.
\end{enumerate}

\section{Justification of our imitation step starting from classical probabilistic reconstruction games}\label{app:Rita}

We start by recalling classical probabilistic reconstruction games, after which we adapt the formulation to the setting of generative tasks, and we show that with a few justified simplifications we arrive to our formulation from the main paper.

\subsection{Classical probabilistic reconstruction games}
Recall that in reconstruction games, the two agent types are a teacher and student (or speaker and listener, or sender and receiver), parameterized by respectively $\theta,\phi$. The game consists in the teacher observing a signal $x$ distributed according to a probability distribution $p$, from which it produces a randomized message $m$ whose distribution conditional on $x$ is obtained according to policy $\pi_\theta(\cdot | x)$. The student or listener, observes message $m$ and produces a reconstructed signal distributed according to a conditional probability $\rho_\phi(\cdot |m)$. The game dynamics can be modeled by stipulating that together, teacher and student minimize the log-likelihood loss function
\begin{align}
    \label{eq:ritaetal}
    \mathcal L_{\theta,\phi} = -\mathbb E_{x\sim p, m\sim \pi_\theta(\cdot | x)}\left[\log \rho_\phi(x|m)\right]
\end{align}
in which $x$ represents the signal and $m$ the message used to reconstruct it, the speaker produces a reinforcement learning policy $\pi_\theta(\cdot | x)$ and the listener produces a reconstruction function $\rho_\phi(x|m)$.

\subsection{Reconstruction games in generative tasks} 

While the above formulation \eqref{eq:ritaetal} is formalized using the log-likelihood between distributions, for our generative task, the reconstruction modeled slightly differently: the task is to optimize, with high probability, the reconstruction error, measured by a distance between the signal $x$ and its reconstruction $x'$, denoted $d(x,x')$. Here the notation for $x,x'$ and $d(x,x')$ can be (ab)used to represent two alternative settings: either (a) $x,x'$ are vectors in the autoencoder latent space and $d(x,x')$ is a discrepancy (e.g. $d(x,x')=|x-x'|^p$) between them or (b) $x, x'$ are reconstructed images and $d(x,x')$ is a discrepancy (e.g. based on some ad-hoc norm or otherwise) between them. Thus we have the probabilistic error function
\begin{align}
    \label{eq:reconstr1}
    \mathcal L_{int}^{\mathbb P}(\omega,\varphi):= \mathbb E_{x\sim p, m\sim \pi_\varphi(\cdot | x), x'\sim\rho_\omega(\cdot |m)}d(x,x').
\end{align}
The loss \eqref{eq:reconstr1} is akin to a $d$-based Wasserstein distance analogue of \eqref{eq:ritaetal}.

\subsection{The problem reduces to the case of a deterministic student} 
Note that when $d(x,x)=|x-x'|^2$ for some Euclidean norm (or more generally, when $d$ is a convex function of such norm), then the minimization of \eqref{eq:reconstr1} over distributions $\rho_\omega$ at fixed $m,\varphi$ must be achieved by 
\begin{align}
    \label{eq:minreconstr}
    x_m^\varphi\in\mathrm{argmin}_{x'} \mathbb E_{x\sim p_
    \varphi(\cdot|m)}d(x,x'),
\end{align}
where if the set of $x$ is finite then $p_\varphi$ is the probability defined by
\begin{align}
    \label{eq:pphi}
    p_\varphi(x'|m)=\frac{p(x') \pi_\varphi(m|x')}{\sum_{y}p(y)\pi_\varphi(m|y)},
\end{align}
and an analogous expression with integrals replacing the sum holds in general.

Under the above convexity hypotheses, the choice of $x_m^\varphi$ in \eqref{eq:minreconstr} is unique, and for $d(x,x')=|x-x'|^2$ it coincides with the barycenter of $p_\varphi$. In this case, $\rho_\omega(\cdot |m)$ can be taken to be the Dirac distribution concentrated at $x_m^\varphi$, and optimization of \eqref{eq:reconstr1} can be achieved by a deterministic reconstruction function $R_\omega(m)$, whose optimum value for given $\varphi$ will be $x_m^\varphi$ from \eqref{eq:minreconstr}. When reducing to optimizing only amongst deterministic $R_\omega$, the loss \eqref{eq:reconstr1} rewrites in the simplified form
\begin{align}
    \label{eq:reconstr2}
    \mathcal L_{int}(\omega,\varphi)=\mathbb E_{x\sim p, m\sim \pi_\varphi(\cdot | x)} d(x,R_\omega(m)).
\end{align}
\subsection{Straight-Trough Gumbel-Softmax (STGS) trick}

As in \cite{havrylov2017emergence} and subsequent works, the optimization of policy $\pi_\varphi$ (minimization over $\varphi$) can in theory be done via policy gradient optimization methods such as REINFORCE \cite{williams1992simple}, however, due to the combinatorial explosion of the number of messages this can be very compute-intensive if calculated directly, or produces high variance estimators. We thus follow the straight-trough Gumbel-softmax trick \cite{jang2017categoricalreparameterizationgumbelsoftmax}, in which the probabilistic interpretation of $\pi_\varphi$ is implicit in the network architecture, and the architecture itself is deterministic and directly differentiable. In this case, we can thus pass to the deterministic description as $m=S_\phi(x)$, replacing the probabilistic one $m\sim \pi_\varphi(\cdot |m)$, and we obtain the updated form of \eqref{eq:reconstr2} as
\begin{align}
    \label{eq:reconstr3}
    \mathcal L_{int}(\omega,\phi) = \mathbb E_{x\sim p}d(x, R_\omega(S_\phi (x))),
\end{align}
which for $d(x,x')=|x-x'|$ is very similar to our actual interaction loss \eqref{lossint}:
\begin{align}
    \label{eq:reconstr4}
    \mathcal L_{int}(\omega,\phi) = \mathbb E_{x\in\mathcal X}\left[|x- R_\omega(S_\phi (x))|\right].
\end{align}

\newpage

\section{Example Generation: Reconstructions}

\label{app:reconstruction}
The following reconstruction, Figure~\ref{fig:reconstructions}, examples were generated with the CELEBI model using PDM regularization with $\beta=1$ and $\lambda=1.5$. In each row, the leftmost image corresponds to the original input, while the subsequent images represent the predicted reconstructions $x_i = R_\omega(S_\phi(x)_{[1:i]})$ obtained from progressively longer message prefixes with the VAE backbone.

We observe that the reconstructions do not merely improve in pixel-level fidelity but rather exhibit semantic refinements across successive steps — variations in floor hue, object color, shape, or viewing angle. This progression suggests that the message space encodes the underlying generative factors $\mathcal{G}$ and supports a degree of compositional structure, where messages carry disentangled information.

\paragraph{Qualitative comparison across methods.}
To further assess the representational properties of CELEBI, we qualitatively compare reconstructions across the different methods proposed in this work (see Figures~\ref{fig:reconstructions-by-methods-1} and~\ref{fig:reconstructions-by-methods-2}). As illustrated in these figures, CELEBI reaches semantically faithful reconstructions significantly earlier in the decoding sequence than the baseline methods.

Moreover, the differences between reconstructions are more semantically pronounced than in baseline models, which often show subtler variations mainly related to texture or brightness. This qualitative evidence supports our hypothesis that CELEBI facilitates a more structured message space, enabling representations that are not only more compact but also more semantically coherent.

\begin{figure}[h!]
    \centering
    % \captionsetup[subfigure]{labelformat=empty}
    
    \begin{subfigure}[b]{\linewidth}
        \centering
        \includegraphics[width=\linewidth]{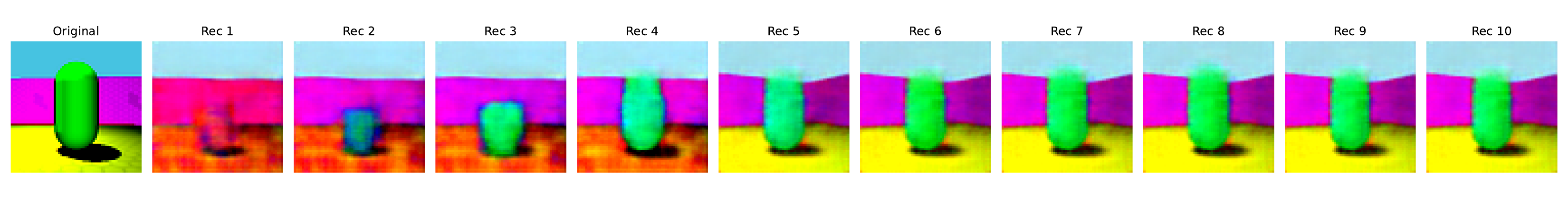}
        \caption{Example 1}
        \label{fig:rec1}
    \end{subfigure}
    
    \begin{subfigure}[b]{\linewidth}
        \centering
        \includegraphics[width=\linewidth]{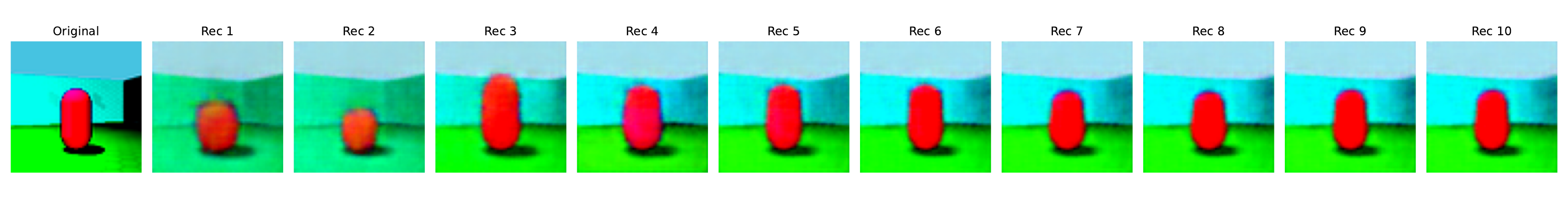}
        \caption{Example 2}
        \label{fig:rec2}
    \end{subfigure}
    
    \begin{subfigure}[b]{\linewidth}
        \centering
        \includegraphics[width=\linewidth]{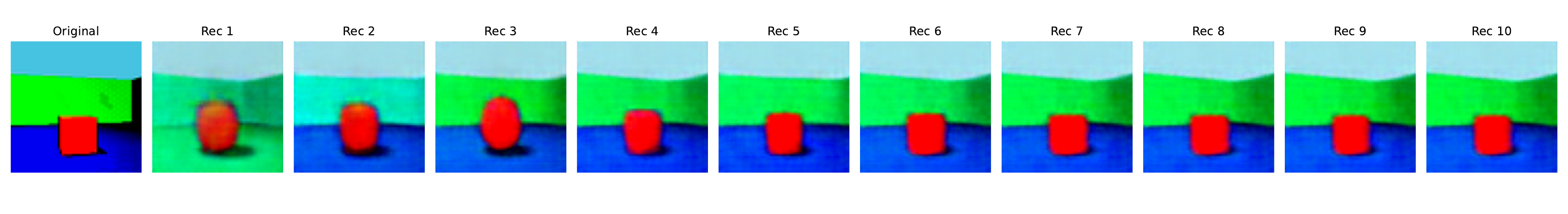}
        \caption{Example 3}
        \label{fig:rec3}
    \end{subfigure}

    \begin{subfigure}[b]{\linewidth}
        \centering
        \includegraphics[width=\linewidth]{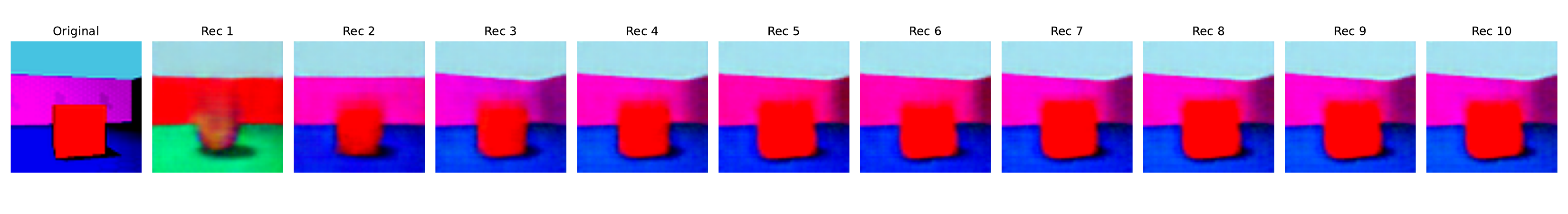}
        \label{fig:rec4}
        \caption{Example 4}
    \end{subfigure}

    \caption{Progressive image reconstructions obtained at each decoding step by the receiver as it processes successive symbols of the message.}
    \label{fig:reconstructions}
\end{figure}

% \subsection{Method comparison}\label{app:reconstruction}

\begin{figure}[h!]
    \centering
    % \captionsetup[subfigure]{labelformat=empty}
    
    \begin{subfigure}[b]{\linewidth}
        \centering
        \includegraphics[width=\linewidth]{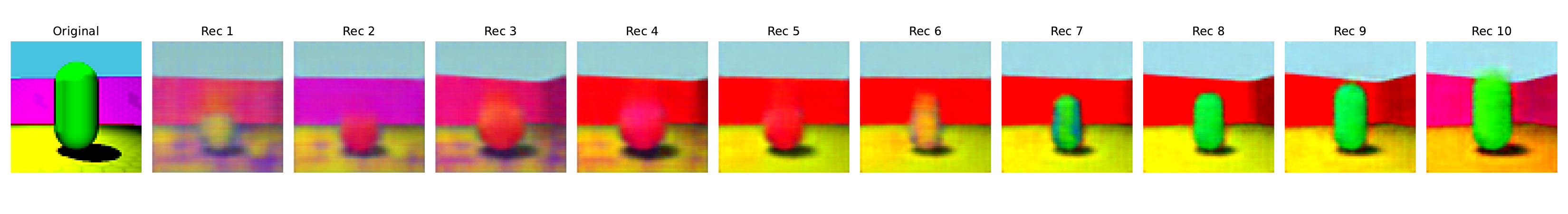}
        \caption{Reconstruction using the baseline model.}
        \label{fig:rec1}
    \end{subfigure}
    
    \begin{subfigure}[b]{\linewidth}
        \centering
        \includegraphics[width=\linewidth]{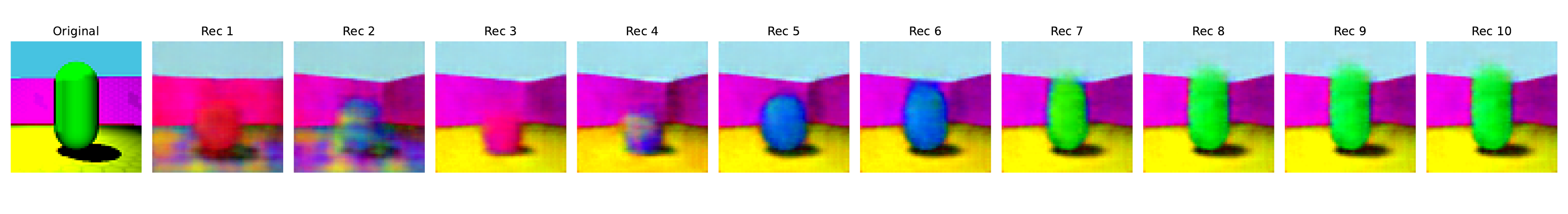}
        \caption{Reconstruction using partial decoding.}
        \label{fig:rec2}
    \end{subfigure}
    
    \begin{subfigure}[b]{\linewidth}
        \centering
        \includegraphics[width=\linewidth]{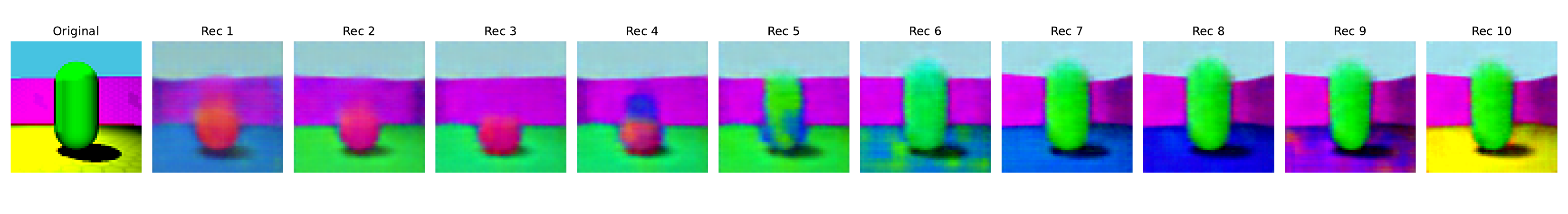}
        \caption{Reconstruction using final state imitation.}
        \label{fig:rec3}
    \end{subfigure}

    \begin{subfigure}[b]{\linewidth}
        \centering
        \includegraphics[width=\linewidth]{figures/rec1_fisi+pd+pdm.pdf}
        \label{fig:rec4}
        \caption{Reconstruction using PD+FiSI+PDM.}
    \end{subfigure}

    \caption{Reconstruction for the same example with different methods.}
    \label{fig:reconstructions-by-methods-1}
\end{figure}

\begin{figure}[h!]
    \centering
    % \captionsetup[subfigure]{labelformat=empty}
    
    \begin{subfigure}[b]{\linewidth}
        \centering
        \includegraphics[width=\linewidth]{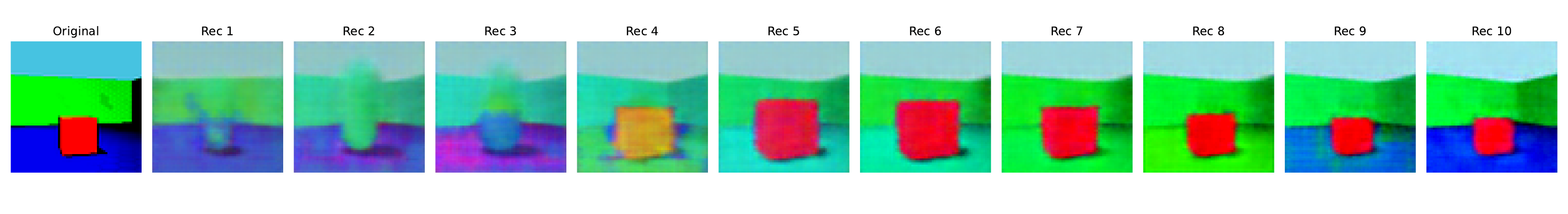}
        \caption{Reconstruction using the baseline model.}
        \label{fig:rec1}
    \end{subfigure}
    
    \begin{subfigure}[b]{\linewidth}
        \centering
        \includegraphics[width=\linewidth]{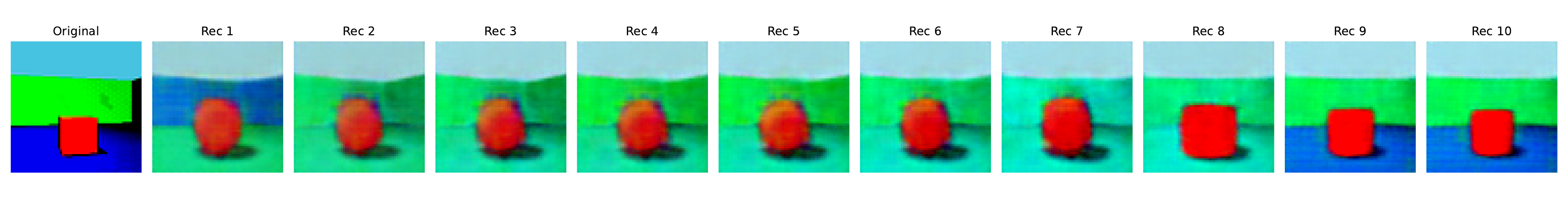}
        \caption{Reconstruction using partial decoding.}
        \label{fig:rec2}
    \end{subfigure}
    
    \begin{subfigure}[b]{\linewidth}
        \centering
        \includegraphics[width=\linewidth]{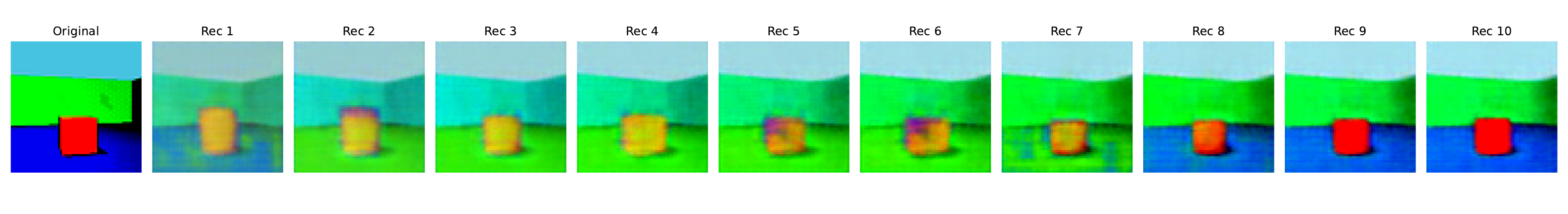}
        \caption{Reconstruction using final state imitation.}
        \label{fig:rec3}
    \end{subfigure}

    \begin{subfigure}[b]{\linewidth}
        \centering
        \includegraphics[width=\linewidth]{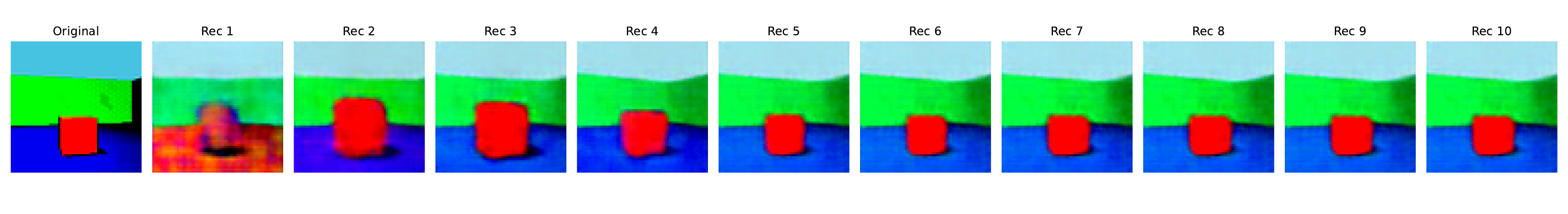}
        \label{fig:rec4}
        \caption{Reconstruction using PD+FiSI+PDM.}
    \end{subfigure}

    \caption{Reconstruction for the same example with different methods.}
    \label{fig:reconstructions-by-methods-2}
\end{figure}

\newpage
\section{Additional Language Metrics \label{app:variation}}

\begin{table}[h!]
\caption{Comparison of proposed methods using language variation metrics from \cite{conklin2023compositionality}.}
\centering
\begin{subtable}{\textwidth}
\resizebox{\textwidth}{!}{
\begin{tabular}{lccccl}
\toprule
& & Synonymy $\downarrow$ & Homonymy $\downarrow$ & Freedom $\downarrow$ & Entanglement $\downarrow$ \\
\midrule
\multirow{2}{*}{{\textbf{Interaction}}} 
& Full message reconstruction, no IL (Baseline) & $0.563 \pm 0.026$ & $0.695 \pm 0.015$ & $0.569 \pm 0.026$ & $0.831 \pm 0.006$ \\
& Progressive Decoding (PD, ours) & $0.640 \pm 0.022$ & $0.737 \pm 0.013$ & $0.646 \pm 0.013$ & $0.786 \pm 0.007$ \\
\midrule
\multirow{4}{*}{{\textbf{Imitation}}}
& Message imitation (baseline) & $0.601 \pm 0.039$ & $0.714 \pm 0.027$ & $0.613 \pm 0.039$ & $0.822 \pm 0.010$ \\
& Full state imitation & $0.597 \pm 0.017$ & $0.734 \pm 0.021$ & $0.604 \pm 0.018$ & $0.824 \pm 0.007$ \\
& Final-State Imitation (FiSI, ours) & $0.633 \pm 0.038$ & $0.751 \pm 0.018$ & $0.640 \pm 0.038$ & $0.842 \pm 0.007$ \\
\midrule
\multirow{2}{*}{{\textbf{Regularization}}}
& PD+FiSI+KoLeo  $\lambda=1.5$ & $0.594 \pm 0.027$ & $0.739 \pm 0.013$ & $0.598 \pm 0.027$ & $0.805 \pm 0.007$ \\
& PD+FiSI+PDM (ours) $\lambda=1.5$ & $0.654 \pm 0.013$ & $0.753 \pm 0.008$ & $0.658 \pm 0.013$ & $0.780 \pm 0.008$ \\
\bottomrule
\end{tabular}
}
\caption{Shapes3D}
\end{subtable}

\begin{subtable}{\textwidth}
\resizebox{\textwidth}{!}{
\begin{tabular}{lccccl}
\toprule
& & Synonymy $\downarrow$ & Homonymy $\downarrow$ & Freedom $\downarrow$ & Entanglement $\downarrow$ \\
\midrule
\multirow{2}{*}{{\textbf{Interaction}}} 
& Full message reconstruction, no IL (Baseline) & $0.595 \pm 0.053$ & $0.597 \pm 0.061$ & $0.597 \pm 0.052$ & $0.852 \pm 0.021$ \\
& Progressive Decoding (PD, ours) & $0.484 \pm 0.044$ & $0.486 \pm 0.056$ & $0.486 \pm 0.044$ & $0.786 \pm 0.025$ \\
\midrule
\multirow{4}{*}{{\textbf{Imitation}}}
& Message imitation (baseline) & $0.367 \pm 0.034$ & $0.393 \pm 0.042$ & $0.369 \pm 0.034$ & $0.837 \pm 0.015$ \\
& Full state imitation & $0.362 \pm 0.045$ & $0.424 \pm 0.052$ & $0.364 \pm 0.044$ & $0.832 \pm 0.018$ \\
& Final-State Imitation (FiSI, ours) & $0.324 \pm 0.020$ & $0.393 \pm 0.018$ & $0.328 \pm 0.020$ & $0.830 \pm 0.008$ \\
\midrule
\multirow{2}{*}{{\textbf{Regularization}}}
& PD+FiSI+KoLeo  $\lambda=1.5$ & $0.365 \pm 0.011$ & $0.338 \pm 0.019$ & $0.367 \pm 0.011$ & $0.763 \pm 0.005$ \\
& PD+FiSI+PDM (ours) $\lambda=1.5$ & $0.347 \pm 0.012$ & $0.333 \pm 0.010$ & $0.349 \pm 0.013$ & $0.756 \pm 0.005$ \\
\bottomrule
\end{tabular}
}
\caption{MPI3D}
\end{subtable}
\label{tab:variation}
\end{table}
In addition to the metrics presented in the main text of this work, we attempted to measure other metrics for qualitative linguistic variation. We were not able to find many such metrics in the literature, and thus restrict to the ones presented in \cite{conklin2023compositionality}. Importantly, since this reference had a slightly different focus, variation in this context will not refer to the evolution of the language in time, but rather a departure from regularity, which masks an underlying compositional structure.

We study the four measures of variation presented in the paper, namely \textbf{synonymy}, \textbf{homonymy}, \textbf{word order freedom} and \textbf{entanglement}. \textbf{Synonymy} measures the presence of one-to-many mappings between atomic meanings and characters in a position, being minimized when each generating factor value is mapped to a single position and character. \textbf{Homonymy} measures the opposite, i.e., the presence of many-to-one mappings, and is minimized when each character in a position is mapped to a unique generating factor value. 
\textbf{Word order freedom} refers to the strictness of the mapping between generating factors and positions in the message (for example, if the mapping system always encodes the \textit{shape} generating factor at the first position). It is minimized if all single generating factors are encoded in the same message position. Similarly, \textbf{entanglement} is minimized when all factors are encoded into unique positions in the message.

Important formular metricas, pero no encaja tan bien. Unico paper que conocemos. No clear patterns, but we argue it is because...
An important assumption made by the authors for all these metrics is that meaning should be undivisably encoded into single positions of the message. We believe this scope of regularity is too narrow to capture compositionality-respecting mappings in out setting, as defined in \ref{def:compenc}, and does not fit well when the space of messages is greatly larger than the amount of possible states. 
We found no clear trend or similar behavior across both tested datasets, save a small decrease in \textbf{entanglement} when using \textbf{PD}. This may suggest that the increased pressure for efficiency forces individual generating factors to be uniquely distributed in the message, however we believe there is not sufficient evidence to make such a claim considering the previously discussed pitfalls of the metrics.

To our knowledge there are no available official implementations for these metrics. Our versions can be found in the official repository for this work.

\section{Permutation Tests \label{app:permutation-tests}}
\subsection{Emergent language metrics}
\label{app:emergent-permutation-tests}
To assess the statistical significance of each component added to the Iterated Learning (IL) framework proposed in this work, we conducted permutation tests using the \textit{SciPy} library~\cite{2020SciPy-NMeth}. The test statistic was defined as the difference in means, $\bar{A} - \bar{B}$, where $\bar{A}$ corresponds to the arithmetic mean of the evaluated mode and metric across 10 random seeds. The complete results are presented in Table~\ref{tab:permutation-tests}.

Across both the Shapes3D and MPI3D datasets, we found that the inclusion of the PD module consistently reduced the useful message length, yielding a test statistic of approximately $2.5$ and a $p$-value below $0.05$. None of the other additions introduced in this work produced a statistically significant decrease in useful length relative to their respective baselines, supporting our hypothesis that PD imposes an efficiency pressure on the emergent language. Moreover, we observed a statistically significant increase in TopSim for the Shapes3D dataset, with a test statistic of $0.026$ and $p < 0.05$.

Final-State Imitation had a statistically significant gain in TopSim over both baseline methods on MPI3D, with statistic of $\sim0.02$ and $p < 0.02$. PDM outperformed the KoLeo estimator in both reconstruction error and TopSim on Shapes3D, but achieved no statistically significant increase in MPI3D.

\begin{table}[]
\caption{Permutation tests for Shapes3D and MPI3D on emergent language metrics.}
\centering

\begin{subtable}{\textwidth}
\resizebox{\textwidth}{!}{
\begin{tabular}{l l l c c}
\toprule
\textbf{Metric} & \textbf{Mode A} & \textbf{Mode B} & \textbf{Statistic} & \textbf{p-Value} \\
\midrule
TopSim           & Progressive Decoding (PD; ours)    & Full message reconstruction; no IL (Baseline) & 0.0263    & \textbf{0.0233} \\
TopSim           & Full state imitation               & Final-State Imitation (FiSI; ours)            & -0.0267   & 0.1014 \\
TopSim           & Full state imitation               & Message imitation (Baseline)                  & -0.0018   & 0.9332 \\
TopSim           & Final-State Imitation (FiSI; ours) & Message imitation (Baseline)                  & 0.0249    & 0.2605 \\
TopSim           & PD+FiSI+KoLeo $\lambda=1.5$        & PD+FiSI+PDM $\lambda=1.5$                     & -0.0371   & \textbf{0.0004} \\
Useful Length  & Progressive Decoding (PD; ours)    & Full message reconstruction; no IL (Baseline) & -2.5      & \textbf{0.0031} \\
Useful Length  & Full state imitation               & Final-State Imitation (FiSI; ours)            & 0.0       & 1.0 \\
Useful Length  & Full state imitation               & Message imitation (Baseline)                  & 0.0       & 1.0 \\
Useful Length  & Final-State Imitation (FiSI; ours) & Message imitation (Baseline)                  & 0.0       & 1.0 \\
Useful Length  & PD+FiSI+KoLeo $\lambda=1.5$        & PD+FiSI+PDM $\lambda=1.5$                     & 1.3000    & 0.0996 \\
Final MSE & Progressive Decoding (PD; ours)    & Full message reconstruction; no IL (Baseline) & -0.0218   & 0.0657 \\
Final MSE & Full state imitation               & Final-State Imitation (FiSI; ours)            & -0.0130   & 0.4871 \\
Final MSE & Full state imitation               & Message imitation (Baseline)                  & -0.0201   & 0.5679 \\
Final MSE & Final-State Imitation (FiSI; ours) & Message imitation (Baseline)                  & -0.0071   & 0.8384 \\
Final MSE & PD+FiSI+KoLeo $\lambda=1.5$        & PD+FiSI+PDM $\lambda=1.5$                     & 0.0343    & \textbf{0.0431} \\
\bottomrule
\end{tabular}
}
\caption{Shapes3D permutation test}
\end{subtable}

\begin{subtable}{\textwidth}
\resizebox{\textwidth}{!}{
\begin{tabular}{l l l c c}
\toprule
\textbf{Metric} & \textbf{Mode A} & \textbf{Mode B} & \textbf{Statistic} & \textbf{p-Value} \\
\midrule
TopSim           & Progressive Decoding (PD; ours)    & Full message reconstruction; no IL (Baseline) & 0.0037    & 0.4416 \\
TopSim           & Full state imitation               & Final-State Imitation (FiSI; ours)            & -0.0193   & \textbf{0.0019} \\
TopSim           & Full state imitation               & Message imitation (Baseline)                  & 0.0047    & 0.4395 \\
TopSim           & Final-State Imitation (FiSI; ours) & Message imitation (Baseline)                  & 0.0240    & \textbf{0.0007} \\
TopSim           & PD+FiSI+KoLeo $\lambda=1.5$        & PD+FiSI+PDM $\lambda=1.5$                     & -0.0060   & 0.4767 \\
Useful Length  & Progressive Decoding (PD; ours)    & Full message reconstruction; no IL (Baseline) & -2.7000   & \textbf{0.0345} \\
Useful Length  & Full state imitation               & Final-State Imitation (FiSI; ours)            & 0.0231    & 1.0 \\
Useful Length  & Full state imitation               & Message imitation (Baseline)                  & 0.2231    & 0.3998 \\
Useful Length  & Final-State Imitation (FiSI; ours) & Message imitation (Baseline)                  & 0.2000    & 0.5820 \\
Useful Length  & PD+FiSI+KoLeo $\lambda=1.5$        & PD+FiSI+PDM $\lambda=1.5$                     & 0.2222    & 1.0 \\
Final MSE & Progressive Decoding (PD; ours)    & Full message reconstruction; no IL (Baseline) & 0.0011    & 0.5894 \\
Final MSE & Full state imitation               & Final-State Imitation (FiSI; ours)            & -0.0018   & 0.1166 \\
Final MSE & Full state imitation               & Message imitation (Baseline)                  & 0.0019    & 0.2870 \\
Final MSE & Final-State Imitation (FiSI; ours) & Message imitation (Baseline)                  & 0.0037    & \textbf{0.0201} \\
Final MSE & PD+FiSI+KoLeo $\lambda=1.5$        & PD+FiSI+PDM $\lambda=1.5$                     & -0.000012 & 0.9626 \\
\bottomrule
\end{tabular}
}
\caption{MPI3D permutation test}
\end{subtable}
\label{tab:permutation-tests}
\end{table}

\subsection{Disentanglement permutation tests}
\label{app:disen-permutation-tests}
In this section we evaluate the statistical significance of the differences between our model and the discrete baseline on the metrics used in table \ref{tab:disentanglement}. Similar to the previous section, we conducted permutation tests, using the difference in means as a statistic for 10 random seeds. The complete results are presented in Table~\ref{tab:disen-permutation-tests}.

We found a significant difference in accuracy on the Shapes3D dataset, with a statistic of 0.114 and $p<0.01$. We found a significant descrease in RMSE on both datasets, with statistics of $0.227$ and $1.151$ for Shapes3D and MPI3D, respectively, and $p \leq 0.001$. We also observed a significant gain in disentanglement on both datasets, with statistics of $0.003$ and $0.023$ for Shapes3D and MPI3D, respectively, and $p < 0.05$ for both.

\begin{table}[h!]
\caption{Permutation tests for Shapes3D and MPI3D on disentanglement, accuracy and RMSE using messages from discrete models as data for a linnear probe.}
\centering
\label{tab:disen-permutation-tests}

\begin{subtable}{\textwidth}
\resizebox{\textwidth}{!}{
\begin{tabular}{l l l c c}
\toprule
\textbf{Metric} & \textbf{Mode A} & \textbf{Mode B} & \textbf{Diff(A-B)} & \textbf{p-Value} \\
\midrule
Accuracy & VAE+EL  & VAE+CELEBI  & -0.1139 & \textbf{0.0018} \\
RMSE      & VAE+EL  & VAE+CELEBI  & 0.2266  & \textbf{0.0010} \\
DCI Disentanglement  & VAE+EL  & VAE+CELEBI  & -0.0034 & \textbf{0.0042} \\
\bottomrule
\end{tabular}
}
\caption{Shapes3D permutation test}
\end{subtable}

\vspace{1em}

\begin{subtable}{\textwidth}
\resizebox{\textwidth}{!}{
\begin{tabular}{l l l c c}
\toprule
\textbf{Metric} & \textbf{Mode A} & \textbf{Mode B} & \textbf{Diff(A-B)} & \textbf{p-Value} \\
\midrule
Accuracy & VAE+EL  & VAE+CELEBI  & -0.0128 & 0.0570 \\
RMSE      & VAE+EL  & VAE+CELEBI  & 1.1512  & \textbf{0.0010} \\
DCI Disentanglement  & VAE+EL  & VAE+CELEBI  & -0.0230 & \textbf{0.0002} \\
\bottomrule
\end{tabular}
}
\caption{MPI3D permutation test}
\end{subtable}

\label{tab:permutation-tests}
\end{table}

\end{document}